\renewcommand{\algorithmicrequire}{\textbf{Input:}}
\renewcommand{\algorithmicensure}{\textbf{Output:}}
\newcommand{\argmin}{\operatornamewithlimits{argmin}}
\newcommand{\Ob}{\mathbf{O}}
\newcommand{\Sb}{\mathbf{S}}
\newcommand{\Acal}{\mathcal{A}}
\newcommand{\Fcal}{\mathcal{F}}
\newcommand{\Gcal}{\mathcal{G}}
\newcommand{\Hcal}{\mathcal{H}}
\newcommand{\Ocal}{\mathcal{O}}
\newcommand{\Pcal}{\mathcal{P}}
\newcommand{\Qcal}{\mathcal{Q}}
\newcommand{\Rcal}{\mathcal{R}}
\newcommand{\Scal}{{\mathcal{S}}}
\newcommand{\EE}{\mathbb{E}}
\newcommand{\bigmid}{\,\Big|\,\xspace}
\newcommand{\wrt}{\text{w.r.t.}\xspace}
\newcommand{\ie}{\emph{i.e.}\xspace}
\newcommand{\eg}{\emph{e.g.}\xspace}
\newcommand{\st}{\emph{s.t.}\xspace}
\newcommand{\indep}{\perp \!\!\! \perp\xspace}
\newcommand{\indicator}[1]{\mathds{1}\{#1\}\xspace}
\newcommand{\e}{\mathbb{E}\xspace}
\newcommand{\pe}{\mathcal{P}_e\xspace}
\newcommand{\epe}{\mathbb{E}_{\pe}\xspace}
\newcommand{\pb}{\mathcal{P}_b\xspace}
\newcommand{\pind}{\mathcal{P}_\text{ind}\xspace}
\newcommand{\eb}{\mathbb{E}_{\pb}\xspace}
\newcommand{\ps}[1]{\mathcal{P}^*_{#1} \xspace}
\newcommand{\pd}[1]{P^*_{#1} \xspace}
\newcommand{\es}[1]{\mathbb{E}^*_{#1} \xspace}
\newcommand{\psidr}{\psi_{\text{DR}}\xspace}
\newcommand{\psiis}{\psi_{\text{IS}}\xspace}
\newcommand{\psireg}{\psi_{\text{Reg}}\xspace}
\newcommand{\psieff}{\psi_{\text{eff}}\xspace}
\newcommand{\psind}[1]{\mathcal{P}^*_{\text{ind},#1}\xspace}
\newcommand{\mpci}{\mathcal M_{\textup{PCI}}\xspace}
\newcommand{\mt}[1]{\mathcal M_e^{(#1)}\xspace}
\newcommand{\fbounded}{\mathcal F_{\textup{bounded}\xspace}}
\newcommand{\noisyobs}{\textsc{NoisyObs}\xspace}
\newcommand{\epsnoise}{\epsilon_{\textup{noise}}\xspace}
\newcommand{\pinoisyobs}{\pi_b^{\noisyobs}}
\newcommand{\pieasy}{\pi_e^{\textup{easy}}}
\newcommand{\pihard}{\pi_e^{\textup{hard}}}
\newcommand{\pioptim}{\pi_e^{\textup{optim}}}
\newcommand{\sset}{\mathcal S\xspace}
\newcommand{\oset}{\mathcal O\xspace}
\newcommand{\aset}{\mathcal A\xspace}
\Crefname{assumption}{Assumption}{Assumptions}
\crefname{algocf}{algorithm}{algorithms}
\Crefname{algocf}{Algorithm}{Algorithms}
\newtheorem{theorem}{Theorem}
\newtheorem{corollary}[theorem]{Corollary}
\newtheorem{lemma}{Lemma}
\newtheorem{assumption}{Assumption}
\newtheorem{definition}{Definition}
\newtheorem{proposition}{Proposition}
\begin{document}

\title{Proximal Reinforcement Learning: Efficient Off-Policy Evaluation in Partially Observed Markov Decision Processes}

\author{Andrew Bennett and Nathan Kallus}
\date{}

\maketitle

\begin{abstract}
In applications of offline reinforcement learning to observational data, such as in healthcare or education, a general concern is that observed actions might be affected by unobserved factors, inducing confounding and biasing estimates derived under the assumption of a perfect Markov decision process (MDP) model. Here we tackle this by considering off-policy evaluation in an partially observed MDP (POMDP). Specifically, we consider estimating the value of a given target policy in an unknown POMDP given observations of trajectories with only partial state observations and generated by a different and unknown policy that may depend on the unobserved state. We tackle two questions: what conditions allow us to identify the target policy value from the observed data and, given identification, how to best estimate it. To answer these, we extend the framework of proximal causal inference to our POMDP setting, providing a variety of settings where identification is made possible by the existence of so-called bridge functions. We term the resulting framework proximal reinforcement learning (PRL). We then show how to construct estimators in these settings and prove they are semiparametrically efficient. We demonstrate the benefits of PRL in an extensive simulation study and on the problem of sepsis management.
\end{abstract}

\section{Introduction}
\label{sec:intro}

An important problem in reinforcement learning (RL) is off policy evaluation (OPE), which is defined as estimating the average reward generated by a target \emph{evaluation} policy, given observations of data generated by running some different \emph{behavior} policy. This problem is particularly important in many application areas such as healthcare, education, or robotics, where experimenting with new policies may be expensive, impractical, or unethical. In such applications OPE may be used in order to estimate the benefit of proposed policy changes by decision makers, or as a building block for the related problem of policy optimization. At the same time, in the same applications, unobservables can make this task difficult due to the lack of experimentation.

As an example, consider the problem of evaluating a newly proposed policy for assigning personalized curricula to students semester by semester, where the curriculum assignment each semester is decided based on observed student covariates, such as course outcomes and aptitude tests, with the goal of maximizing student outcomes as measured, \emph{e.g.}, by standardized test scores. Since it may be unethical to experiment with potentially detrimental curriculum plans, we may wish to evaluate such policies based on passively collected data where the targeted curriculum was decided by teachers.
However, there may be factors unobserved in the data that jointly influence the observed student covariates, curriculum assignments, and student outcomes; this may arise for example because the teacher can perceive subjective aspects of the students' personalities or aptitudes and take these into account in their decisions. 
While such confounding breaks the usual Markovian assumptions that underlie standard approaches to OPE,
the process may well be modeled by a partially observed Markov decision process (POMDP). Two key questions for OPE in POMDPs are: when is policy value still identifiable despite confounding due to partial observation and, when it is, how can we estimate it most efficiently.

In this paper we tackle these two questions, expanding the range of settings that enable identification and providing efficient estimators in these settings.
First, we extend an existing identification result for OPE in tabular POMDPs \citep{tennenholtz20off} to the continuous setting, which provides some novel insight on this existing approach but also highlights its limitations. To break these limitations, motivated by these insights, we provide a new general identification result based on extending the proximal causal inference framework \citep{miao2018identifying,cui2020semiparametric,kallus2021causal} to the dynamic, longitudinal setting.
This permits identification in more general settings.
And, unlike the previous results, this one expresses the value of the evaluation policy as the mean of some score function under the distribution over trajectories induced by the logging policy, which allows for natural estimators with good qualities.
In particular, we prove appropriate conditions under which the estimators arising from this result are consistent, asymptotically normal, and semiparametrically efficient. In addition, we provide a tractable algorithm for computing the nuisance functions that allow such estimators to be computed, based on recent state-of-the-art methods for solving conditional moment problems.
We term this framework proximal reinforcement learning (PRL), highlighting the connection to proximal causal inference.
We finally provide a series of experiments, on both a synthetic toy scenario and a complex scenario based on a sepsis simulator, which empirically validate our theoretical results and demonstrate the benefits of PRL.

\section{Related Work}

First, there is an extensive line of recent work on OPE under unmeasured confounding. This line of work considers many different forms of confounding, including confounding that is i.i.d. at each time step \citep{wang2021provably,bennett2021off,liao2021instrumental}, occurs only at a single time step \citep{namkoong2020off}, satisfies a ``memorylessness'' property \citep{kallus2020confounding}, follows a POMDP structure \citep{tennenholtz20off,nair2021spectral,oberst2019counterfactual,killian2022counterfactually}, may take an arbitrary form \citep{chen2021estimating,chandak2021universal}, or is in fact not a confounder \citep{hu2021off}. These works have varying foci: \citet{namkoong2020off,kallus2020confounding,chen2021estimating} focus on computing intervals comprising the partial identification set of all hypothetical policy values consistent with the data and their assumptions; \citet{oberst2019counterfactual,killian2022counterfactually} focus on sampling counterfactual trajectories under the evaluation policy given that the POMDP follows a particular Gumbel-softmax structure; \citet{wang2021provably,gasse2021causal} focus on using the offline data to warm start online reinforcement learning; \citet{liao2021instrumental} study OPE using instrumental variables; 
\citet{chandak2021universal} show that OPE can be performed under very general confounding if the behavior policy probabilities of the logged actions are known; 
\citet{hu2021off} consider hidden states that do not affect the behavior policy and are therefore not confounders but do make OPE harder by breaking Markovianity thereby inducing a curse of horizon;
and \citet{tennenholtz20off,nair2021spectral} study conditions under which the policy value under the POMDP model is identified. 

Of the past work on OPE under unmeasured confounding, \citet{tennenholtz20off,nair2021spectral} are closest to ours, since they too consider a general POMDP model of confounding, namely without restrictions that preserve Markovianity via i.i.d. confounders, knowing the confounder-dependent propensities, having unconfounded logged actions, or using a specific Gumbel-softmax form. \citet{tennenholtz20off} consider a particular class of tabular POMDPs satisfying some rank constraints, and \citet{nair2021spectral} extend these results and slightly relax its assumptions. However, both do not consider how to actually construct OPE estimators based on their identification results that satisfy desirable properties such as consistency or asymptotic normality, and they can only be applied to tabular POMDPs. This work presents a novel and general identification result and proposes a class of resulting OPE estimators that possesses such desirable properties.

Another area of relevant literature is on proximal causal inference (PCI). PCI was first proposed by \citet{miao2018identifying}, showing that using two conditionally independent proxies of the confounder (known as a negative control outcome and a negative control action) we can learn an outcome bridge function that generalizes the standard mean-outcome function and controls for the confounding effects. Since then this work has been expanded, including by alternatively using an action bridge function which instead generalizes the inverse propensity score \citep{miao2018confounding}, allowing for multiple fixed treatments \citep{tchetgen2020introduction}, performing multiply-robust treatment effect estimation \citep{shi2020multiply}, combining outcome and action bridge functions for semiparametrically efficient estimation \citep{cui2020semiparametric}, using PCI to estimate the value of contextual-bandit policies \citep{xu2021deep} or generalized treatment effects \citep{kallus2021causal}, or estimating bridge functions using adversarial machine learning \citep{kallus2021causal,ghassami2022minimax}. In addition, the OPE for POMDP methodologies of \citet{tennenholtz20off,nair2021spectral} discussed above were said to be motivated by PCI.
This work relates to this body of work as it proposes a new way of performing OPE for POMDPs using PCI, and it also proposes a new adversarial machine learning-based approach for estimating the bridge functions.

At the intersection of work of OPE and PCI is the concurrent work of \citet{ying2021proximal}, which considers PCI in multi time step scenarios, given two proxies at each time step similar to what we consider in \cref{sec:ident-pci}. Unlike us they only consider the problem of estimating treatment effects for fixed vectors of treatment at each time step, optionally conditional on observable context at $t=1$, as opposed to evaluating policies that can adaptively treat based on the context available so far.

Finally, there is an extensive body of work on learning policies for POMDPs using online learning. For example, see \citet{azizzadenesheli2016reinforcement}, \citet{katt2017learning}, \citet{bhattacharya2020reinforcement},\citet{yang2021causal}, \citet{singh2021structured}, and references therein. 
This work is distinct in that we consider an offline setting where identification is an issue.
At the same time, this work is related to the online setting in that it could potentially be used to augment and warm start such approaches if there is also offline observed data available.

\section{Problem Setting}

A POMDP
is formally defined by a tuple $(\mathcal S, \mathcal A, \mathcal O, H, P_O, P_R, P_T)$, where $\mathcal S$ denotes a state space, $\mathcal A$ denotes a finite action space, $\mathcal O$ denotes an observation space, $H \in \mathbb N$ denotes a time horizon, $P_O$ is an observation kernel, with $P_O^{(t)}(\cdot \mid s)$ denoting the density of the observation $O_t$ given the state $S_t=s$ at time $t$, $P_R$ is a reward kernel, with $P_R^{(t)}(\cdot \mid s,a)$ denoting the density of the (bounded) reward $R_t\in[-R_{\max},R_{\max}]$ given the state $S_t=s$ and action $A_t=a$ at time $t$, and $P_T$ is a transition kernel, with $P_T^{(t)}(\cdot \mid s, a)$ denoting the density of the next $S_{t+1}$ given the state $S_t=s$ and action $A_t=a$ at time $t$. Note that we allow for the POMDP to be time inhomogeneous; that is, we allow the outcome, reward, and transition kernels to potentially depend on the time index. Finally, we let $O_0$ denote some prior observation of the state before $t=1$ (which may be empty), and we let $\tau^{\textup{full}}_t$ and $\tau_t$ denote the true and observed trajectories up to time $t$ respectively, which we define as
\begin{align*}
    \tau_0 &= \tau^{\textup{full}}_0 = O_0 \\
    \tau_t &= (O_0, (O_1,A_1,R_1), (O_2,A_2,R_t), \ldots, (O_t,A_t,R_t))  \\
    \tau^{\textup{full}}_t &= (O_0, (S_1,O_1,A_1,R_1), (S_2,O_2,A_2,R_t), \ldots, (S_t,O_t,A_t,R_t))  \,.
\end{align*}

Let $\pi_b$ be some given randomized \emph{logging policy}, which is characterized by a sequence of functions $\pi_b^{(1)},\ldots,\pi_b^{(H)}$, where $\pi_b^{(t)}(a \mid S_t)$ denotes the probability that the logging policy takes action $a \in \mathcal A$ at time $t$ given state $S_t$. The logging policy together with the POMDP define a joint distribution over the (true) trajectory $\tau_H^{\textup{full}}$ given by acting according to $\pi_b$; let $\pb$ denote this distribution.
All probabilities and expectations in the ensuing will be with respect to $\pb$ unless otherwise specified, \emph{e.g.}, by a subscript.

Our data consists of observed trajectories generated by the logging policy: $\mathcal D = \{\tau_H^{(1)},\tau_H^{(2)},\ldots,\tau_H^{(n)}\}$, where each $\tau_H^{(i)}$ is an i.i.d. sample of $\tau_H$ (which does not contain $S_t$), distributed according to $\pb$. Importantly, we emphasize that, although we assume that states are unobserved by the decision maker and are not included in the logged data $\mathcal D$, the logging policy still uses these hidden states, inducing confounding.

Implicit in our notation $\pi_b^{(t)}(a \mid S_t)$ is that the logging policy actions are independent of the past given current state $S_t$. Similarly, the POMDP model is characterized by similar independence assumption with respect to observation and reward emissions, and state transitions. This means that $\pb$ satisfies a Markovian assumption with respect to $S_t$; however, as $S_t$ is unobserved we cannot condition on it and break the past from the future. We visualize the directed acyclic graph (DAG) representing $\pb$ in \cref{fig:pomdp-logging}. In particular,
we have the following conditional independencies in $\pb$: for every $t$,
\begin{align*}
    &O_t \indep \tau^{\textup{full}}_{t-1} \mid S_t \\
    &R_t \indep \tau^{\textup{full}}_{t-1}, O_t \mid S_t, A_t \\
    &S_{t+1} \indep \tau^{\textup{full}}_{t-1}, O_t, R_t \mid S_t, A_t  \\
    &A_t \indep \tau^{\textup{full}}_{t-1} \mid S_t \,.
\end{align*}

Now, let $\pi_e$ be some deterministic \emph{target policy} that we wish to evaluate, which is characterized by a sequence of functions $\pi_e^{(1)},\ldots,\pi_e^{(H)}$, where $\pi_e^{(t)}(O_t,\tau_{t-1}) \in \mathcal A$ denotes the action taken by policy $\pi_e$ at time $t$ given current observation $O_t$ and the past observable trajectory $\tau_{t-1}$.
We visualize the POMDP model under such a policy that only depends on observable data in \cref{fig:pomdp-intervention}. Note that we allow $\pi_e^{(t)}$ to potentially depend on all observable data up to time $t$; this is because the Markovian assumption \emph{does not} hold with respect to the observations $O_t$, so we may wish to consider policies that use all past observable information to best account for the unobserved state. We let $\pe$ denote the distribution over trajectories that would be obtained by following policy $\pi_e$ in the POMDP. Then, given some discounting factor $\gamma \in (0, 1]$, we define the \emph{value} of policy $\pi_e$ as
\begin{equation*}
    v_\gamma(\pi_e) = \sum_{t=1}^H \gamma^{t-1} \epe[R_t] \,,
\end{equation*}
The task OPE under the POMDP model is to estimate $v_\gamma(\pi_e)$ (a function of $\pe$) given $\mathcal D$ (drawn from $\pb$). 

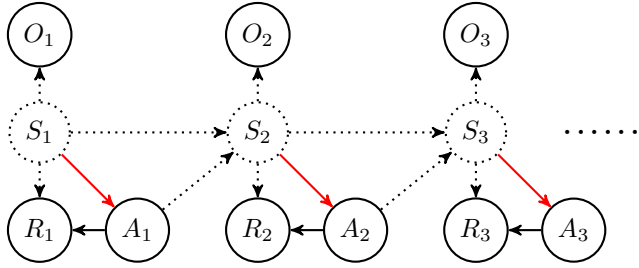
\begin{figure}
\centering
\begin{tikzpicture}[->,>=stealth',auto,node distance=3cm,
  thick,mainnode/.style={circle,draw},dottednode/.style={circle,draw,dotted},noborder/.style={{circle,inner sep=0,outer sep=0}}]
  \node[dottednode] (s1) at (0,0) {$S_1$};
  \node[mainnode] (a1) at (1.3,-1.3) {$A_1$};
  \node[mainnode] (r1) at (0,-1.3) {$R_1$};
  \node[mainnode] (o1) at (0,1.3) {$O_1$};
  \node[dottednode] (s2) at (2.9,0) {$S_2$};
  \node[mainnode] (a2) at (4.2,-1.3) {$A_2$};
  \node[mainnode] (r2) at (2.9,-1.3) {$R_2$};
  \node[mainnode] (o2) at (2.9,1.3) {$O_2$};
  \node[dottednode] (s3) at (5.8,0) {$S_3$};
  \node[mainnode] (a3) at (7.1,-1.3) {$A_3$};
  \node[mainnode] (r3) at (5.8,-1.3) {$R_3$};
  \node[mainnode] (o3) at (5.8,1.3) {$O_3$};
  \node[noborder] (elipses) at (7.5,0) {\textbf{\ldots\ldots}};
  \path[every node/.style={font=\sffamily\tiny}]
    (s1) edge[dotted] node [right] {} (o1)
    (s1) edge[red] node [right] {} (a1)
    (s1) edge[dotted] node [right] {} (r1)
    (a1) edge node [right] {} (r1)
    (s1) edge[dotted] node [right] {} (s2)
    (a1) edge[dotted] node [right] {} (s2)
    (s2) edge[dotted] node [right] {} (o2)
    (s2) edge[red] node [right] {} (a2)
    (s2) edge[dotted] node [right] {} (r2)
    (a2) edge node [right] {} (r2)
    (s2) edge[dotted] node [right] {} (s3)
    (a2) edge[dotted] node [right] {} (s3)
    (s3) edge[dotted] node [right] {} (o3)
    (s3) edge[red] node [right] {} (a3)
    (s3) edge[dotted] node [right] {} (r3)
    (a3) edge node [right] {} (r3);
\end{tikzpicture}
\caption{Graphical representation of the POMDP model under the logging policy $\pi_b$. The red arrows make explicit the dependence of $\pi_b$ on the hidden state. Dashed circles denote variables unobserved in our data.}
\label{fig:pomdp-logging}
\end{figure}

\begin{figure}
\centering
\begin{tikzpicture}[->,>=stealth',auto,node distance=3cm,
  thick,mainnode/.style={circle,draw},bluenode/.style={circle,draw,thin,color=blue},dottednode/.style={circle,draw,dotted},noborder/.style={{circle,inner sep=0,outer sep=0}}]
  \node[dottednode] (s1) at (0,0) {$S_1$};
  \node[mainnode] (a1) at (1.3,-1.3) {$A_1$};
  \node[mainnode] (r1) at (0,-1.3) {$R_1$};
  \node[mainnode] (o1) at (0,1.3) {$O_1$};
  \node[dottednode] (s2) at (2.9,0) {$S_2$};
  \node[mainnode] (a2) at (4.2,-1.3) {$A_2$};
  \node[mainnode] (r2) at (2.9,-1.3) {$R_2$};
  \node[mainnode] (o2) at (2.9,1.3) {$O_2$};
  \node[dottednode] (s3) at (5.8,0) {$S_3$};
  \node[mainnode] (a3) at (7.1,-1.3) {$A_3$};
  \node[mainnode] (r3) at (5.8,-1.3) {$R_3$};
  \node[mainnode] (o3) at (5.8,1.3) {$O_3$};
  \node[noborder] (elipses) at (8.5,0.5) {\textbf{\ldots\ldots}};
  \node[bluenode] (h0) at (-1.6,2.3) {$\tau_0$};
  \node[bluenode] (h1) at (1.3,2.3) {$\tau_1$};
  \node[bluenode] (h2) at (4.2,2.3) {$\tau_2$};
  \node[bluenode] (h3) at (7.1,2.3) {$\tau_3$};
  \path[every node/.style={font=\sffamily\tiny}]
    (h0) edge[thin, blue] node [right] {} (h1)
    (o1) edge[thin, blue] node [right] {} (h1)
    (r1) edge[thin, blue] node [right] {} (h1)
    (a1) edge[thin, blue] node [right] {} (h1)
    (h1) edge[thin, blue] node [right] {} (h2)
    (o2) edge[thin, blue] node [right] {} (h2)
    (r2) edge[thin, blue] node [right] {} (h2)
    (a2) edge[thin, blue] node [right] {} (h2)
    (h2) edge[thin, blue] node [right] {} (h3)
    (o3) edge[thin, blue] node [right] {} (h3)
    (r3) edge[thin, blue] node [right] {} (h3)
    (a3) edge[thin, blue] node [right] {} (h3)
    (o1) edge[red] node [right] {} (a1)
    (o2) edge[red] node [right] {} (a2)
    (h1) edge[red] node [right] {} (a2)
    (o3) edge[red] node [right] {} (a3)
    (h2) edge[red] node [right] {} (a3)
    (s1) edge[dotted] node [right] {} (o1)
    (s1) edge[dotted] node [right] {} (r1)
    (a1) edge node [right] {} (r1)
    (s1) edge[dotted] node [right] {} (s2)
    (a1) edge[dotted] node [right] {} (s2)
    (s2) edge[dotted] node [right] {} (o2)
    (s2) edge[dotted] node [right] {} (r2)
    (a2) edge node [right] {} (r2)
    (s2) edge[dotted] node [right] {} (s3)
    (a2) edge[dotted] node [right] {} (s3)
    (s3) edge[dotted] node [right] {} (o3)
    (s3) edge[dotted] node [right] {} (r3)
    (a3) edge node [right] {} (r3);
\end{tikzpicture}
\caption{Graphical representation of the POMDP model under the evaluation policy $\pi_e$. The red arrows make explicit the dependence of $\pi_e$ on the current observation and previous observable trajectory, and the blue nodes and arrows make explicit the dependence of the observable trajectories on the data.}
\label{fig:pomdp-intervention}
\end{figure}
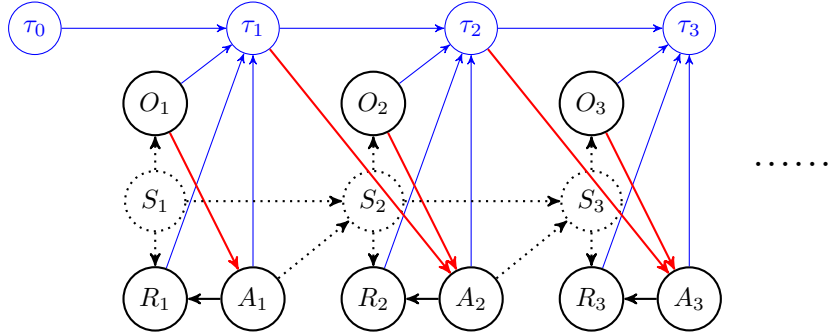

\section{Identification Theory}

Before considering how to actually estimate $v_\gamma(\pi_e)$, we first consider the problem of \emph{identification}, which is the problem of finding some function $\psi$ such that $v_\gamma(\pi_e) = \psi(\pb)$, and is a prerequisite for identificaiton.
This is the first stepping stone because $\pb$ is the most we could hope to ever learn from observing $\mathcal D$.
If such a $\psi$ exists, then we say that $v_\gamma(\pi_e)$ is \emph{identified} with respect to $\pb$. In general, such an identification result is impossible for the OPE problem given unobserved confounding as introduced by our POMDP model. Therefore, we must impose some assumptions on $\pb$ for such identification to be possible.

To the best of our knowledge, the only existing identification result of this kind was presented by \citet{tennenholtz20off} (with a slight generalization given by \citealp{nair2021spectral}), and is only valid in tabular settings where states and observations are discrete. We will proceed first by extending this approach to more general, non-tabular settings.
However, we will note that there are some restrictive limitations to estimation based on this approach.
So, motivated by the limitations, we develop a new and more general identification theory which extends the PCI approach to the sequential setting and easily enables efficient estimation.

\subsection{Identification by Time-Independent Sampling and Its Limitations}
\label{sec:ident-ind}

For our generalization of \citet{tennenholtz20off}, we will consider evaluating policies $\pi_e$ such that $\pi_e^{(t)}(O_t,\tau_t)$ only depends on $O_{1:t}$ and $A_{1:t-1}$; that is, $\pi_e^{(t)}$ can depend on all observed data available at time $t$ except for $O_0$ and past rewards.
First, for each $t \in \{1,\ldots,H\}$, let $D_t = (O_{t-1},O_t,O_{t+1},A_t,R_t)$, and for any such tuple $D = (O, O', O'', A, R)$ define $o(D) = O$, $o'(D) = O'$, $o''(D) = O''$, $a(D) = A$, and $r(D) = R$. In addition, define the shorthand $\pi_e^{(t)}(D_{1:t}) = \pi_e^{(t)}(o'(D_t),\ldots,o'(D_1), a(D_{t-1}),\ldots,a(D_1))$.
Furthermore, let $\pind$ denote the measure on $D_{1:H}$ in which each tuple $D_t$ is sampled \emph{independently} according to its marginal distribution in $\pb$. Note that under this measure the overlapping observations between these tuples (\emph{e.g.} $o'(D_t)$ and $o(D_{t+1})$) may take different values.
Then, given these definitions, we have the following result.

\begin{theorem}
\label{thm:tennenholtz}
Under some regularity conditions detailed in \cref{apx:tennenholtz}, there exist functions $\rho^{(t)}$ defined by conditional moment restrictions under $\pb$, such that for every $t \in \{1,\ldots,H]\}$ we have
\begin{equation*}
    \epe[R_t] = \e_{\pind}\left[ r(D_t) \prod_{s=1}^t \indicator{a(D_s)=\pi^{(t)}(D_{1:s})} \rho^{(s)} \Big( o(D_s), a(D_s), o''(D_{s-1}) \Big) \right] \,.
\end{equation*}
Furthermore, under the conditions of \citet[Theorem 1]{tennenholtz20off}, these regularity conditions are satisfied, and the above is identical to their identification quantity.
\end{theorem}

Since $\pind$ is a function of $\pb$, and $v_\gamma(\pi_e)$ is a function of $\epe[R_1],\ldots,\epe[R_H]$, \cref{thm:tennenholtz} gives a valid identification quantity for $v_\gamma(\pi_e)$.
The full details of the regularity conditions and nuisance functions governing this result are not very important to this paper, so they are deferred along with the proof of this theorem to \cref{apx:tennenholtz}.
For our purposes, the main takeaway of \cref{thm:tennenholtz} is that there exists a natural generalization of \citet[Theorem 1]{tennenholtz20off} to non-discrete settings; while that result was originally expressed as a sum over all possible observable trajectories, we show that it can instead be expressed as the expectation of a simple, estimable quantity whose existence does not depend on discreteness.
Unfortunately, the expectation that naturally arises is under $\pind$ rather than $\pb$.
This means that empirical approximations of this expectation given $n$ i.i.d. samples from $\pb$ would require averaging over $n^s$ terms, introducing a curse of dimension.
Furthermore, this expectation clearly does not have many of the desirable properties for OPE estimating equations held by many OPE estimators in the simpler MDP setting, such as Neyman orthogonality \citep{kallus2020double,kallus2022efficiently}.

\subsection{Identification by Proximal Causal Inference}
\label{sec:ident-pci}

We now discuss an alternative way of obtaining identifiability, via a reduction to a nested sequence of proximal causal inference (PCI) problems of the kind described by \citet{cui2020semiparametric}. These authors considered identifying the average treatment effect (ATE), and other related causal estimands, for binary decision making problems with unmeasured confounding given two independent proxies for the confounders, one of which is conditionally independent from treatments given confounders, and the other of which is independent from outcomes given treatment and confounders. We will in fact leverage the refinement of the PCI approach by \citet{kallus2021causal}, which has strictly weaker assumptions than \citet{cui2020semiparametric}.

Our reduction works by defining random variables $Z_t$ and $W_t$ for each $t \in [H]$ that are measurable w.r.t. the observed trajectory $\tau_H$, as well as defining random variables $U_t$ for each $t \in [H]$ such that $S_t$ is measurable w.r.t. $U_t$. We respectively refer to $Z_t$ and $W_t$ as \emph{negative control actions} and \emph{negative control outcomes}, and we refer to $U_t$ as \emph{confounders}. All triplets $(Z_t,W_t,U_t)$ must be satisfy certain independence properties outlined below. Any definition of such variables that satisfy these independence properties is considered a valid PCI reduction, and we will have various examples of valid PCI reductions for our POMDP model at the end of this section.

To formalize these assumptions, we must first define some additional notation. Let $\ps{t}$ denote the measure on trajectories induced by running policy $\pi_e$ for the first $t-1$ actions, and running policy $\pi_b$ henceforth. Note that according to this definition, $\pb=\ps{1}$, and $\pe=\ps{H+1}$. In addition, let $\es{t}$ and $\pd{t}$ be shorthand for expectation and probability mass under $\ps{t}$ respectively.
We visualize these intervention distributions in the first part of \cref{fig:pt-distribution}.

Next, for each $t \in \{1,\ldots,H\}$ we define $E_t = \pi_e^{(t)}(O_t,\tau_{t-1})$, and $D_t = (Z_t,W_t,A_t,E_t,R_t)$. In addition, we will refer to any random variable $Y_t$ as an \emph{outcome variable at time $t$} if it is measurable w.r.t. $(R_t,D_{t+1:H})$. For any such variable and $a \in \aset$, we use $Y_t(a)$ to denote a random variable with the same distribution that $Y_t$ would have if, possibly counter to fact, action $a$ were taken at time $t$ instead of $A_t$. We note that under $\ps{t}$, we can interpret $Y_t(a)$ as the outcome that would be obtained by applying $\pi_e$ for the first $t-1$ actions, the fixed action $a$ at time $t$, and then $\pi_b$ henceforth (as opposed to the factual outcome $Y_t$ obtained by applying $\pi_e$ for the first $t-1$ actions and $\pi_b$ henceforth). We also note that according to this notation $Y_t(A_t) = Y_t$ always.

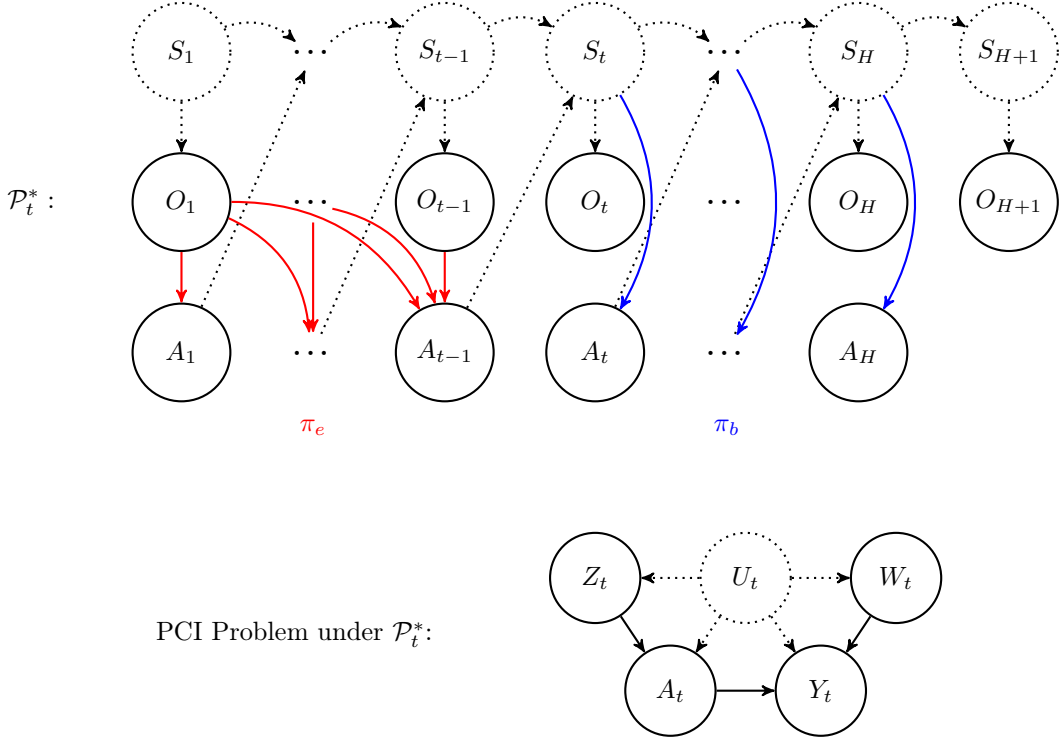
\begin{figure}
\begin{center}
\begin{tikzpicture}[->,>=stealth',auto,node distance=3cm,
  thick,mainnode/.style={circle,draw},dottednode/.style={circle,draw,dotted},noborder/.style={{circle,inner sep=0,outer sep=0}}]
  \node[black,noborder] (ptstar) at (-10.5,2) {$\mathcal{P}_t^*:$};
  \node[black,dottednode,minimum size=1.3cm] (S1) at (-8.5,4) {$S_1$};
  \node[black,mainnode,minimum size=1.3cm] (O1) at (-8.5,2) {$O_1$};
  \node[black,mainnode,minimum size=1.3cm] (A1) at (-8.5,0) {$A_1$};
  \node[black, noborder] (elipses1S) at (-6.75,4) {\textbf{\ldots}};
  \node[black, noborder] (elipses1O) at (-6.75,2) {\textbf{\ldots}};
  \node[black, noborder] (elipses1A) at (-6.75,0) {\textbf{\ldots}};
  \node[red, noborder] (piE) at (-6.75,-1) {$\pi_e$};
  \node[black,dottednode,minimum size=1.3cm] (St1) at (-5,4) {$S_{t-1}$};
  \node[black,mainnode,minimum size=1.3cm] (Ot1) at (-5,2) {$O_{t-1}$};
  \node[black,mainnode,minimum size=1.3cm] (At1) at (-5,0) {$A_{t-1}$};
  \node[black,dottednode,minimum size=1.3cm] (St) at (-3,4) {$S_t$};
  \node[black,mainnode,minimum size=1.3cm] (Ot) at (-3,2) {$O_t$};
  \node[black,mainnode,minimum size=1.3cm] (At) at (-3,0) {$A_t$};
  \node[black, noborder] (elipses2S) at (-1.25,4) {\textbf{\ldots}};
  \node[black, noborder] (elipses2O) at (-1.25,2) {\textbf{\ldots}};
  \node[black, noborder] (elipses2A) at (-1.25,0) {\textbf{\ldots}};
  \node[blue, noborder] (piB) at (-1.25,-1) {$\pi_b$};
  \node[black,dottednode,minimum size=1.3cm] (Sh) at (0.5,4) {$S_H$};
  \node[black,mainnode,minimum size=1.3cm] (Oh) at (0.5,2) {$O_H$};
  \node[black,mainnode,minimum size=1.3cm] (Ah) at (0.5,0) {$A_H$};
  \node[black,dottednode,minimum size=1.3cm] (Sh1) at (2.5,4) {$S_{H+1}$};
  \node[black,mainnode,minimum size=1.3cm] (Oh1) at (2.5,2) {$O_{H+1}$};
  \node[black,dottednode,minimum size=1.2cm] (s) at (-1,-3) {$U_t$};
  \node[black,mainnode,minimum size=1.2cm] (z) at (-3,-3) {$Z_t$};
  \node[black,mainnode,minimum size=1.2cm] (a) at (-2,-4.5) {$A_t$};
  \node[black,mainnode,minimum size=1.2cm] (w) at (1,-3) {$W_t$};
  \node[black,mainnode,minimum size=1.2cm] (y) at (0,-4.5) {$Y_t$};
  \node[black,noborder] (label) at (-7,-3.75) {PCI Problem under $\mathcal{P}_t^*$:};
  \path[every node/.style={font=\sffamily\tiny}]
    (S1) edge[black,dotted] node [right] {} (O1)
    (St1) edge[black,dotted] node [right] {} (Ot1)
    (St) edge[black,dotted] node [right] {} (Ot)
    (Sh) edge[black,dotted] node [right] {} (Oh)
    (Sh1) edge[black,dotted] node [right] {} (Oh1)
    (A1) edge[black,dotted] node [right] {} (elipses1S)
    (elipses1A) edge[black,dotted] node [right] {} (St1)
    (At1) edge[black,dotted] node [right] {} (St)
    (At) edge[black,dotted] node [right] {} (elipses2S)
    (elipses2A) edge[black,dotted] node [right] {} (Sh)
    (S1) edge[black,dotted,bend left] node [right] {} (elipses1S)
    (elipses1S) edge[black,dotted,bend left] node [right] {} (St1)
    (St1) edge[black,dotted,bend left] node [right] {} (St)
    (St) edge[black,dotted,bend left] node [right] {} (elipses2S)
    (elipses2S) edge[black,dotted,bend left] node [right] {} (Sh)
    (Sh) edge[black,dotted,bend left] node [right] {} (Sh1)
    (O1) edge[red] node [right] {} (A1)
    (O1) edge[red,bend left] node [right] {} (elipses1A)
    (elipses1O) edge[red] node [right] {} (elipses1A)
    (elipses1O) edge[red,bend left] node [right] {} (At1)
    (O1) edge[red,bend left] node [right] {} (At1)
    (Ot1) edge[red] node [right] {} (At1)
    (St) edge[blue, bend left] node [right] {} (At)
    (elipses2S) edge[blue, bend left] node [right] {} (elipses2A)
    (Sh) edge[blue, bend left] node [right] {} (Ah)
    (s) edge[black,dotted] node [right] {} (z)
    (s) edge[black,dotted] node [right] {} (a)
    (s) edge[black,dotted] node [right] {} (w)
    (s) edge[black,dotted] node [right] {} (y)
    (z) edge[black] node [right] {} (a)
    (w) edge[black] node [right] {} (y)
    (a) edge[black] node [right] {} (y);
\end{tikzpicture}
\caption{Visual summary of the interventional distributions and corresponding conditional independence assumptions on proxies for our Proximal RL theory. \textbf{Above:} Visual representation of the interventional distribution $\ps{t}$, which is the distribution over trajectories obtained by following the evaluation policy $\pi_e$ for the first $t-1$ actions, and then taking all subsequent actions following $\pi_b$.
\textbf{Below}: Probabilistic graphical representation of the corresponding Proximal Causal Inference decision-making problem at time $t$ under $\ps{t}$, with outcome variable $Y_t = \phi(R_t,D_{t+1:H})$ for arbitrary $\phi$. The variables $Z_t$ and $W_t$ are conditionally independent action-side and outcome-side proxies for the true (unobserved) confounder $U_t$.}
\label{fig:pt-distribution}
\end{center}
\end{figure}

Given these definitions, we are ready to present our core assumptions. Our first assumption is that the confounders $U_t$ are sufficient to induce a particular conditional independence structure between the proxies $Z_t$ and $W_t$, as well as the observable data. Specifically, we assume the following:
\begin{assumption}[Negative Controls]
\label{asm:pci-reduction}
    For each $t \in [H]$ and $a \in \aset$, and any outcome variable $Y_t$ that is measurable w.r.t. $(R_t,D_{t+1:H})$, we have
    \begin{equation*}
        Z_t, A_t \indep_{\ps{t}} W_t, E_t, Y_t(a) \mid U_t \,.
    \end{equation*}
\end{assumption}

We note that these independence assumptions imply that the decision making problem under $\ps{t}$ with confounder $U_t$, negative controls $Z_t$ and $W_t$, action $A_t$, and outcome $(R_t,D_{t+1:H})$ satisfy the PCI problem structure as in \citet{cui2020semiparametric}. We visualize this structure for the problem at time $t$ in \cref{fig:pt-distribution}. In addition, it requires that the action-side proxy $Z_t$ is conditionally independent from the next action $E_t$ that would have been taken under $\pi_e^{(t)}$.
Note also that we may additionally include an observable context variable $X_t$, which may be useful for defining more efficient reductions. In this case, the conditional independence assumption in \cref{asm:pci-reduction} should hold given both $U_t$ and $X_t$, and in everything that follows $Z_t$, $W_t$, and $U_t$ should be replaced with $(Z_t,X_t)$, $(W_t,X_t)$, and $(U_t,X_t)$ respectively, as in \citet{cui2020semiparametric}. However, we omit $X_t$ from the notation in the rest of the paper for brevity.

Next, our results require the existence of some \emph{bridge} functions, as follows.
\begin{assumption}[Bridge Functions Exist]
\label{asm:bridge}
    For each $t \in [H]$ and $a \in \aset$, and any given outcome variable $Y_t = \phi(R_t,D_{t+1:H})$, there exists functions $q^{(t)}$ and $h^{(t,\phi)}$ satisfying
    \begin{align*}
        \es{t}[q^{(t)}(Z_t,A_t) \mid U_t, A_t=a] &= \pd{t}(A_t=a \mid U_t)^{-1} \qquad \text{a.s.} \\
        \text{and} \qquad \es{t}[h^{(t,\phi)}(W_t,A_t) \mid U_t,A_t=a] &= \es{t}[\indicator{E_t=A_t} Y_t \mid U_t,A_t=a] \qquad \text{a.s.} \,.
    \end{align*}
\end{assumption}

Implicit in the assumption is that $\pd{t}(A_t=a \mid U_t)>0$.
We refer to the functions $q^{(t)}$ as \emph{action bridge functions}, and $h^{(t,\phi)}$ as \emph{outcome bridge functions}. These may be seen as analogues of inverse propensity scores and state-action quality functions respectively. As argued previously by \citet{kallus2021causal}, assuming the existence of these functions is more general than the approach taken by \citet{cui2020semiparametric}, who require complex completeness conditions.
We refer readers to \citet{kallus2021causal} for a detailed presentation of conditions under which the existence of such bridge functions can be justified, as well as concrete examples
of bridge functions when the negative controls are discrete, or the negative controls and $Y_t$ are defined by linear models.

In the case of both \cref{asm:pci-reduction,asm:bridge}, the assumption depends on the choice of proxies $Z_t$ and $W_t$, and on the choice of confounders $U_t$. In addition, the parts of $(O_t,\tau_{t-1})$ that $\pi_e^{(t)}$ may depend on determines what variables $E_t$ is a function of, so the evaluation policy $\pi_e$ also affects the validity of \cref{asm:pci-reduction}.
For now we just emphasize this important point, and present our main identification theory, which is valid given these assumptions. However, we will provide some concrete examples of feasible and valid choices of $(Z_t,W_t,U_t)$ that satisfy \cref{asm:pci-reduction} for different kinds of policies $\pi_e$ in \cref{sec:pci-reduction-kinds}. In addition, we provide an in-depth examination of the additional conditions under which \cref{asm:bridge} holds for an example tabular setting in \cref{sec:example-tabular}.

\begin{theorem}
\label{thm:identification-pci}
    Let \cref{asm:pci-reduction,asm:bridge} hold. Define $q^{(t)}$ and $h^{(t)}$ as any solutions to the following equations (which are assumed to hold almost surely)
    \begin{align}
        \label{eq:q}
        \es{t}\Big[q^{(t)}(Z_t,A_t)  \mid W_t,A_t=a\Big] &= \pd{t}(A_t=a \mid W_t)^{-1} \quad \forall a \in \aset\,, \\
        \label{eq:h}
        \es{t}\Big[h^{(t)}(W_t,A_t) \mid Z_t, A_t=a\Big] &= \es{t}\Big[\indicator{E_t=A_t} Y_t \mid Z_t, A_t=a\Big] \quad \forall a \in \aset \,,
    \end{align}
    where $Y_H = R_H$, and for every $t \leq H$ we recursively define
    \begin{equation}
        \label{eq:yt}
        Y_{t-1} = R_{t-1} + \gamma \Big( \sum_{a \in \aset} h^{(t)}(W_{t},a) + q^{(t)}(Z_{t},A_{t}) \Big( \indicator{A_{t}=E_{t}} Y_{t} - h^{(t)}(W_{t},A_{t}) \Big) \Big)\,.
    \end{equation}
    Also, let $\eta_t = \prod_{s=1}^{t-1} \indicator{E_s=A_s} q^{(s)}(Z_s,A_s)$. Then, we have $v_\gamma(\pi_e) = \eb[\psidr(\tau_H)]$, where
    \begin{equation}
        \label{eq:psidr}
        \psidr(\tau_H) = \sum_{t=1}^H \gamma^{t-1} \Big( \eta_{t+1} R_t + \eta_t \sum_{a \in \aset} h^{(t)}(W_t,a) - \eta_t q^{(t)}(Z_t,A_t) h^{(t)}(W_t,A_t) \Big) \,.
    \end{equation}
\end{theorem}

Since $\eb[\psidr(\tau_H)]$ is fully defined by $\pb$, this is a valid identification result. As detailed in our proof, the existence of solutions to \cref{eq:q,eq:h} is guaranteed given our assumptions.
Comparing with \cref{thm:tennenholtz}, this result has many immediate advantages; it is written as an expectation over $\pb$, and so may be analyzed readily using standard semiparametric efficiency theory, and although \cref{eq:h,eq:q} may appear complex given that they are expressed in terms of the intervention distributions $\ps{t}$, this can easily be dealt with as discussed later.
We also observe that \cref{eq:psidr} has a very similar structure to the Double Reinforcement Learning (DRL) estimators for the MDP setting \citep{kallus2020double}, where $h^{(t)}$ and $q^{(t)}$ are used in place of inverse propensity score and quality function terms respectively. This is very promising, since DRL estimators enjoy desirable properties such as semiparametric efficiency in the MDP setting \citep{kallus2020double}. Indeed, in \cref{sec:estimation} we show that similar properties extend to estimators defined based on \cref{eq:psidr}. 

At a high level, the proof of \cref{thm:identification-pci} works by defining a series of of outcome variables $Y_t$ such that, for each PCI problem at time $t \in [H]$ under distribution $\ps{t}$ and with outcome variable $Y_t$, the policy value obtained by intervening at time $t$ with $\pi_e$ is equal to $\epe[R_t + \gamma R_{t+1} + \ldots + \gamma^{H-t} R_H]$. In the base case of $t=H$ this property is trivially satisfied with $Y_t=R_t$, since under $\ps{H}$ all prior actions prior to time $H$ are taken following $\pi_e$. Conversely, for $t<H$, we establish via backward induction that this holds with $Y_t$ defined according to \cref{eq:yt}. Intuitively, this works because the term multiplied by $\gamma$ in \cref{eq:yt} is the doubly robust influence function for the PCI problem at time $t$, so $\es{t}[Y_{t-1}] = \epe[R_{t-1}] + \gamma \es{t+1}[Y_t]$. Similarly, $\psidr(\tau_H)$ is the doubly robust influence function for the PCI problem at $t=1$, and so $\eb[\psidr(\tau_H)] = \es{2}[Y_1] = \ldots = v_\gamma(\pi_e)$.
That is, we recursively apply the improved identification theory of \citet{kallus2021causal} to a nested sequence of PCI problems. In each step of the induction, we apply \cref{asm:pci-reduction,asm:bridge} with the specific outcome variable $Y_t$. 
We provide full proof details in \cref{apx:identification-pci}, where we also present a slightly more general result that allows for alternatives to $\psidr$ that instead resemble importance sampling or direct method estimators for the MDP setting.

\subsection{Specific Proximal Causal Inference Reductions and Resulting Identification}
\label{sec:pci-reduction-kinds}

Next, we provide some discussion of how to actually construct a valid PCI reduction; that is, how to choose $Z_t$, $W_t$, and $U_t$ that satisfy \cref{asm:pci-reduction}.
We provide several options of how this reduction may be performed, and discuss in each case the assumptions that would be required of the POMDP and $\pi_e$ for identification based on our results. In all cases that we consider below, we would need to additionally justify \cref{asm:bridge}, which implicitly requires some additional completeness conditions on the choices of $Z_t$, $W_t$, and $U_t$. Furthermore, we note that the practicality of any given reduction would depend heavily on how well-correlated $W_t$ and $Z_t$ are for each $t$, which in turn would impact how easily the required nuisance functions $q^{(t)}$ and $h^{(t)}$ could be fit. We summarize these reductions in \cref{tab:pci-reductions}.

\begin{table}
    \centering
    \begin{tabular}{lllll}
    \hline
    \textbf{PCI Reduction} & $\bf{Z_t}$ & $\bf{W_t}$ & $\bf{U_t}$ & \textbf{Input to} $\bf{\pi_e^{(t)}}$ \\
    \hline
    curr. and prev. obs.  & $O_{t-1},A_{t-1},R_{t-1}$ & $O_t$ & $S_t$ & $O_t$ \\
    curr. and $k$-prior obs. & $O_{t-k'}$ & $O_t$ & $S_t, S_{t-k'+1}$ & $(O_t,\tau_{t-1}) \setminus \tau_{t-k'}$ \\
    two views of obs. & $O_t^{\textup{priv}}$ & $O_t^{\textup{pub}}$ & $S_t$ & $(O_t,\tau_{t-1}) \setminus O^{\textup{priv}}_{0:t}$ \\
    \hline
    \end{tabular}
    \caption{Summary of some valid PCI reductions for our Proximal RL theory. For each, we provide the explicit reduction in terms of the triplet $(Z_t,W_t,U_t)$, and we summarize what kinds of policies can be evaluated under the respective reduction (\emph{i.e.} what is the allowed input to $\pi_e^{(t)}$). For the second row, recall that $k'=\min(k,t)$, and for the third row, recall that $O_t=(O_t^{\textup{priv}},O_t^{\textup{pub}})$, where $O_t^{\textup{priv}} \indep O_t^{\textup{pub}} \mid S_t$.}
    \label{tab:pci-reductions}
\end{table}

\subsubsection{Current and previous observation}
\label{sec:prev-curr-obs}
Perhaps the most simple kind of PCI reduction would be to define $U_t=S_t$, $W_t=O_t$, and $Z_t=(O_{t-1},A_{t-1},R_{t-1})$. That is, we use the current hidden state as confounders, and we use both the observation of $S_t$ as well as the previous observation, action, reward triple as proxies for $O_t$. For this definition we define $A_0=R_0=\emptyset$. It is easy to verify that this is a valid PCI reduction (\ie satisfying \cref{asm:pci-reduction}) as long as $\pi_e^{(t)}$ depends on $(\tau_t,O_t)$ via $O_t$ only. In addition, it is easy to verify that this reduction remains valid if we replace $Z_t$ with $O_{t-1}$, which gives us a very simple and elegant reduction, at the slight cost of fewer treatment-side proxies.

This kind of reduction may be relevant in applications where the current observation of the state is considered to be rich enough for decision making, but where nonetheless it is possible that confounding is present. One example of such a setting is a noisy observation setting, where $O_t$ is a direct observation of $S_t$ that may be corrupted with some probability, as discussed in more detail in \cref{sec:experiments}. Another example where such a reduction may be desirable is when we wish to consider policies that are functions of $O_t$ only for reasons of simplicity / interpretability.
For example, if we wish to evaluate an automated policy for sepsis management, we may wish that the policy is a simple function of the patient's current state that can be understood and audited by doctors.

\subsubsection{Current and $k$-prior observation} An alternative to the previous reduction would be to define to define $U_t=(S_t,S_{t-k'+1})$, $W_t=O_t$, and $Z_t=O_{t-k'}$, for some integer $k \geq 2$, where $k' = \min(k,t)$. Note that in this reduction we can no longer include any action or reward in $Z_t$, as this would break \cref{asm:pci-reduction} in general given the definition of $\ps{t}$. This reduction allows for any policy where $\pi_e^{(t)}$ depends on $(\tau_t,O_t)$ via the data from the $k$-most recent time steps; \ie $(O_{t-k'+1:t}, A_{t-k'+1:t-1}, R_{t-k'+1:t-1})$.

This kind of reduction would be useful in applications where it is necessary to consider policies that consider a past history of observations, rather than only the most recent observation. For example, if we were considering the task of training an robot to act within an environment that it can only observe part of at each time step through its camera, it may be necessary to consider policies that use several recent observations to build a more accurate map of the environment. However, one limitation of this reduction compared to the previous is that it uses two states as its confounder, which may make \cref{asm:bridge} more difficult to satisfy. In addition, since $Z_t$ and $W_t$ are separated in time, if $k$ is large they may be weakly correlated, making bridge functions more difficult to fit.

\subsubsection{Two views of current observation} Finally, we consider a different kind of reduction, which is valid when we have two separate views of the observation; that is, we can partition each observation $O_t$ as $O_t = (O_t^{\textup{priv}},O_t^{\textup{pub}})$, where $O_t^{\textup{priv}} \indep O_t^{\textup{pub}} \mid S_t$. In this case, we can define $U_t=S_t$, $W_t=O_t^{\textup{pub}}$, and $Z_t=O_t^{\textup{priv}}$. This allows us to evaluate any policy where $\pi_e^{(t)}$ may depend on all of $\tau_t$ except for $O_{0:t}^{\textup{priv}}$.

This kind of reduction could be appealing in many settings. First of all, it may be useful for the same kinds of applications as the previous kind of reduction, as it allows us to consider policies defined on a history of past observations without incurring the costs of the same costs in terms of satisfying \cref{asm:bridge} or estimating bridge functions.
This reduction could be particularly useful when there are some observation variables that cannot be used directly for decision making. For example, in the personalized education example considered in \cref{sec:intro}, there may be certain testing-based metrics that were specifically collected with the logged data, but that would not be available when a policy was deployed. Similarly, in robotics settings as discussed earlier, there may be cheap sensors that are always available, and expensive sensors that are only available in the logged data \citep{pan2020imitation}. In this case, we could include all such unavailable covariates in $O_t^{\textup{priv}}$, and the remaining covariates in $O_t^{\textup{pub}}$, and this would allow policy evaluation with no effective restriction on the kinds of policies considered. Similarly, if certain sensitive covariates were not allowed to be included in policies \eg for ethical reasons, such covariates could be included in $O_t^{\textup{priv}}$.

\subsection{Example: Tabular POMDPs Using Previous and Current Observation as Proxies}
\label{sec:example-tabular}

Finally, we conclude this section with a discussion of our key identification assumptions for a simple tabular case, where we use the previous and current observations as proxies for the unobserved state as described in \cref{sec:prev-curr-obs}. That is, we consider settings where $U_t = S_t$, $Z_t = O_{t-1}$, $W_t=O_t$, and $\Scal$ and $\Ocal$ are both finite. 

As argued previously, this choice of proxies satisfies \cref{asm:pci-reduction} as long as $\pi_e^{(t)}$ depends on $O_t,\tau_{t-1}$ via $O_t$ only. However, it remains to also justify \cref{asm:bridge}. The following proposition allows us to rewrite the bridge equations for this simple setting in terms of some conditional 
probability matrices under the POMDP and evaluation policy $\pi_e$.

\begin{proposition}

Let $P^{(t)}(\Ob \mid \Sb)$ denote the $|\Ocal|$ by $|\Scal|$ matrix of the distribution of $O_t$ given $S_t$ in the POMDP, and let $P_e^{(t)}(\Sb' \mid \Sb)$ denote the $|\Scal|$ by $|\Scal|$ matrix of the distribution of $S_{t-1}$ given $S_t$ under rollout by $\pi_e$. In addition, for any outcome variable $Y_t = \phi(R_t,D_{t+1:H})$ and $a \in \Acal$, let $\EE_t^*[\indicator{E_t=A_t} Y_t \mid \Sb, a]$ denote the $|\Scal|$-length vector of values of $\indicator{E_t=A_t} Y_t$ given $S_t$ and $A_t=a$ under $\ps{t}$, and let $P_t^*(a \mid \Sb)^{-1}$ denote the $|\Scal|$-length vector of values of $P(A_t=a \mid S_t)^{-1}$ under $\ps{t}$. Then, using proxies $Z_t=O_{t-1}$ and $W_t=O_t$, and confounders $U_t=S_t$, the bridge equations in \cref{asm:bridge} for each $t$ correspond to solving
\begin{equation*}
    P_e^{(t)}(\Sb' \mid \Sb)^\top P^{(t)}(\Ob \mid \Sb)^\top q^{(t)}(\Ob,a) = P_t^*(a \mid \Sb)^{-1} \qquad \forall a \in \Acal
\end{equation*}
and
\begin{equation*}
    P^{(t)}(\Ob \mid \Sb)^\top h^{(t,\phi)}(\Ob, a) = \EE_t^*[\indicator{E_t=A_t} Y_t \mid \Sb, a] \qquad \forall a \in \Acal \,,
\end{equation*}
where $q^{(t)}(\Ob,a)$ and $h^{(t,\phi)}(\Ob,a)$ are the $|\Ocal|$-length vector of values of $q^{(t)}(Z_t,a)$ and $h^{(t,\phi)}(W_t,a)$ respectively.

\end{proposition}

This proposition follows trivially by applying the fact that $Z_t=O_{t-1}$, $W_t=O_t$, and $U_t=S_t$, and explicitly expanding out the conditional expectations in the bridge equations in terms of $P_e^{(t)}(\Sb' \mid \Sb)$ and $P^{(t)}(\Ob \mid \Sb)$ given the Markovian property of the POMDP conditioned on the unobserved states.

A trivial corollary of the proposition is that, if $|\Ocal| \geq |\Scal|$, and $P^{(t)}(\Ob \mid \Sb)$ and $P_e^{(t)}(\Sb' \mid \Sb)$ are both full-rank, then the above equations are always solvable for all $a \in \Acal$, no matter the outcome variable $Y_t$. This follows by using any pseudo-inverse for $P_e^{(t)}(\Sb' \mid \Sb)^\top P^{(t)}(\Ob \mid \Sb)^\top$ and $P^{(t)}(\Ob \mid \Sb)^\top$.
The conditions that $|\Ocal| \geq |\Scal|$ and that $P^{(t)}(\Ob \mid \Sb)$ is full rank are independent of the behavior or evaluation policies, and they essentially require that all distributions over states imply different distributions over observations; that is, there are no ``invisible'' aspects of $S_t$ that don't affect $O_t$. Conversely, the assumption that $P_e^{(t)}(\Scal' \mid \Scal)$ is full rank depends on the evaluation policy $\pi_e$. However, it may be justified for \emph{all} possible evaluation policies, for example if the $|\Sb|$ by $|\Sb|$ conditional probability matrix defining the transition kernel $P_T^{(t)}(S_t \mid S_{t-1}, A_{t-1}=a)$ were invertible for every $a \in \Acal$. 
In other words, we can justify \cref{asm:bridge} under some basic conditions on the underlying POMDP, which may be reasoned about on a problem-by-problem basis.

Finally, although the above analysis is specific to our example setting, the intuition is very general; in order for \cref{asm:bridge} to hold, we need that the proxies are sufficiently well correlated with the confounders (\emph{e.g.} that $P^{(t)}(\Ob \mid \Sb)$ and $P_e^{(t)}(\Sb' \mid \Sb)$ are full rank), and that they contain at least as much information as the confounders (\emph{e.g.} that we also have $|\Ocal| \geq |\Scal|$).

\section{Policy Value Estimators}
\label{sec:estimation}

Now we turn from the question of identification to that of estimation. We will focus on estimation of $v_\gamma(\pi_e)$ based on the identification result given by \cref{cor:psidr}.
We will assume in the remainder of this section that we have fixed a valid PCI reduction that satisfies \cref{asm:pci-reduction,asm:bridge}.
A natural approach to estimating $v_\gamma(\pi_e)$ based on \cref{cor:psidr} would be to use an estimator of the kind
\begin{equation}
\label{eq:plugin-estimator}
    \hat v^{(n)}_\gamma(\pi_e) = \frac{1}{n} \sum_{i=1}^n \widehat\psidr(\tau_H^{(i)}) \,,
\end{equation}
where $\widehat\psidr$ is an approximation of $\psidr$ using plug-in estimators for the nuisance functions $h^{(t)}$ and $q^{(t)}$ for each $t$.
Specifically, to eschew assumptions on the nuisance function estimators aside from rates, we will use a cross-fitting estimation technique \citep{chernozhukov2016double,zheng2011cross}. Namely, fixing $K\geq2$, for each $k=1,\dots,K$: (1) for $t=1,\dots,H$, we fit estimators $\hat h^{(t,k)}$ and $\hat q^{(t,k)}$ only on the observed trajectories $i=1,\dots,n$ with $i\neq k-1\,(\operatorname{mod}\,K)$; (2) and then for $i=1,\dots,n$ with $i= k-1\,(\operatorname{mod}\,K)$, we set $\widehat\psidr(\tau_H^{(i)})$ to be $\psidr(\tau_H^{(i)})$ where we replace $h^{(t)},q^{(t)}$ with $\hat h^{(t,k)},\hat q^{(t,k)}$. Then we use these to construct an estimator by taking an average as in \cref{eq:plugin-estimator}.
We discuss exactly how we fit nuisance estimators given trajectory data in \cref{sec:nuisnace}. Until then, for \cref{sec:consistency,sec:efficiency}, we keep this abstract and general: we will only impose assumptions about the rates of convergence of nuisance estimators and that we used cross-fitting so that $\tau_H^{(i)}$ is independent of $\hat h^{(t,k)},\hat q^{(t,k)}$ whenever $i= k-1\,(\operatorname{mod}\,K)$.

\subsection{Consistency and Asymptotic Normality}\label{sec:consistency}

We first consider conditions under which the estimator $\hat v^{(n)}_\gamma(\pi_e)$ is consistent and asymptotically normal. For this, we need to make some assumptions on the quality of our nuisance estimators.

\begin{assumption}
\label{asm:estimation-error}
\textbf{Consistent and bounded nuisance estimates}:
letting $\Psi$ represent $q^{(t)}(Z_t,A_t)$ or $h^{(t)}(W_t,a)$ for any $t \in [H]$ and $a \in \Acal$, we have that for each $k\in[K]$:
\begin{enumerate}
    \item $\|\hat\Psi^{(k)}-\Psi\|_{2,\pb}=o_p(1)$
    \item $\|\hat\Psi^{(k)}\|_{\infty}=O_p(1)$
    \item $\|\Psi\|_\infty<\infty$
\end{enumerate} 
\textbf{Nuisance estimation rates}: The following stochastic bounds hold over the sampling distributions for constructing the estimators $\hat q^{(1,k)}$ and $\hat h^{(t,k)}$ for all $t \in [H]$ and $k \in [K]$:
\begin{enumerate}
\setcounter{enumi}{3}
    \item For each $t\in[H]$, $a\in\aset$, and $k\in[K]$, we have
    \begin{equation*}
        \|\hat q^{(t,k)}(Z_t,A_t)-q^{(t)}(Z_t,A_t)\|_{2,\pb} \|\hat h^{(t,k)}(Z_t,a)-h^{(t)}(Z_t,a)\|_{2,\pb}=o_p(n^{-1/2})
    \end{equation*}
    \item for each $t\in[H]$, $t'<t$, $a\in\aset$, and $k\in[K]$, we have
    \begin{equation*}
        \|\hat q^{(t',k)}(Z_{t'},A_{t'})-q^{(t')}(Z_{t'},A_{t'})\|_{2,\pb} \|\hat h^{(t,k)}(Z_t,a)-h^{(t)}(Z_t,a)\|_{2,\pb}=o_p(n^{-1/2})
    \end{equation*}
    \item for each $t\in[H]$, $t'<t$, and $k\in[K]$, we have
    \begin{equation*}
        \|\hat q^{(t',k)}(Z_{t'},A_{t'})-q^{(t')}(Z_{t'},A_{t'})\|_{2,\pb} \|\hat q^{(t,k)}(Z_t,A_t)-q^{(t)}(Z_t,A_t)\|_{2,\pb}=o_p(n^{-1/2})
    \end{equation*}
\end{enumerate}

\end{assumption}

Essentially, \cref{asm:estimation-error} requires that the nuisances $q^{(t)}$ and $h^{(t)}$ are estimated consistently in terms of the $L_{2,\pb}$ functional norm for each $t$, and that the corresponding product-error terms converge faster than $n^{-1/2}$ rate. This could be achieved, for example, if each nuisance by itself were estimated at a $o_p(n^{-1/4})$ rate, which notably permits slower-than-parametric rates and is obtainable for many non-parametric machine-learning-based methods \citep{chernozhukov2016double}.
In particular, there is a very established line of work on establishing rates like these for conditional moment problems, like those defining $q^{(t)}$ and $h^{(t)}$, in terms of projected error (\emph{e.g.} obtaining rates for $\|\EE[\hat h^{(t,k)}(W_t,A_t) - h^{(t)}(W_t,A_t) \mid Z_t,A_t]\|_2$,) using \emph{e.g.} sieve methods \citep{chen2009efficient,chen2012estimation} or minimax methods with general machine learning classes \citep{dikkala2020minimax}. These can be translated to corresponding rates for the actual $L_2$ error (\emph{e.g.} $\|\hat h^{(t,k)}(W_t,A_t) - h^{(t)}(W_t,A_t)\|_2$) given assumptions on so-called ``ill-posedness'' measures (see \emph{e.g.} \citet{chen2012estimation},) which can be used to ensure our required rates. Alternatively, there exist methods that can directly obtain $L_2$ error rates for such conditional moment problems, by leveraging so-called ``source conditions'' \citep[Definition 3.4]{carrasco2007linear}, for example using regularized sieve methods \citep{florens2011identification}, neural nets with Tikhonov regularization \citep{liao2020provably}, or kernel methods with spectral regulariztion \citep{wang2022spectral}.
Also note that the product-rate condition allows for some trade off where, if some nuisances can be estimated faster, then other nuisances can be estimated even slower than $o_p(n^{-1/4})$.
In addition, we require a technical boundedness condition on the uniform norm of the errors and of the true nuisances themselves. Given this, we can now present our main consistency and asymptotic normality theorem.

\begin{theorem}
\label{thm:consistency}
Let the conditions of \cref{thm:identification-pci} be given, and assume that the nuisance functions plugged into $\hat v^{(n)}_\gamma(\pi_e)$ are estimated using cross fitting. Furthermore, suppose that the nuisance estimation for each cross-fitting fold satisfies \cref{asm:estimation-error}. Then, we have 
\begin{align*}
&\sqrt{n}(\hat v^{(n)}_\gamma(\pi_e) - v_\gamma(\pi_e)) \to \mathcal N(0, \sigma^2_{\text{DR}})\quad\text{in distribution},\\
    &\text{where}\quad\sigma^2_{\text{DR}} = \e_{\pb}[(\psidr(\tau_H) - v_\gamma(\pi_e))^2]
\end{align*}
\end{theorem}

The key step in proving \cref{thm:consistency} is to establish that $\psidr$ enjoys Neyman orthogonality with respect to all nuisance functions, and in particular characterizing the unique product structure of the bias. Having established this, we proceed by applying the machinery of theorem 3.1 of \citet{chernozhukov2016double}.
We refer the reader to the appendix for the detailed proof. 

One technical note about this theorem is that there may be multiple $q^{(t)}$ and $h^{(t)}$ that solve \cref{eq:q,eq:h}, which creates some ambiguity in both \cref{asm:estimation-error} and the definition of $\psidr(\tau_H)$. This is important, since the ambiguity in the definition of $\psidr(\tau_H)$ affects the value of the asymptotic variance $\sigma^2_{\text{DR}}$. In this case, we implicitly assume that \cref{asm:estimation-error} holds for some arbitrarily given solutions $q^{(t)}$ and $h^{(t)}$ for each $t \in [H]$, and that $\sigma^2_{\text{DR}}$ is defined using the same $q^{(t)}$ and $h^{(t)}$ solutions. Thus, our consistency result in \cref{thm:consistency} holds even when bridge functions are non-unique.

Finally, we briefly consider how this variance grows in terms of $H$. Since $\varphi_{DR}(\tau_H)$ consists of a sum of $H$ terms, each of which is multiplied by $\eta_t = \prod_{s'=0}^{t-1} q^{(s')}(Z_{s'},A_{s'}) \indicator{E_{s'}=A_{s'}}$, we can generally bound the efficient asymptotic variance by $\sum_{t=1}^H \prod_{s=1}^t \|q^{(s)}(Z_s,A_s)\|_\infty ( \|q^{(t)}(Z_t,A_t)\|_2 + \sum_{a \in \Acal} \|h^{(t)}(W_t,a)\|_2 + \|q^{(t)}(Z_t,A_t)\|_\infty \|h^{(t)}(W_t,A_t)\|_2)$. Therefore, assuming that all functions $h^{(t)}(W_t,A_t)$ and $q^{(t)}(Z_t,A_t)$ have $\|\cdot\|_\infty$ norm of the same order $H$ grows, the asymptotic variance should grow roughly as $\mathcal{O}(H^2)$ as $H \to \infty$. On the other hand, if the inverse problems for $q^{(t)}$ and $h^{(t)}$ grow increasingly ill-conditioned as $t$ increases, then the norms of these functions may grow, in which case the growth of asymptotic variance may be worse than quadratic.

\subsection{Semiparametric Efficiency}
\label{sec:efficiency}

We now consider the question of \emph{semiparametric efficiency} of our OPE estimators. Semiparametric efficiency is defined relative to a model $\mathcal M$, which is a set of allowed distributions such that $\pb \in \mathcal M$.
Roughly speaking, we say that an estimator is semiparametrically efficient w.r.t. $\mathcal M$ if it is regular (meaning invariant to $O_p(1/\sqrt{n})$ perturbations to the data-generating process that keep it inside $\mathcal M$), and achieves the minimum asymptotic variance of all regular estimators. We provide a summary of semiparametric efficiency as it pertains to our results in \cref{apx:semiparametric}, but for the purposes of this section it suffices to say that, under conditions we establish, there exists a function $\psieff \in L_{2,\pb}(\tau_H)$, called the ``efficient influence function'' w.r.t. $\mathcal M$, and that an estimator $\hat v^{(n)}_\gamma(\pi_e)$ is efficient w.r.t. $\mathcal M$ if and only if $\sqrt{n}(\hat v^{(n)}_\gamma(\pi_e) - v_\gamma(\pi_e)) = n^{-1/2} \sum_{i=1}^n\psieff(\tau_H^{(i)}) + o_p(1)$, that is, asymptotically it looks like simple sample average of this function. 

One complication in considering models of distributions on $\tau_H$ is that technically the definition of $v_\gamma(\pi_e)$ depends on the full distribution of $\tau_H^{\textup{full}}$. In the case that the distribution of $\tau_H$ corresponds to the logging distribution induced by some behavior policy and underlying POMDP that satisfies \cref{asm:bridge}, it is clear from \cref{thm:identification-pci} that using \emph{any} nuisances satisfying the required conditional moments will result in the same policy value estimate $v_\gamma(\pi_e)$. However, if we allow for distributions on $\tau_H$ that do not necessarily satisfy such conditions, as is standard in the literature on policy evaluation, it may be the case that different solutions for $h^{(t)}$ and $q^{(t)}$ result in different values of $\e_{\mathcal P}[\psidr(\tau_H)]$. To avoid such issues, we consider a model of distributions where the nuisances and corresponding policy value estimate are uniquely defined, as follows.
\begin{definition}[Model and Target Parameter]
\label{def:model}
Define $\mt{0}$ as the set of all distributions on $\tau_H$, and for each $t \geq 1$ recursively define:
\begin{enumerate}
    \item $\eta_{t,\mathcal P} = \prod_{s=1}^{t-1} q^{(s)}_{\mathcal P}(Z_s,A_s) \indicator{A_s = E_s}$
    \item $P^*_{t, \mathcal P}(A_t \mid W_t) = \e_{\mathcal P}[\eta_{t, \mathcal P} \mid W_t, A_t] P_{\mathcal P}(A_t \mid W_t)$
    \item $(T_{t,\Pcal} g)(W_t,A_t) = \e_{\mathcal P}[\eta_{t,\mathcal P} g(Z_t, A_t) \mid W_t, A_t]$ for all $g \in L_{2,\Pcal}(Z_t,A_t)$
    \item $\mt{t} = \mt{t-1} \cap \{\mathcal P : T_{t,\mathcal P} \text{ is invertible and } P^*_{t,\mathcal P}(A_t \mid W_t)^{-1} \in L_{2,\mathcal P}(W_t,A_t)\}$
    \item $q^{(t)}_{\mathcal P}(Z_t,A_t) = T_{t,\mathcal P}^{-1} \left( P^*_{t,\mathcal P}(A_t \mid W_t)^{-1} \right)$
\end{enumerate}
where (1-3) are defined for $\mathcal P \in \mt{t-1}$, and (5) for $\mathcal P \in \mt{t}$. Furthermore, let $T^*_{t,\mathcal P}$ denote the adjoint of $T_{t,\mathcal P}$, define $Y_H=R_h$, and for each $t \in [H]$ and $\mathcal P \in \mt{t}$ recursively define
\begin{enumerate}
      \setcounter{enumi}{5}
    \item $\mu_{t,\mathcal P}(Z_t,A_t) = \e_{\mathcal P}[\eta_{t,\mathcal P} \indicator{A_t=E_t} Y_{t,\mathcal P} \mid Z_t, A_t]$
    \item $h^{(t)}_{\mathcal P}(W_t,A_t) = (T^*_{t,\mathcal P})^{-1}\left( \mu_{t,\mathcal P}(Z_t,A_t)\right)$
    \item $\psi_{t,\Pcal} = \sum_{a \in \aset} h^{(t)}_{\mathcal P}(W_t,a) + q^{(t)}_{\mathcal P}(Z_t,A_t) \left( \indicator{A_t=E_t} Y_{t,\mathcal P} - h^{(t)}_{\mathcal P}(W_t,A_t) \right)$
    \item $Y_{t-1,\mathcal P} = R_{t-1} + \gamma \psi_{t,\Pcal}$
\end{enumerate}
where the latter is only defined for $t>1$. Finally, let $\mpci = \mt{H}$, and for each $\mathcal P \in \mpci$ define
\begin{equation*}
    V(\mathcal P) = \e_{\mathcal P} \left[ \sum_{a \in \aset} h^{(1)}_{\mathcal P}(W_1,a) \right]\,.
\end{equation*}
\end{definition}

We note that this definition is not circular, since $\eta_{1,\mathcal P} = 1$ for every $\mathcal P$, and so we can concretely define the first set of quantities in the order they are listed above for each $t \in [H]$ in ascending order, and the second set in descending order of $t$. We note that in the case that $\mathcal P = \pb$, it is straightforward to reason that $\eta_{t,\pb}$, $q_{\pb}^{(t)}$, $h_{\pb}^{(t)}$, and $Y_{t,\pb}$ agree with the corresponding definitions in \cref{thm:identification-pci,cor:psidr}: $T_{t,\pb}$ and $T^*_{t,\pb}$ correspond to standard conditional expectation operators under $\ps{t}$, $P^*_{t,\pb}(A_t \mid W_t) = \pd{t}(A_t \mid W_t)$, and $V(\pb) = v_\gamma(\pi_e)$. Therefore, $\mpci$ is a natural model of observational distributions where the required nuisances are uniquely defined, and $V(\mathcal P)$ is a natural and uniquely defined generalization of $v_\gamma(\pi_e)$ for distributions $\mathcal P$ that do not necessarily correspond to actual logging distributions satisfying \cref{asm:bridge}.

Finally, we assume the following the following on the actual observed distribution $\pb$.
\begin{assumption}
\label{asm:semiparametric-reg}
    For every sequence of distributions $\mathcal P_n$ that converge in law to $\pb$, there exists some integer $N$ such that for all $n\geq N$ and $t \in [H]$ such that $T_{t,\mathcal P_n}$ and $T^*_{t,\mathcal P_n}$ are invertible. Furthermore, for all such sequences and $t \in [H]$ we also have
    \begin{enumerate}
        \item $\liminf_{n \to \infty} \inf_{\|f(Z_t,A_t)\|_{1,\mathcal P_n} \geq 1} \| T_{t,\mathcal P_n} f(Z_t,A_t)\|_{1,\mathcal P_n} > 0$
        \item $\liminf_{n \to \infty} \inf_{\|g(W_t,A_t)\|_{1,\mathcal P_n} \geq 1} \| T^*_{t,\mathcal P_n} g(W_t,A_t)\|_{1,\mathcal P_n} > 0$
        \item $\limsup_{n \to \infty} \|P^*_{t,\mathcal P_n}(A_t \mid W_t)^{-1}\|_\infty < \infty$\,.
    \end{enumerate}
    In addition, for each $t \in [H]$ the distribution $\pb$ satisfies
    \begin{enumerate}
      \setcounter{enumi}{3}
        \item $\inf_{\|f(Z_t,A_t)\|_{2,\pb} \geq 1} \| T_{t,\mathcal P_n} f(Z_t,A_t)\|_{2,\pb} > 0$
        \item $\inf_{\|g(W_t,A_t)\|_{2,\mathcal P_n} \geq 1} \| T^*_{t,\mathcal P_n} g(W_t,A_t)\|_{2,\pb} > 0$\,.
    \end{enumerate}
\end{assumption}
The condition that $T_{t,\mathcal P_n}$ and $T^*_{t,\mathcal P_n}$ are invertible for large $n$ ensures that the model $\mpci$ is locally saturated at $\pb$, and the additional conditions ensure that the nuisance functions can be uniformly bounded within parametric submodels. 
These are very technical conditions used in our semiparametric efficiency proof, and it may be possible to relax them.
We note also that in discrete settings, these conditions follow easily given $\pb \in \mpci$, since in this setting the conditions can be characterized in terms of the entries or eigenvalues of some probability matrices being bounded away from zero, which by continuity must be the case when $\mathcal P_n$ is sufficiently close to $\pb$. 
Importantly, the locally saturated condition on $\mpci$ at $\pb$ means that the relevant tangent space is unrestricted. (See \cref{apx:tangent-space} for a discussion of issues with the tangent space in past work in the absence of local saturation.)

Given this setup, we can now present our main efficiency result.
\begin{theorem}
\label{thm:efficiency}
Suppose that $\pb$ is the observational distribution given by a POMDP and logging policy that satisfies the conditions of \cref{thm:identification-pci}, and let \cref{asm:semiparametric-reg} be given. Then, $\psidr(\tau_H) - v_\gamma(\pi_e)$ is the efficient influence function for $V(\mathcal P)$ at $\mathcal P = \pb$.
\end{theorem}

Finally, the following corollary combines this result with \cref{thm:consistency}, which shows that under the same conditions, if the nuisances are appropriately estimated then the resulting estimator will achieve the semiparametric efficiency bound relative to $\mpci$.
\begin{corollary}
Let the conditions of \cref{thm:consistency,thm:efficiency} be given. Then, the estimator $\hat v^{(n)}_\gamma(\pi_e)$ is semiparametrically efficient w.r.t. $\mpci$.
\end{corollary}

\subsection{Nuisance Estimation}\label{sec:nuisnace}

Finally, we conclude this section with a discussion of how we may actually estimate $q^{(t)}$ and $h^{(t)}$. The conditional moment equations \cref{eq:q,eq:h} defining these nuisances are defined in terms of the intervention distributions $\ps{t}$, which are not directly observable. Therefore, we provide the following lemma, which re-frames these as a nested series of conditional moment restrictions under $\pb$.

\begin{lemma}
\label{lem:nuisance-estimation}
Let the conditions of \cref{thm:identification-pci} be given. Then, for any collection of functions $q^{(1)},\ldots,q^{(H)}$ and $h^{(1)},\ldots,h^{(H)}$, these functions satisfy \cref{eq:q,eq:h} for every $t \in [H]$ if and only if for every $t \in [H]$ we have
\begin{align*}
    \e_{\pb} \left[ \eta_t \left(  g(W_t, A_t) q^{(t)}(Z_t, A_t) - \sum_{a \in \mathcal A} g(W_t, a) \right) \right] &= 0 \qquad \forall \text{ measurable } g \\
    \text{and} \qquad \e_{\pb} \left[ \eta_t f(Z_t, A_t) \Big( h^{(t)}(W_t,A_t) - \indicator{E_t=A_t} Y_t \Big) \right] &= 0 \qquad \forall \text{ measurable } f \,,
\end{align*}
where $\eta_t$ and $Y_t$ are defined as in \cref{thm:identification-pci}.
\end{lemma}

We can observe that the moment restrictions defining $q^{(t)}$ for each $t$ depend only on $q^{(t')}$ for $t' < t$, and those defining $h^{(t)}$ for each $t$ depend on $h^{(t')}$ for $t' > t$ and on $q^{(t'')}$ for every $t'' \neq t$. This suggests a natural order for estimating these nuisances, of $q^{(1)}$ through $q^{(H)}$ first, and then $h^{(H)}$ through $h^{(1)}$. We now take this approach, solving an estimate of the continuum of moment conditions in each round. (An alternative approach may be to jointly solve for all $2H$ nuisances together.)
Set 
\begin{align*}
    U^{(q,t)}(q,g) &= \hat\eta_t \left(  g(W_t, A_t) q(Z_t, A_t) - \sum_{a \in \mathcal A} g(W_t, a) \right) \\
    U^{(h,t)}(h,f) &= \hat\eta_t f(Z_t, A_t) \left( h(W_t,A_t) - \indicator{E_t=A_t} \hat Y_t \right) \,,
\end{align*}
where $\hat\eta_t$ and $\hat Y_t$ are estimated by plugging in the preceding nuisance estimators (in the ordering described above).
Following \citet{bennett2023variational}, the continuum of moment conditions $\{q: \e_{\pb}U^{(q,t)}(q,g)=0 \ \forall g\}$ or $\{h: \e_{\pb}U^{(h,t)}(h,f)=0 \ \forall f\}$ can be efficiently solved using a regularized, variational reformulation of the optimally weighted generalized method of moments \citep{hansen1982large}, known as the variational method of moments (VMM).
This gives our following proposed estimators for solving for this nuisance bridge functions:
\begin{proposition}
\label{prop:nuisance-estimators}
Our VMM estimators for the nuisance functions $q^{(1)},\ldots,q^{(H)}$ and $h^{(1)},\ldots,h^{(H)}$ take the form
\begin{align*}
    q^{(t)} &= \argmin_{q \in \mathcal Q^{(t)}} \sup_{g \in \mathcal G^{(t)}}~ \e_n[U^{(q,t)}(q,g)] - \frac{1}{4} \e_n[U^{(q,t)}(\tilde q_t, g)^2] + \mathcal R^{(q,t)}(q) - \mathcal R^{(g,t)})(g), \\
    h^{(t)} &= \argmin_{h \in \mathcal H^{(t)}} \sup_{f \in \mathcal F^{(t)}}~ \e_n[U^{(h,t)}(h,f)] - \frac{1}{4} \e_n[U^{(h,t)}(\tilde h_t, f)^2] + \mathcal R^{(h,t)}(h) - \mathcal R^{(f,t)}(f) \,,
\end{align*}
and can be sequentially solved for in the order $q^{(1)}$ through $q^{(H)}$ then $h^{(H)}$ through $h^{(1)}$,
where $\mathcal Q^{(t)}$ and $\mathcal H^{(t)}$ are hypothesis classes for the functions $q^{(t)}$ and $h^{(t)}$ respectively, $\mathcal G^{(t)}$ and $\mathcal F^{(t)}$ are some critic function classes corresponding to the set of moments we are enforcing, $\mathcal R^{(q,t)}$, $\mathcal R^{(g,t)}$, $\mathcal R^{(h,t)}$, and $\mathcal R^{(f,t)}$ are regularizers, and $\tilde q^{(t)}$ and $\tilde h^{(t)}$ are some prior estimates of $q^{(t)}$ and $h^{(t)}$ which are arbitrarily defined and need not necessarily be consistent.
\end{proposition}

There are many existing methods for solving empirical minimax equations of these kinds for different kinds of function classes $\Qcal^{(t)}$ and $\Hcal^{(t)}$, as well as different kinds of corresponding critic classes $\Gcal^{(t)}$ and $\Fcal^{(t)}$. In particular, in \cref{apx:nuisance-estimation} we provide a detailed derivation and description of an efficient process for solving these equations when the two critic classes are given by Reproducing Kernel Hilbert Spaces (RKHSs), and we regularize them using squared RKHS norm. Note that this approach is very generic, and allows for any function classes $\Qcal^{(t)}$ and $\Hcal^{(t)}$ that we can efficiently minimize convex losses over.

\section{Experiments}
\label{sec:experiments}

Finally, we present a series of experiments to demonstrate our method and theory.
We present two sets of experiments. First, we present a simple toy scenario, where we explore the behavior of the methodology and provide a ``proof of concept'' of our theory. Second, motivated by the findings of of our first experiments, we benchmark our methodology in a confounded variation of the more complex ``sepsis simulator'' environment of \citet{oberst2019counterfactual}, which is a better reflection of real application. For full details of all experiments, see our code at \url{https://github.com/CausalML/ProximalRL}.

\subsection{Experiment 1: Toy Scenario}

\subsubsection{Experimental Setup}

\begin{figure}[t]
    \centering
    \begin{tabular}{cc}
         \includegraphics[width=0.47\textwidth]{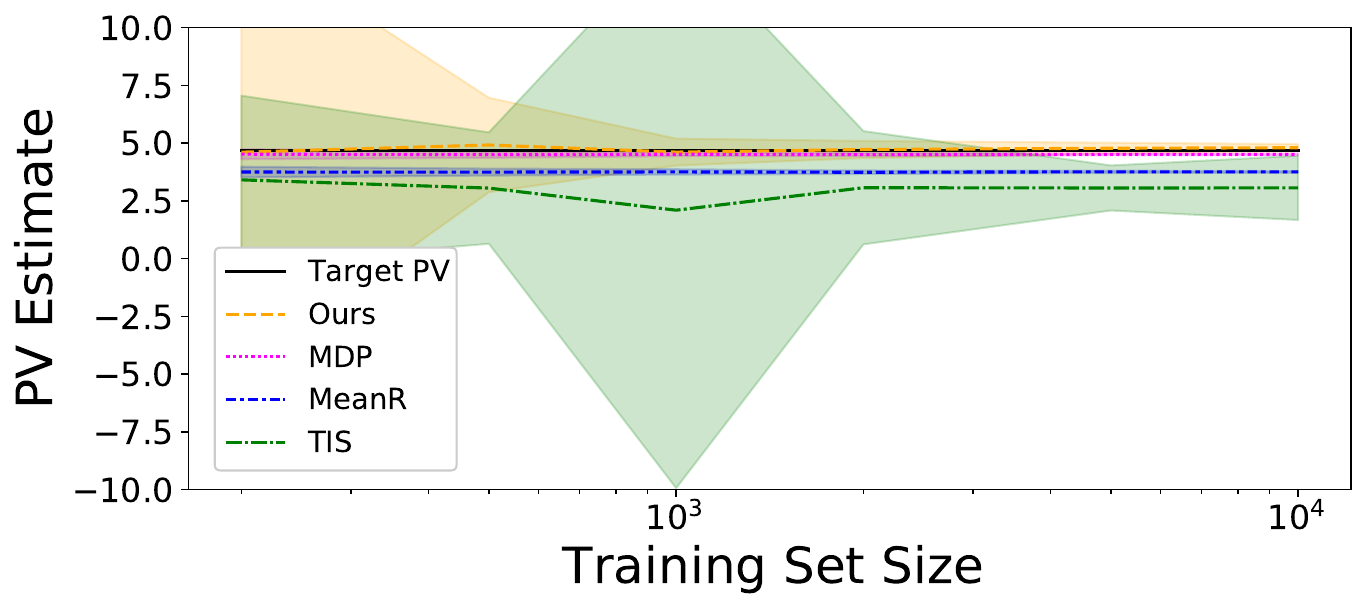} & 
         \includegraphics[width=0.47\textwidth]{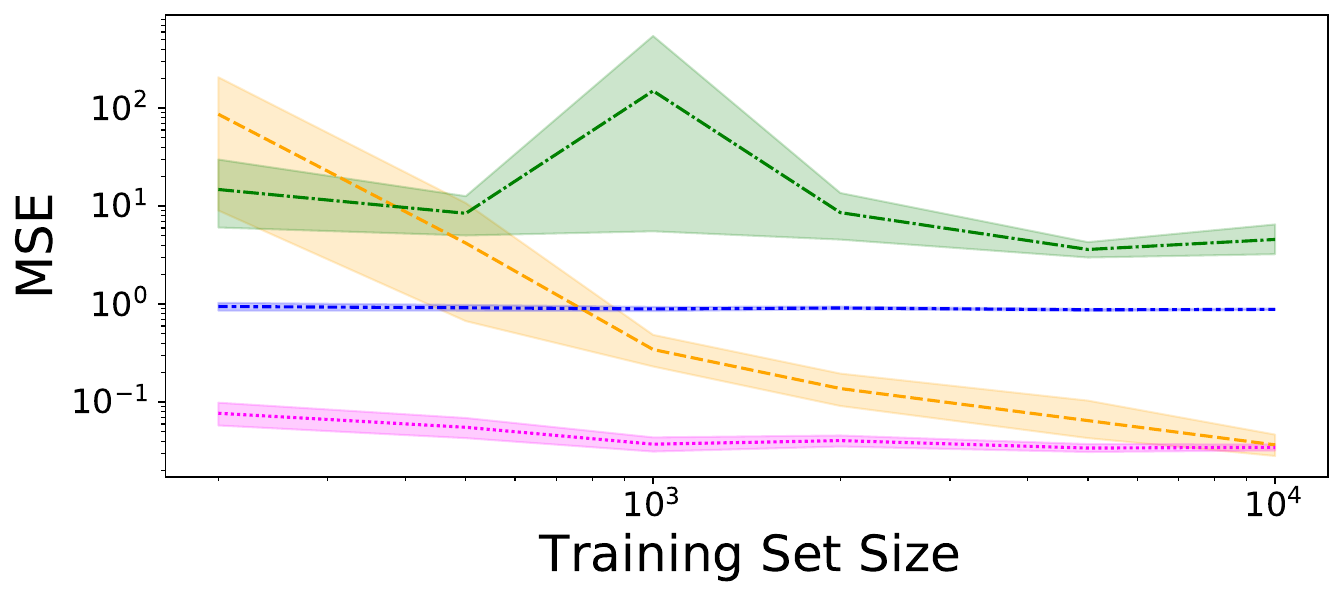} \\
         \includegraphics[width=0.47\textwidth]{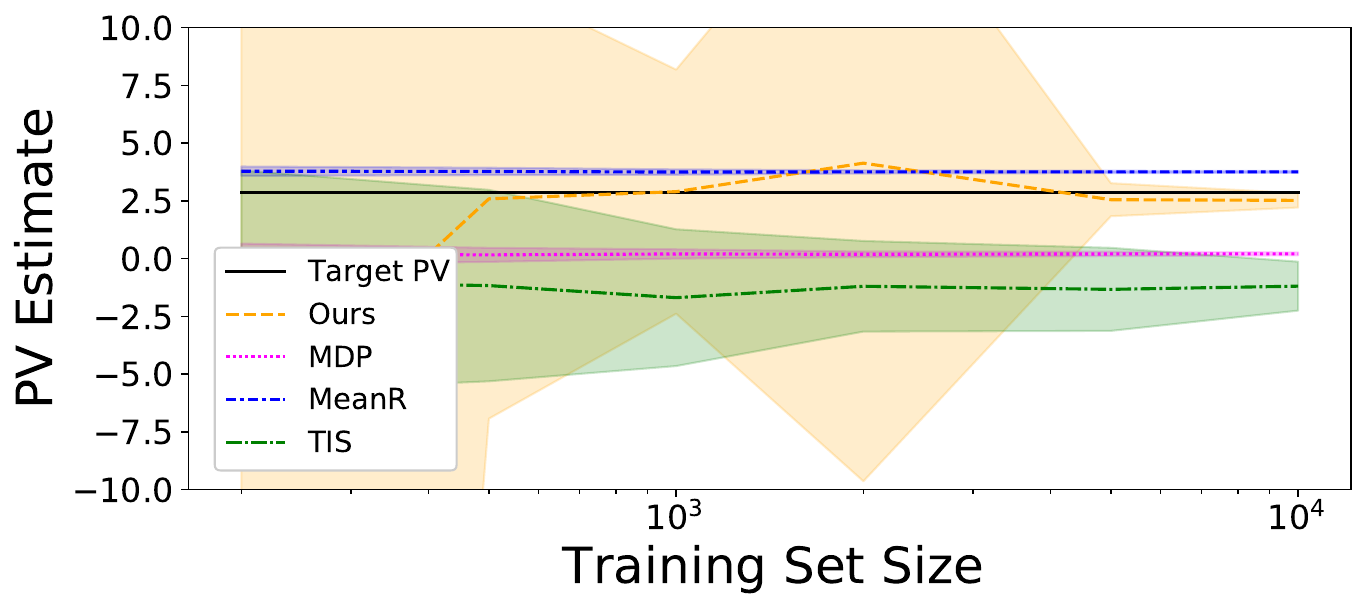} & 
         \includegraphics[width=0.47\textwidth]{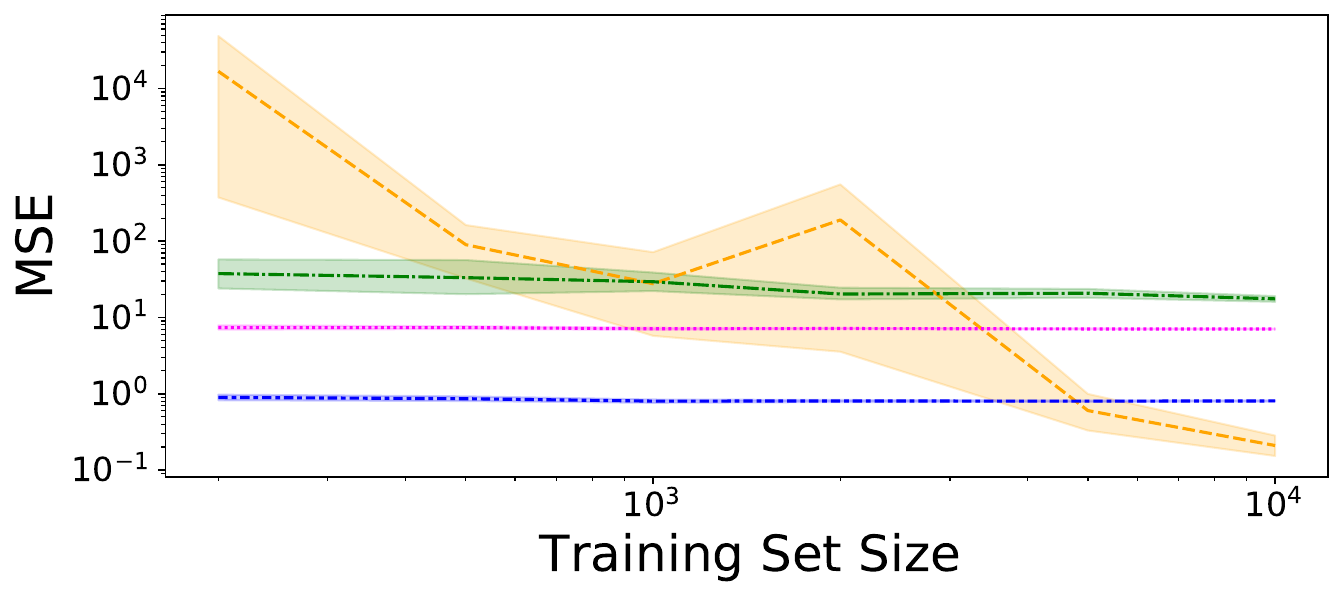} \\
         \includegraphics[width=0.47\textwidth]{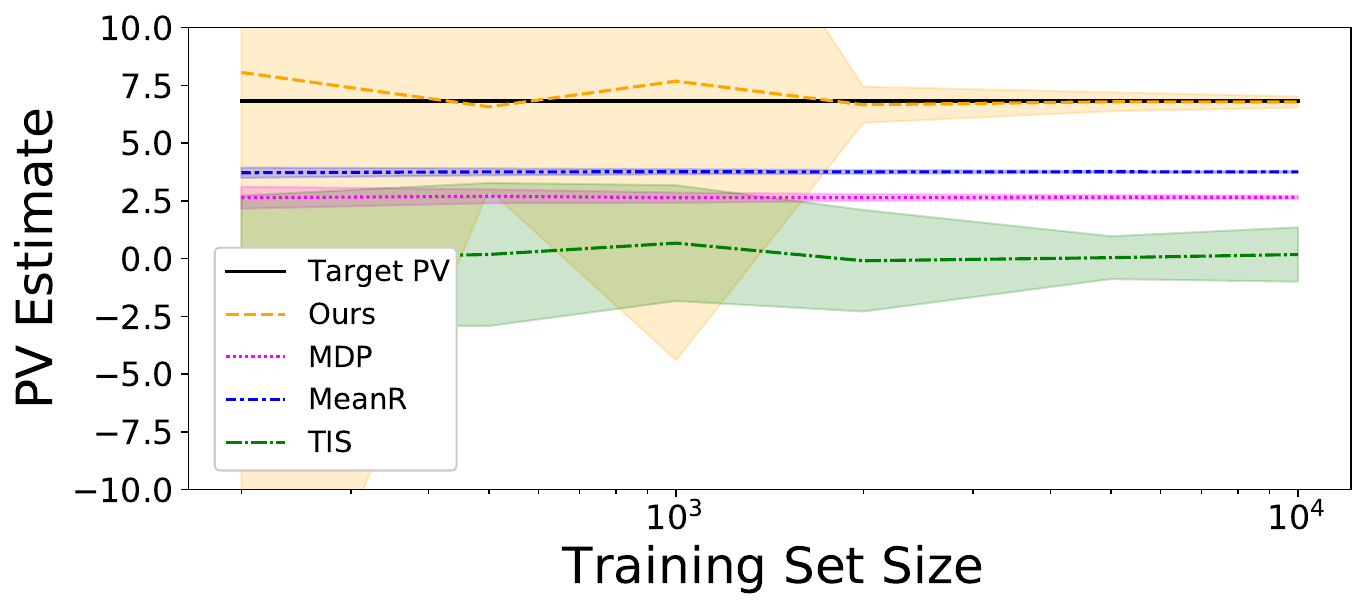}  &
         \includegraphics[width=0.47\textwidth]{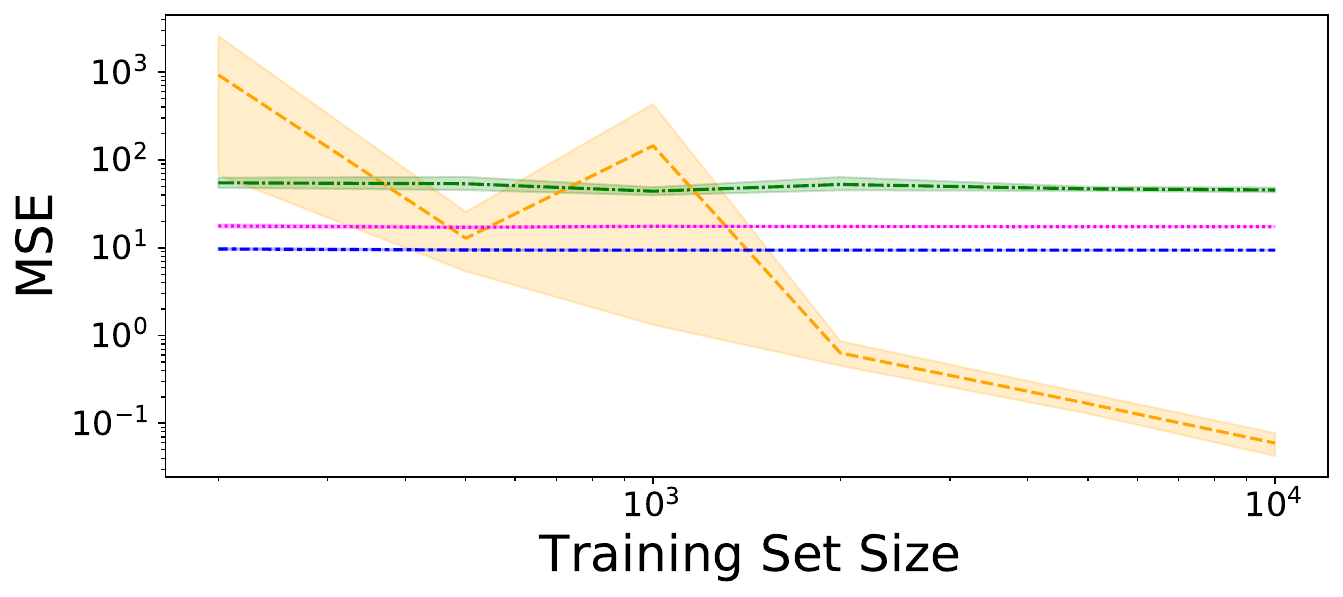}  \\
    \end{tabular}
    \caption{Results of our Proximal RL experiment on the \noisyobs environment with $\epsnoise=0.2$. In the top, middle, and bottom rows we display results for $\pieasy$, $\pihard$, and $\pioptim$ respectively. On the left we display the mean policy value estimate of each method, where the solid black line corresponds to the true policy value, and the shaded regions correspond to one standard deviations of the policy value estimates. On the right we display the corresponding mean squared error of these estimates, where the shaded regions correspond to 95\% confidence intervals for these values.}
    \label{fig:pomdp-results}
\end{figure}

For our first experiment, we consider a simple POMDP, which we refer to as \noisyobs, which is a time-homogeneous POMDP with three states, two actions, and three observation values. We denote these by $\sset = \{s_1,s_2,s_3\}$, $\aset = \{a_1,a_2\}$, and $\oset = \{o_1,o_2,o_3\}$. We detail the state transition, reward, and initial state distribution of the POMDP in \cref{apx:experiment-details}.
The observation emission process for \noisyobs is given $P_O^{(t)}(o_i \mid s_j) = \indicator{i=j}(1 - 3\epsnoise/2) + \epsilon/2$, where $\epsnoise$ is a parameter of the POMDP. This models a noisy observation of the state, since we observe the correct state with probability $1-\epsnoise$, or a randomly selected incorrect state otherwise. Thus if $\epsnoise=0$ there is no confounding, and greater $\epsnoise$ indicate more noisy measurements.

We collected logged data using a time-homogeneous behavioral policy $\pinoisyobs$, with a horizon length $H=3$.
We considered three different evaluation policies $\pieasy$, $\pihard$, and $\pioptim$, which are all also time-homogeneous and depend only on the current observation, and are detailed in \cref{apx:experiment-details}. These polices are so named because $\pieasy$ and $\pihard$ are are designed to have high and low overlap with the logging policy respectively, and $\pioptim$ is the optimal policy when $\epsnoise$ is sufficiently small. Therefore these cover a wide range of different kinds of policies.
In all cases, we set $\gamma=1$.

We performed policy evaluation with the following methods:
(1) \textbf{Ours} is the efficient estimator discussed in \cref{sec:estimation}, with nuisance estimation performed using the sequential procedure described in \cref{sec:nuisnace};
(2) \textbf{MeanR} is a naive unadjusted baseline given by $\frac{1}{n} \sum_{i=1}^n \sum_{t=1}^H \gamma^t R_t^{(i)}$;
(3) \textbf{MDP} is a model-based baseline given by fitting a tabular MDP to the observed data, treating the observations as states, and computing the value of $\pi_e$ on this model; and
(4) \textbf{TIS} is a baseline based on the result in \cref{thm:tennenholtz}, with estimated plugged-in nuisances and replacing the expectation under $\pind$ with its empirical analogue.
We provide more detail about each of these methods in \cref{apx:experiment-details}. In the case of our method, we used a simplified version of the ``current and previous observation'' PCI reduction given by the first row of \cref{tab:pci-reductions}, where $Z_t=O_{t-1}$ and $W_t=O_t$, which is valid since we are considering evaluation policies that only depend on $O_t$.

\subsubsection{Results}

We now present results policy evaluation for for the above scenario and policies, using both our method and the above benchmarks. Specifically, for each $n \in \{200,500,1000,2000,5000,10000\}$, $\pi_e \in \{\pieasy, \pihard, \pioptim\}$, and $\epsilon \in \{0, 0.2\}$ we repeated the following process $100$ times: (1) we sampled $n$ trajectories with horizon length $H=3$, behavior policy $\pinoisyobs$ and noise level $\epsnoise=\epsilon$; and (2) estimated $v_1(\pi_e)$ using these $n$ trajectories for each method. 

In \cref{fig:pomdp-results} we display results for the confounded case where $\epsnoise=0.2$ (\ie, POMDP setting). Here, we see that our method is consistent, while the \textsc{MDP} method, which is only designed to work in MDP settings, is not. The only exception is for estimating the value of $\pieasy$, however this is only because \textsc{MDP} just happens to have very small bias for estimating this policy. 
While our method is consistent, it does have more variance than the \textsc{MDP} benchmark as it tackles a much more complex estimation problem.
As expected, the unadjusted \textsc{MeanR} benchmark is inconsistent as it only estimates the value of the logging policy. Finally, despite our identification theory in \cref{sec:ident-ind}, the \textsc{TIS} method in general performs very poorly. This is unsurprising, since as discussed in \cref{sec:ident-ind} the identification result (as an expectation over $\pind$) may not lend itself to good estimation by plugging in empirical estimates into the identification formula.
For comparison, in \cref{apx:experiment-details}, we present additional results for the \emph{un}confounded case, $\epsnoise=0$ (\ie, MDP setting), where we see that the \textsc{MDP} baseline becomes consistent due to the absence of confounding and that our method remains consistent and has less variance than in the POMDP setting shown here but still more than the \textsc{MDP} baseline, which is expected as it still solves a more complex estimation problem in order to adapt to both the MDP and POMDP settings.

\subsection{Experiment 2: Sepsis Management}
\label{sec:experiments-sepsis}

\subsubsection{Experimental Setup}
Next, we consider a more ``real world''-inspired scenario. Specifically, we consider a scenario based on the sepsis management simulator of \citet{oberst2019counterfactual}. Their environment considers the active management of sepsis for patients, whose state is described by heart rate, blood pressure, oxygen concentration, glucose level, and whether the patient is diabetic. At each time step, the action taken consists of three binary components: whether to place the patient on/off antibiotics, whether to place them on/off vasopressors, and whether to place them on/off a ventilator, giving a total of 8 unique actions. After taking each action, we receive a reward based on the number of components of the state taking values within safe ranges, with a maximum reward of $1$ if all indicators are safe and the patient is off all three treatments, and a minimum reward of $-10$ if three more more indicators are unsafe, with various intermediate values. The system uses almost identical parameters as in \citet{oberst2019counterfactual} with some minor modifications, and we provide a more detailed description in the appendix.

In order to introduce confounding, we only observe a censored version of the state; for each patient, with $25\%$ probability we do not observe whether or not that patient is diabetic (\emph{i.e.} in all observations for that patient the ``diabetic'' indicator is set to ``False'' regardless of whether the patient is diabetic or not). That is, the true state contains both an indicator of whether the patient is diabetic or not and whether their diabetes status is censored, but for the observed state we instead only observed a possibly censored diabetes indicator. Since all other components of the state are discrete, this means that both state and observation spaces are discrete (\emph{i.e.} tabular), with a total state space size of $|\Scal| = 2880$, and observation space size of $|\Ocal| = 1440$.

We experimented on this scenario over a time horizon of $H=3$ and a discount factor of $\gamma=1$. We first constructed our behavioral policy $\pi_b$ by computing the optimal policy in the true POMDP $\pi^\star$, and defining $\pi_b$ by introducing $\epsilon$-greedy sampling to $\pi^\star$ with $\epsilon=0.1$; that is, we defined $\pi_b = 0.9 \pi^\star + 0.1 \pi_{\textup{unif}}$, where $\pi_{\textup{unif}}$ is a policy that takes all 8 actions with equal probability. Then, we sampled 10,000 observational trajectories using $\pi_b$, and defined $\pi_e$ to be the predicted optimal policy fit on these trajectories using dynamic programming on a simple count-based tabular MDP model, treating the observations $O_t$ as the true states $S_t$. Note that since the observations $O_t$ are confounded, we expect that $\pi_e$ should \emph{not} necessarily be an estimate of the actual optimal policy $\pi^\star$.

Next, given the fixed policies $\pi_b$ and $\pi_e$ coming from the first stage of the experiment, we repeated the following procedure $50$ times: (1) we sampled 10,000 observational trajectories using $\pi_b$; and (2) we estimated $v_1(\pi_e)$ using those trajectories as input for all methods. We performed policy evaluation with our method, as well as the \textsc{MeanR} and \textsc{MDP} benchmarks, as in the previous experiment. In the case of our method, we experimented with a large range of hyperparameter values, as detailed in the appendix. In addition, we used the proxies $Z_t=(G_{t-1},X_t)$ and $W_t=(G_t,X_t)$, where $O_t = (G_t,X_t)$ is a partition of the observation into information about diabetes ($G_t$) and non-diabetes information ($X_t$); see appendix for more details.

Finally, since we had observed in our prior experiments that our method could be sensitive to hyperparameter values, and also since we lack ground truth so cannot set these ``fairly'' using \emph{e.g.} cross-validation, we experimented with the following heuristic procedure automatic hyperparameter selection: (1) we first estimate the policy value using all 81 different possible hyperparameter values; (2) we throw away all estimates that take values outside of the range of observed reward values; and (3) we take the median of the remaining estimates. This heuristic is based on the observation from our prior experiments that, as long as hyperparameter values are within reasonable ranges, our method typically gives estimates that are either fairly accurate, or wildly out-of-bound. We estimated policy value using this heuristic separately for each of the $50$ experimental replications.

\subsubsection{Results}

\begin{table}[t]
    \centering
    \begin{tabular}{ccccc}
        \hline
         Method & $\hat v_1(\pi_e)$ & Bias & RMSE & Improvement Acc.  \\
         \hline
         \textsc{Ours} (best hyper.) & $-2.370 \pm 0.597$ & $-0.096$ & $0.599$ & 82\% \\
         \textsc{Ours} (auto hyper.) & $-2.459 \pm 0.182$ & $-0.184$ & $0.258$ & 100\% \\
         \textsc{MDP} & $-1.261 \pm 0.054$ & $1.014$ & $1.015$ & 0\% \\
         \textsc{MeanR} & $-1.799 \pm 0.025$ & $0.476$ & $0.477$ & --- \\
        \hline
    \end{tabular}
    \caption{Results of our Proximal RL experiments on the sepsis management environment. For reach method, we list the average policy value prediction (with one standard deviation error), along with the empirical bias and root mean squared error. In addition, for each method other than \textsc{MeanR}, we list the method's accuracy of predicting whether $v_1(\pi_e) > v_1(\pi_b)$ or not. For reference, the true policy values were $v_1(\pi_e) = -2.275$ and $v_1(\pi_b) = -1.799$.}
    \label{tab:sepsis-results}
\end{table}

We present the main results of this second experiment in \cref{tab:sepsis-results}.
There we present results for our method with the single best set of hyperparameters out of all tested (in terms of mean squared error across the 50 replications), as well using the automatic hyperparameter selection heuristic described above.
We can first observe that using the single best hyperparameter setup gives policy value predictions that are approximately unbiased, but with very high variance. Qualitatively, this variance seems to be partially explained by unstable predictions in a minority of cases.
On the other hand, our automatic hyperparameter heuristic gives estimates results in slightly higher bias, but much lower variance, and therefore much lower mean squared error.
This strong performance of our heuristic versus choosing the best single set of hyperparameters is extremely encouraging, since unlike picking a ``best'' hyperparameter combination, the heuristic is actually feasible in practice, as it does not require any ground truth information for hyperparameter selection.
Finally, as in the prior experiments, the benchmark methods, which either do not take into account confounding (\textsc{MDP}), or are completely non-causal (\textsc{MeanR}), both give extremely biased estimates with low variance.

Next, we note that in practice we are often more concerned about predicting whether $\pi_e$ is an improvement on $\pi_b$ or not, rather than the exact policy value of $\pi_e$.
Accurately answering this question is important in many applications, where the baseline policy $\pi_b$ reflects current best practices or business as usual, and $\pi_e$ represents a proposed new policy. For example, here we could think of $\pi_b$ representing how physicians currently manage sepsis, and $\pi_e$ as a proposed automated algorithm for sepsis management. We have $v_1(\pi_e) \approx -2.275$ and $v_1(\pi_b) \approx -1.799$, so we would like any method of policy evaluation to be able to correctly predict that the new proposed algorithm ($\pi_e$) is worse than standard physician care ($\pi_b$).
Specifically, we evaluate each method by what percentage of the time the policy value estimate is smaller than the observational mean reward (\textsc{MeanR}), as the latter is an unbiased estimate of $v_1(\pi_b)$.
We list these results in the final column in \cref{tab:sepsis-results}. We note that our method with the best hyperparameters usually correctly predicts that $\pi_e$ is worse than $\pi_b$, and with our automatic hyperparameter selection heuristic this prediction is \emph{always} correct. On the other hand, the \textsc{MDP} benchmark, which fails to take into account confounding from the censored diabetes measurements, always incorrectly predicts that $\pi_e$ is an improvement on $\pi_b$.

\section{Conclusion}

In this paper, we discussed the problem of OPE in an unknown POMDP as a model for the problem of offline RL with general unobserved confounding. 
First, we analyzed the recently proposed approach for identifying the policy value for tabular POMDPs \citep{tennenholtz20off}. We showed that while it could be placed within a more general framework and extended to continuous settings, it suffers from some theoretical limitations due to the unusual form of the identification formulation, which brings its usefulness for constructing estimators with good theoretical properties into question.
Motivated by this, we proposed a new framework for identifying the policy value, by sequentially reducing the problem to a series of proximal causal inference problems.
Furthermore, we extended this identification framework to a framework of estimators based on double machine learning and cross-fitting \citep{chernozhukov2016double}, and showed that under appropriate conditions such estimators are asymptotically normal and semiparametrically efficient.
Finally, we constructed a concrete algorithm for implementing such an estimator,
and provided an empirical proof of concept of our theory by applying algorithm in a toy synthetic setting with confounding due to noisy measurements, as well as a complex spepsis management setting with confounding due to missing measurements of diabetes.

Perhaps the most significant scope for future work on this topic is in the development of more practical algorithms. 
Indeed, although our experiments were only intended as a proof of concept of our methods and theory, they also show that our actual proposed estimators can often have high variance even in a simple toy POMDP with a moderate number (\eg, 1000) of trajectories. 
There may be ways to improve on this; for example it may be beneficial to solve the conditional moment problems defining the $q^{(t)}$ and $h^{(t)}$ functions simultaneously rather than sequentially as we proposed, which may result in cascading errors.
Another important topic for future work would be to explore hyperparameter optimization strategies, such as the heuristic method we proposed for our sepsis experiments; although we found this heuristic worked well empirically, it may introduce other challenges such as dealing with post-selection inference.

Another area where there is significant scope for future work is on the topic of semiparametric efficiency.
Extending our model to allow for multiple nuisances, in a way where the parameter of interest is still well-defined, is an important open challenge. Additional issues are discussed in \cref{apx:tangent-space}.

Finally, in terms of future work, there is the problem of how to actually apply our theory as well as policy value estimators in real-world sequential decision making problems involving unmeasured confounding. Although our work is largely theoretical, we hope that it will be impactful in motivating progress toward solving such real-world challenges in practice.

\bibliography{ref}
\newpage
\appendix

\section{Identification by Time-Independent Sampling}
\label{apx:tennenholtz}

In this appendix, we present a general identification result given by \cref{thm:identification-ind}. Then, we present a specialization of this result to the discrete setting in \cref{lem:ind-tabular}. We do not provide a separate proof of \Cref{thm:tennenholtz}, since it follows immediately from \cref{lem:ind-tabular}. We also note that in this section we will use the notation $Z_t = o(D_t)$, $W_t = o'(D_t)$, and $X_t = o''(D_t)$, which is not to be confused with the $Z_t$ and $W_t$ notation used in our PCI identification theory.

First, before we present these results, we establish the following completeness assumption which they depend on, and is the missing technical assumption referenced by \cref{thm:tennenholtz}.

\begin{assumption}[Completeness]
\label{asm:completeness-ind}
For each $t \in \{1,\ldots,H\}$ and $a \in \mathcal A$, if $\e_{\pb}[g(S_t) \mid O_t, A_t=a] = 0$ almost surely for some function $g$, then $g(S_t) = 0$ almost surely.
\end{assumption}

This assumption is fundamental to this identification approach, and essentially requires that $O_t$ captures all degrees of variation in $S_t$. In the case that states and observations are finite, it is necessary that $O_t$ have at least as many categories as $S_t$ for this condition to hold. Given this, we are ready to present our first identification result.

\begin{theorem}
\label{thm:identification-ind}
    Let \cref{asm:completeness-ind} hold, and suppose that for each $t \in \{1,\ldots,H\}$ there exists a function $\rho^{(t)}: \mathcal S \times \mathcal A \times \mathcal S \mapsto \mathbb R$, such that for every measure $f$ on $W_t$ that is absolutely continuous with respect to $\pb$ and every $a \in \aset$, we have almost surely
    \begin{equation}
    \label{eq:q-ind}
        \e \left[ \int \rho^{(t)}(Z_t,A_t,x) df(x) \ \middle| \ W_t, A_t=a \right] = P(A_t=a \mid W_t)^{-1} \left(\frac{d f}{d \pb}\right)(W_t) \,,
    \end{equation}
    where $df / d\pb$ denotes the Radon-Nikodym derivative of $f$ with respect to $\pb$. Then, for each $s \in \{1, \ldots, H\}$ we have
    \begin{equation*}
        \epe[R_s] = \e_{\pind}\left[ R_s \prod_{t=1}^s \indicator{A_t=E_t} \rho^{(t)}(Z_t,A_t,X_{t-1}) \right]
    \end{equation*}
\end{theorem}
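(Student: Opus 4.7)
The plan is to show that $V_s := \mathbb{E}_{\pind}\!\left[R_s \prod_{t=1}^s \indicator{A_t = E_t} \rho^{(t)}(Z_t, A_t, X_{t-1})\right]$ equals the oracle importance-sampling expression
\[
    \mathbb{E}_{\pb}\!\left[R_s \prod_{t=1}^s \frac{\indicator{A_t = E_t}}{\pi_b^{(t)}(A_t \mid S_t)}\right],
\]
which in turn equals $\ee[R_s]$ by the usual change-of-measure identity for POMDPs (this applies because $\pe$ and $\pb$ share the same observation, reward, and transition kernels, differing only in the action distribution).

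The first step is to extract a usable form of \eqref{eq:q-ind}. Plugging in $f = P_{W_t}$ (the $\pb$-marginal of $W_t$), so that $df/d\pb \equiv 1$, yields
\[
    \mathbb{E}_{\pb}\!\left[\int \rho^{(t)}(Z_t, A_t, x)\,dP_{W_t}(x) \,\Big|\, W_t, A_t = a\right] \;=\; \frac{1}{P_{\pb}(A_t = a \mid W_t)}.
\]
Using the POMDP conditional independencies $Z_t \indep (W_t, A_t) \mid S_t$ (which follow from $O_t \indep \tau_{t-1}^{\textup{full}} \mid S_t$ and $A_t \indep \tau_{t-1}^{\textup{full}} \mid S_t$) together with Bayes' rule and $A_t \indep W_t \mid S_t$, I would verify directly that $1/\pi_b^{(t)}(A_t \mid S_t)$ also satisfies the same $(W_t, A_t)$-conditional moment restriction. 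By the injectivity of the operator $g \mapsto \mathbb{E}_{\pb}[g(S_t) \mid W_t, A_t = a]$ provided by \cref{asm:completeness-ind}, this yields the strengthened latent-state identity
\[
    \mathbb{E}_{\pb}\!\left[\int \rho^{(t)}(Z_t, A_t, x)\,dP_{W_t}(x) \,\Big|\, S_t, A_t\right] \;=\; \frac{1}{\pi_b^{(t)}(A_t \mid S_t)}.
\]

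Next, I would evaluate $V_s$ by peeling off one $\rho^{(t)}$ factor at a time. The structural facts I rely on are that under $\pind$ the blocks $\Omega_0, \ldots, \Omega_H$ are mutually independent, with each drawn from its $\pb$-marginal, and that the factor $\rho^{(t)}(Z_t, A_t, X_{t-1})$ couples $\Omega_t$ with $\Omega_{t-1}$ only through $X_{t-1}$. Introducing latent states $S_0, \ldots, S_s$ within each $\pb$-marginal and integrating sequentially, at each step I would apply the strengthened bridge identity to replace the $X_{t-1}$-integrated form of $\rho^{(t)}$ by the latent-state IPW weight $1/\pi_b^{(t)}(A_t \mid S_t)$. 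After all substitutions the integrand matches the oracle IPW formula, and the result follows.

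The main obstacle will be handling the coupling between blocks: within each $\Omega_{t-1}$, the variable $X_{t-1} = O_t$ is jointly distributed with the other components $Z_{t-1}, W_{t-1}, A_{t-1}, R_{t-1}$ under $\pb$ rather than independently of them, so $X_{t-1}$ cannot naively be integrated against its marginal $P_{W_t}$ as the bridge identity requires. Overcoming this requires exploiting the POMDP Markov structure -- specifically, that conditional on $S_t$ the observation $X_{t-1} = O_t$ is independent of the remaining components of $\Omega_{t-1}$ and distributed according to $P_O^{(t)}(\cdot \mid S_t)$ -- and then carefully ordering the iterated expectations so that marginalizing over the latent states in the correct sequence recovers the integration against $P_{W_t}$ needed to invoke the strengthened bridge identity. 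This delicate interleaving of the independent-block structure of $\pind$ with the Markov dynamics of $\pb$, routed through the bridge identities, is the technical heart of the argument.
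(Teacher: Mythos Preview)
Your overall route (reduce $V_s$ to the oracle importance-sampling formula under $\pb$, using completeness to upgrade the bridge equation to an $S_t$-conditional identity) is sound in spirit, and the first peeling step works exactly as you describe: since $X_0$ lives alone in $\Omega_0$ with marginal $P_{W_1}$, you may integrate it out, apply your strengthened identity with $f=P_{W_1}$, and replace $\rho^{(1)}$ by $1/\pi_b^{(1)}(A_1\mid S_1)$.

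The gap is at the second (and every subsequent) step. After step one the integrand carries the weight $\indicator{A_1=E_1}/\pi_b^{(1)}(A_1\mid S_1^{(1)})$ on block $\Omega_1$, and $X_1$ is correlated with $(S_1^{(1)},W_1,A_1)$ inside that block. Under this reweighting the effective law of $X_1$ is the distribution of $O_2$ after intervening $A_1=\pi_e^{(1)}(O_1)$, i.e.\ the $\ps{2}$-marginal of $W_2$, \emph{not} $P_{W_2}$. No ordering of iterated expectations over the latent states recovers integration of $X_1$ against $P_{W_2}$: marginalizing $S_2^{(1)}$ first gives $P(O_2\mid S_1,A_1)$; marginalizing $S_1^{(1)},A_1$ as well is blocked because $W_1$ (needed for $E_2,\dots,E_s$) and the step-one weight still depend on them. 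Consequently your single-$f$ bridge identity cannot be invoked beyond $t=1$, and the recursion stalls.

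The paper's proof addresses exactly this by using the bridge equation \eqref{eq:q-ind} for a \emph{different} $f$ at each step---specifically, $f$ equal to the density of $X_{t-1}$ under an intermediate measure $\psind{t}$ that mixes $\ps{t+1}$ on the past with $\pind$ on the future---so that the Radon--Nikodym factor $(df/d\pb)(W_t)$ supplies the change of measure needed to advance the recursion. Your argument is easily repaired in the same way: derive the $S_t$-conditional identity for general $f$ (the completeness step goes through verbatim), and then apply it with $f$ equal to the running intervention-marginal of $W_t$ rather than the fixed $\pb$-marginal.
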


We note that this result identifies $v_\gamma(\pi_e)$ for any given $\gamma$, since by construction $\pind$ is identified with respect to $\pb$, and this allows us to express $v_\gamma(\pi_e)$ as a function of $\pind$. Note that implicit in the assumptions is that $P(A_t=a \mid W_t)>0$.

We call this result a \emph{time-independent sampling} result, since it is written as an expectation with respect to $\pind$, where data at each time point is sampled independently. We note that the moment equations given by \cref{eq:q-ind} in general are very complicated, and it is not immediately clear under what conditions this equation is even solvable. In the tabular setting, we present the following lemma which provides an analytic solution to \cref{eq:q-ind} and makes clear the connection to \citet{tennenholtz20off}.

\begin{lemma}
\label{lem:ind-tabular}
Suppose that $O_t$ is discrete with $k$ categories for every $t$, and without loss of generality let the support of $O_t$ be denoted by $\{1,\ldots,k\}$. In addition, for each $t \in \{1,\ldots,s\}$ and $a \in \mathcal A$, let $Q^{(t,a)}$ denote the $k \times k$ matrix defined according to
\begin{equation*}
    Q^{(t,a)}_{x,y} = P_{\pb}(O_t=x \mid A_t=a, O_{t-1}=y) \,.
\end{equation*}
Then, assuming $Q^{(t,a)}$ is invertible for each $t$ and $a$, \cref{eq:q-ind} is solved by
\begin{equation*}
    \rho^{(t)}(z,a,x) = \frac{((Q^{(t,a)})^{-1})_{z,x}}{P(O_{t-1}=z, A_t=a)} \,.
\end{equation*}
Furthermore, plugging this solution into the identification result of \cref{thm:identification-ind} is identical to Theorem 1 of \citet{tennenholtz20off}.
\end{lemma}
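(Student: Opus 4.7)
The plan is to verify both claims by direct computation. The first claim (that the stated $\rho^{(t)}$ solves the moment equation \cref{eq:q-ind}) is essentially a one-line matrix identity after an application of Bayes' rule; the second claim is an exercise in algebraic matching.

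For the first claim, I will exploit the fact that both sides of \cref{eq:q-ind} are linear functionals of $f$. In the discrete setting it therefore suffices to verify the equation on the spanning family of point masses $\{\delta_{w'} : P(O_t = w') > 0\}$. For $f = \delta_{w'}$ one has $(df/d\pb)(w) = \mathds{1}[w = w']/P(O_t = w')$ and $\int \rho^{(t)}(z, a, x)\, df(x) = \rho^{(t)}(z, a, w')$, so the equation reduces to
\[
\sum_z P(O_{t-1} = z \mid O_t = w, A_t = a)\, \rho^{(t)}(z, a, w') \;=\; \frac{\mathds{1}[w = w']}{P(A_t = a \mid O_t = w)\, P(O_t = w')}.
\]
Bayes' rule yields $P(O_{t-1} = z \mid O_t = w, A_t = a) = Q^{(t,a)}_{w,z}\, P(O_{t-1} = z, A_t = a) / P(O_t = w, A_t = a)$. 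Substituting the proposed $\rho^{(t)}$ cancels the $P(O_{t-1} = z, A_t = a)$ factor, and the remaining sum collapses via $\sum_z Q^{(t,a)}_{w, z} ((Q^{(t,a)})^{-1})_{z, w'} = \mathds{1}[w = w']$ to give $\mathds{1}[w = w']/P(O_t = w, A_t = a)$. Factoring $P(O_t = w, A_t = a) = P(O_t = w)\, P(A_t = a \mid O_t = w)$ and using the indicator to replace $w$ by $w'$ in $P(O_t = w)$ matches the right-hand side.

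For the second claim, I substitute the explicit $\rho^{(t)}$ into the identification formula of \cref{thm:identification-ind} and expand the outer expectation as an iterated sum over the independent marginals of $\Omega_t = (O_{t-1}, O_t, O_{t+1}, A_t, R_t)$ under $\pind$. The normalization $P(O_{t-1} = z, A_t = a)$ in the denominator of $\rho^{(t)}$ cancels against the joint marginal of $(Z_t, A_t)$ under $\pind$, leaving a product of matrix entries $((Q^{(t,A_t)})^{-1})_{O_{t-1}, O_t}$ (using $X_{t-1} = O_t$) together with observation marginals, reward conditionals, and the indicators $\mathds{1}[A_t = E_t]$. Rewriting the summation over the intermediate observations in matrix-vector form produces the telescoped matrix product appearing in Theorem 1 of \citet{tennenholtz20off}. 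The main obstacle is the index bookkeeping in this second part: a single value of $O_t$ appears as $X_{t-1}$ in $\Omega_{t-1}$, as $W_t$ in $\Omega_t$, and as $Z_{t+1}$ in $\Omega_{t+1}$, yet these are independent copies under $\pind$, so one must carefully track which copy participates in which matrix factor, and align the row/column conventions with Tennenholtz's presentation.
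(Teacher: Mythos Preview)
Your proposal is correct and follows essentially the same route as the paper's proof: for the first claim, the paper also reduces to the Bayes' rule identity $P(O_{t-1}=z\mid O_t=w,A_t=a)=Q^{(t,a)}_{w,z}P(O_{t-1}=z,A_t=a)/P(O_t=w,A_t=a)$ and then invokes $\sum_z Q^{(t,a)}_{w,z}((Q^{(t,a)})^{-1})_{z,w'}=\indicator{w=w'}$, the only cosmetic difference being that they verify the identity for general $f$ rather than on the spanning family of point masses. For the second claim, the paper performs the same algebraic matching but in the reverse direction, starting from Tennenholtz's matrix product $\omega=\prod_t\Xi_t$, expanding it into an explicit sum over $(o_{1:s},a_{1:s},z_{1:s},x_{0:s-1})$, and then recognizing the resulting product of joint probabilities as the density of $\pind$ and the factors $(Q^{(t,a_t)})^{-1}_{z_t,x_{t-1}}/P(O_{t-1}=z_t,A_t=a_t)$ as $\rho^{(t)}$; your caution about tracking which independent copy of $O_t$ feeds which matrix factor is exactly the bookkeeping the paper carries out.
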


We also note that, in the case that the matrices $Q^{(t,a})$ defined above are invertible, it easily follows that \cref{asm:completeness-ind} holds, as long as $S_t$ has no more than $k$ categories.

\begin{proof}[Proof of \cref{thm:identification-ind}]

We will prove this result for arbitrary fixed $s \in [H]$. Define
\begin{align*}
    Y_s &= R_s \\
    Y_t &= \phi^{(t+1)}(Z_{t+1}, A_{t+1}, W_{t+1}, E_{t+1}, X_t, Y_{t+1}) \qquad \forall t \in [s-1] \,,
\end{align*}
where
\begin{equation*}
    \phi^{(t)}(z,a,w,e,x,y) = \rho^{(t)}(z,a,x) \indicator{a=e} y\,.
\end{equation*}

Now, by these definitions we need to prove that
\begin{equation*}
    \epe[R_s] = \e_{\pind}[Y_0]\,.
\end{equation*}
where $Y_0 = \phi^{(1)}(Z_1,A_1,W_1,E_1,X_0,Y_1)$.

We will proceed via a recursive argument. In order to set up our key recursion, we first define some additional notation. First, let $\ps{t}$ denote the intervention distribution introduced in \cref{sec:ident-pci}, and let $\psind{t}$ denote the measure on $D_{1:H}$ defined by a mixture between $\ps{t+1}$ and $\pind$, where
\begin{enumerate}
    \item $\{W_{1:t-1}\}$, $\{X_{1:t-1}\}$ $\{A_{1:t-1}\}$, and $\{R_{1:t-1}\}$ are jointly sampled from $\ps{t}$
    \item $\{Z_1,\ldots,Z_{H}\}$, $\{W_{t},\ldots,W_{H}\}$, $\{X_{t},\ldots,X_{H}\}$ $\{A_{t},\ldots,A_{H}\}$, and $\{R_{t},\ldots,R_{H}\}$ are jointly sampled from $\pind$.
\end{enumerate}

Given this setup, the inductive relation we would like to prove is
\begin{equation}
\label{eq:ind-recursion}
    \e_{\psind{t}}[\phi^{(t)}(Z_t,A_t,W_t,E_t,X_{t-1},Y_t)] = \e_{\psind{t+1}}[Y_t] \qquad \forall t \in [s]
\end{equation}

We note that if \cref{eq:ind-recursion} holds, then via chaining this relation and the recursive definitions of $Y_t$, we would instantly have our result, since $\e_{\psind{1}}[\phi^{(1)}(Z_1,A_1,W_1,E_1,W_0,Y_1)] = \e_{\psind{1}}[Y_0] = \e_{\pind}[Y_0]$, and $\e_{\psind{s+1}}[R_s] = \epe[R_s]$. Therefore, it only remains to prove that \cref{eq:ind-recursion} holds.

Next, by the assumption on $\phi^{(t)}$ in the theorem statement, we have
\begin{align*}
    &\e_{\psind{t}}\left[\rho^{(t)}(Z_t,A_t,X_{t-1}) \left(\frac{d \pb}{d \psind{t+1}}\right)(W_t) \ \middle| \ W_t, A_t=a\right] \\
    &= \left(\frac{d \pb}{d \psind{t+1}}\right)(W_t) \e_{\psind{t}}\left[ \int_x f_{t-1}(x) \rho^{(t)}(Z_t,A_t,x) \ \middle| \ W_t, A_t=a\right] \\
    &= \left(\frac{d \pb}{d \psind{t+1}}\right)(W_t) \left(\frac{d \psind{t+1}}{d \pb}\right)(W_t) P(A_t=a \mid W_t)^{-1} \\
    &= P(A_t=a \mid W_t)^{-1} \,,
\end{align*}
where in this derivation $f_{t-1}$ denotes the density of $X_{t-1}$ under $\psind{t}$, which we note is the same as the density of $W_t$ under $\psind{t+1}$. Given this, applying the independence assumptions of our POMDP framework we have
\begin{align*}
    &\e_{\psind{t}}[P(A_t=a \mid S_t)^{-1} \mid W_t, A_t=a] \\
    &= \e_{\psind{t}}[P(A_t=a \mid S_t, W_t)^{-1} \mid W_t, A_t=a] \\
    &= \int_s \frac{P(S_t = s \mid W_t, A_t=a)}{P(A_t=a \mid W_t, S_t=s)} ds \\
    &= \int_s \frac{P(A_t=a \mid W_t, S_t=s) P(S_t=s \mid W_t)}{P(A_t=a \mid W_t, S_t=s) P(A_t=a \mid W_t)} ds \\
    &= P(A_t=a \mid W_t)^{-1} \\
    &= \e_{\psind{t}}\left[\rho^{(t)}(Z_t,A_t,X_{t-1}) \left(\frac{d \pb}{d \psind{t+1}}\right)(W_t) \ \middle| \ W_t, A_t=a\right] \\
    &= \e_{\psind{t}}\left[ \e_{\psind{t}}\left[\rho^{(t)}(Z_t,A_t,X_{t-1}) \left(\frac{d \pb}{d \psind{t+1}}\right)(W_t) \ \middle| \ S_t, A_t=a \right] W_t, A_t=a\right] \,.
\end{align*}

Given this, it then follows from \cref{asm:completeness-ind} that
\begin{equation*}
    \e_{\psind{t}}\left[\rho^{(t)}(Z_t,A_t,X_{t-1}) \left(\frac{d \pb}{d \psind{t+1}}\right)(W_t) \ \middle| \ S_t, A_t=a \right] = P(A_t=a \mid S_t)^{-1} \,,
\end{equation*}
which holds almost surely for each $a \in \aset$, and therefore also holds replacing $a$ with $A_t$.

Finally, applying this previous equation, we have
\begin{align*}
    &\e_{\psind{t}}[\phi^{(t)}(Z_t,A_t,W_t,E_t,X_{t-1},Y_t] \\
    &= \e_{\psind{t}}[\rho^{(t)}(Z_t, A_t, X_{t-1}) \indicator{A_t = E_t} Y_t] \\
    &= \e_{\psind{t}} \Bigg[ \e_{\psind{t}}\left[\rho^{(t)}(Z_t,A_t,X_{t-1}) \left(\frac{d \pb}{d \psind{t+1}}\right)(W_t) \ \middle| \ S_t, A_t, W_t, E_t, Y_t \right] \\
    &\qquad\qquad\qquad \cdot \left(\frac{d \psind{t+1}}{d \pb}\right)(W_t) \indicator{A_t = E_t} Y_t \Bigg] \\
    &= \e_{\psind{t}}\Bigg[\e_{\psind{t}}\left[\rho^{(t)}(Z_t,A_t,X_{t-1}) \left(\frac{d \pb}{d \psind{t+1}}\right)(W_t) \ \middle| \ S_t, A_t, \right] \\
    &\qquad\qquad\qquad \cdot \left(\frac{d \psind{t+1}}{d \pb}\right)(W_t) \indicator{A_t = E_t} Y_t \Bigg] \\
    &= \e_{\psind{t}}\left[ P(A_t \mid S_t)^{-1}  \left(\frac{d \psind{t+1}}{d \pb}\right)(W_t) \indicator{A_t = E_t} Y_t \right] \\
    &= \e_{\psind{t}}\left[ \sum_a \ \frac{P(A_t \mid S_t, W_t, E_t, Y_t)}{P(A_t \mid S_t)}  \left(\frac{d \psind{t+1}}{d \pb}\right)(W_t) \indicator{E_t = a} Y_t(E_t) \right] \\
    &= \e_{\psind{t}}\left[ \left(\frac{d \psind{t+1}}{d \pb}\right)(W_t) Y_t(E_t) \right] \\
    &= \e_{\psind{t+1}}[Y_t] \,,
\end{align*}
where the third and sixth equalities follow from the independence assumptions of the POMDP given $S_t$. In this derivation we use the potential outcome notation $Y_t(a)$ to denote the value $Y_t$ would have taken if we intervened on the $t$'th action with value $a$ (and the subsequent values of $X_t$ and $R_t$ are possibly changed accordingly; note that this intevention does not change the values of $Z_t$ or $W_t$ since these represent observations at time $t-1$ and $t$ respectively.) The final equality follows because replacing $Y_t$ with $Y_t(E_t)$ effectively updates the mixture distribution $\psind{t}$ so that $A_t$, $X_t$, and $R_t$ are included in the set variables sampled according to $\ps{t+1}$, rather than in the set of those sampled according to $\pind$. Furthermore, integrating over the Radon-Nikodym derivative $(d \psind{t+1} / d \pb)(W_t)$ effectively further updates the mixture distribution so that $W_t$ is also included in the set sampled according to $\ps{t+1}$, since the distribution of $W_t$ under $\pb$ is the same as the distribution of $W_t$ under $\psind{t}$. That is, these two terms effectively replace integration under $\psind{t}$ with integration under $\psind{t+1}$. This establishes \cref{eq:ind-recursion}, and therefore as discussed above the theorem follows by recursion.

\end{proof}

\begin{proof}[Proof of \cref{lem:ind-tabular}]

First we establish the required property of this definition of $\rho^{(t)}$. Since observations are tabular, the required property is equivalent to
\begin{align*}
    \e\left[\sum_{x \in \mathcal O} f(x) \rho^{(t)}(Z_t, A_t, x) \ \middle| \ W_t, A_t=a \right] &= \frac{f(W_t)}{P(W_t)} P(A_t \mid W_t)^{-1} \\
    &= \frac{f(W_t)}{P(A_t=a, O_t=W_t)} \,,
\end{align*}
almost surely for every discrete probability distribution $f$ over the observation space. Now, recalling that $Q^{(t,a)}_{x,y} = P(O_t=x \mid A_t=a, O_{t-1}=y)$, plugging the definition of $\rho^{(t)}$ into the LHS above, we have
\begin{align*}
    &\e\left[\sum_{x \in \mathcal O} f(x) \rho^{(t)}(Z_t, A_t, x) \ \middle| \ W_t, A_t=a \right] \\
    &=  \e\left[\sum_{x \in \mathcal O} f(x) P(O_{t-1}=Z_t, A_t=a)^{-1} (Q^{(t,a)})^{-1}_{Z_t,x} \ \middle| \ W_t, A_t=a \right] \\
    &=  \sum_{x,z \in \mathcal O} f(x) P(O_{t-1}=z, A_t=a)^{-1} P(O_{t-1}=z \mid O_t=W_t, A_t=a) (Q^{(t,a)})^{-1}_{z,x} \\
    &=  \sum_{x,z \in \mathcal O} \frac{f(x) P(O_t=W_t \mid O_{t-1}=z, A_t=a) P(O_{t-1}=z \mid A_t=a)}{P(O_{t-1}=z, A_t=a) P(O_t=W_t \mid A_t=a)} (Q^{(t,a)})^{-1}_{z,x} \\
    &=  \sum_{x,z \in \mathcal O} \frac{f(x) P(O_{t-1}=z \mid A_t=a)}{P(O_{t-1}=z, A_t=a) P(O_t=W_t \mid A_t=a)} Q^{(t,a)}_{W_t,z} (Q^{(t,a)})^{-1}_{z,x} \\
    &=  \sum_{x \in \mathcal O} \frac{f(x)}{P(A_t=a) P(O_t=W_t \mid A_t=a)} \sum_{z \in \mathcal O} Q^{(t,a)}_{W_t,z} (Q^{(t,a)})^{-1}_{z,x} \\
    &=  \sum_{x \in \mathcal O} \frac{f(x)}{P(O_t=W_t, A_t=a)} \indicator{W_t=x}  \\
    &= \frac{f(W_t)}{P(A_t=a, O_t=W_t)}\,,
\end{align*}
which establishes the required property of $\rho^{(t)}$.

Now, for the second part of the theorem, we first note that in terms of our notation and under our (w.l.o.g.) assumption that the target policy is deterministic, \citet[Theorem 1]{tennenholtz20off} is equivalent to
\begin{align*}
    \epe[R_s] = \sum_{o_{1:s} \in \oset^s, a_{1:s} \in \aset^s} &\left( \prod_{t=1}^s \indicator{a_t = E_t(o_{1:t}, a_{1:t-1})} \right) \\
    &\cdot \sum_{z \in \oset} \eb[R_s \mid O_s=o_s,A_s=a_s,O_{s-1}=z] \\
    &\qquad \cdot P(O_s=o_s \mid A_s=a,O_{s-1}=z) \Omega(o_{1:s}, a_{1:s})_z \,,
\end{align*}
where $E_t(o_{1:t},a_{1:t-1})$ denotes the action taken by $\pi_e$ given $O_{1:t} = o_{1:t}$, and $A_{1:t-1} = a_{1:t-1}$, andwe define
\begin{align*}
    \Omega(o_{1:s},a_{1:s}) &= \prod_{t=1}^s \Xi_{s-t+1}(o_{1:s-t+1}, a_{1:s-t+1}) \\
    \Xi_t(o_{1:t}, a_{1:t})_{z,z'} &= \sum_{x \in \oset} (Q^{(t,a_t)})^{-1}_{z,x} P(O_t=x, O_{t-1}=o_{t-1} \mid A_{t-1}=a_{t-1}, O_{t-2}=z') \\ 
    \Xi_1(o_{1:t}, a_{1:t})_{z} &= \sum_{x \in \oset} (Q^{(1,a_1)})^{-1}_{z,x} P(O_1=x) \,.
\end{align*}

We note that the term we refer to as $\Omega$ was called the same in \citet{tennenholtz20off}, and the terms we refer to as $\Xi$ were called $W$, and we explicitly write out the matrix multiplication in the definitions of the $\Xi$ terms. Next, plugging the definition of $\Omega$ into the above equation for $\epe[R_s]$, and explicitly writing out the sums implied by the multiplication of the $\Xi_t$ terms, and re-arranging terms, we obtain

\begin{align*}
    \epe[R_s] = \sum_{\substack{o_{1:s} \in \oset^s, a_{1:s} \in \aset^s \\ z_{1:s} \in \oset^s, x_{0:s-1} \in \oset^s}} &\left( \prod_{t=1}^s \indicator{a_t = E_t(o_{1:t}, a_{1:t-1})} \right) \\
    &\cdot \eb[R_s \mid O_s=o_s,A_s=a_s,O_{s-1}=z_s ] \\
    &\cdot \left( \prod_{t=1}^s (Q^{(t,a_t)})^{-1}_{z_t,x_{t-1}} P(A_t=a_t, O_{t-1}=z_t)^{-1} \right) \\
    &\cdot \left( \prod_{t=1}^{s-1} P(O_t=o_t, A_t=a_t, O_{t-1}=z_t, O_{t+1}=x_t) \right) \\
    &\cdot P(O_s=o_s, A_s=a_s, O_{s-1}=z_s) P(O_0=x_0) \,.
\end{align*}

Now, we note that $(Q^{(t,a_t)})^{-1}_{z_t,x_{t-1}} P(A_t=a_t, O_{t-1}=z_t)^{-1} = \rho^{(t)}(z_t,a_t,x_{t-1})$, and that summing over the product of terms $\prod_{t=1}^{s-1} P(O_t=o_t, A_t=a_t, O_{t-1}=Z_t, O_{t+1}=x_t)$ and $P(O_s=o_s, A_s=a_s, O_{s-1}=Z_s)$ and $P(O_0=x_0)$ is equivalent to integrating over $\pind$, where $z_t$, $a_t$, $x_t$, and $o_t$ correspond to $Z_t$, $A_t$, $X_t$, and $W_t$ respectively. Re-writing the previous equation as an expectation and simplifying based on this gives us
\begin{align*}
    \epe[R_s] &= \e_{\pind}\left[\eb[R_s \mid W_s, A_s, Z_s] \prod_{t=1}^s \indicator{A_t=E_t} \rho^{(t)}(Z_t,A_t,X_{t-1})\right] \\
    &= \e_{\pind}\left[ \e_{\pind}\left[ R_s \prod_{t=1}^s \indicator{A_t=E_t} \rho^{(t)}(Z_t,A_t,X_{t-1}) \ \middle| \ W_s,A_s,Z_s \right] \right] \\
    &= \e_{\pind}\left[ R_s \prod_{t=1}^s \indicator{A_t=E_t} \rho^{(t)}(Z_t,A_t,X_{t-1}) \right] \,,
\end{align*}
where the second equation follows since the distribution of $R_s$ given $W_s$, $A_s$, and $Z_s$ is the same under $\pb$ and $\pind$, and because $R_s$ is independent of $\prod_{t=1}^s \indicator{A_t=E_t} \rho^{(t)}(Z_t,A_t,X_{t-1})$ given $(W_s,A_s,Z_s)$ under $\pind$. We note that the final equation is our identification result from \cref{thm:identification-ind}, and so we conclude.

\end{proof}

\section{Identification by Proximal Causal Inference}
\label{apx:identification-pci}

In this section we will present a slightly more general theorem than \cref{thm:identification-pci}, which is the following.

\begin{theorem}
\label{thm:identification-pci-general}
    Let \cref{asm:pci-reduction,asm:bridge} hold. For each $s \in \{1,\ldots,H\}$ recursively define $Y^{(s)}_s = R_s$, and $Y^{(s)}_{t-1} = \phi^{(t,s)}(Z_{t},W_{t},A_{t},E_{t},Y_{t})$ for each $t \leq s$, where the function $\phi^{(t,s)}$ is allowed to take one of the following three forms:
    \begin{align*}
        \phi^{(t,s)}_{\text{Reg}}(Z_t,W_t,A_t,E_t,Y_t^{(s)}) &= \sum_{a \in \mathcal A} h^{(t,s)}(W_t, a) \\
        \phi^{(t,s)}_{\text{IS}}(Z_t,W_t,A_t,E_t,Y_t^{(s)}) &= q^{(t)}(Z_t,A_t) \indicator{A_t=E_t} Y_t^{(s)} \\
        \phi^{(t,s)}_{\text{DR}}(Z_t,W_t,A_t,E_t,Y_t^{(s)}) &= \sum_{a \in \mathcal A} h^{(t,s)}(W_t,a) \\
        &\qquad + q^{(t)}(Z_t,A_t) \left( \indicator{A_t=E_t} Y_t^{(s)} - h^{(t,s)}(W_t,A_t) \right) \,,
    \end{align*}
    where $h^{(t,s)}$ and $q^{(t)}$ are solutions to, respectively,
    \begin{align*}
        \es{t}[q^{(t)}(Z_t,A_t)  \mid W_t,A_t=a] &= \pd{t}(A_t=a \mid W_t)^{-1} \quad \text{a.s.} \quad \forall a \in \aset\,, \\
        \es{t}[h^{(t,s)}(W_t,A_t) \mid Z_t,A_t=a] &= \es{t}[\indicator{A_t=E_t} Y_t^{(s)} \mid Z_t,A_t=a] \quad \text{a.s.} \quad \forall a \in \aset \,,
    \end{align*}
    which we show must exist.

    Then, we have $\epe[R_s] = \eb[Y_0^{(s)}]$ for each $s \in \{1,\ldots,H\}$. 
\end{theorem}

Furthermore, the following corollary makes the connection between \cref{thm:identification-pci} and \cref{thm:identification-pci-general} clear.

\begin{corollary}
\label{cor:psidr}
Let \cref{asm:pci-reduction,asm:bridge} hold, and let $Y_t$, $h^{(t)}$, and $\eta_t$ be defined as in \cref{thm:identification-pci}.
Then, we have $v_{\gamma}(\pi_e) = \eb[\psiis(\tau_H)] = \eb[\psireg(\tau_H)] = \eb[\psidr(\tau_H)]$, where
\begin{align*}
    \psiis(\tau_H) &= \sum_{t=1}^H \gamma^{t-1} \eta_{t+1} R_t  \\
    \psireg(\tau_H) &= \sum_{a \in \aset} h^{(1)}(W_1,a) \\
    \psidr(\tau_H) &= \sum_{t=1}^H \gamma^{t-1} \left( \eta_{t+1} R_t + \eta_t \sum_{a \in \mathcal A} h^{(t)}(W_t, a) - \eta_t q^{(t)}(Z_t,A_t) h^{(t)}(W_t,A_t) \right) \,.
\end{align*}
\end{corollary}

This corollary follows directly from \cref{thm:identification-pci-general}, noting that for any collection of variables $Y^{(s)}_0$ satisfying the conditions of \cref{thm:identification-pci-general} we have $v_\gamma(\pi_e) = \eb[\sum_{s=1}^H \gamma^{s-1} Y^{(s)}_0]$. For $\psiis$, $\psireg$, and $\psidr$ the specific result arises by using $\phi^{(t,s)}_{\text{IS}}$, $\phi^{(t,s)}_{\text{Reg}}$, or $\phi^{(t,s)}_{\text{DR}}$ respectively for each $(t,s)$, and we also use the fact that for every $t \geq 1$ we have $Y_t = \sum_{s=t}^H \gamma^{s-t} Y_t^{(s)}$, and therefore $h^{(t)} = \sum_{s=t}^H \gamma^{s-t} h^{(t,s)}$.

We note that $\psiis$ and $\psireg$ have very similar structures to importance sampling and direct method estimators for the MDP setting, where the $h^{(t)}$ terms are similar to the quality function terms, and the $\eta_t$ and $\nu_t$ terms are similar to the importance sampling terms. Also, as already discussed in \cref{sec:ident-pci}, $\psidr$ has a very similar structure to Double Reinforcement Learning (DRL) estimators for the MDP setting \citep{kallus2020double}.

Before we present the proof of \cref{thm:identification-pci-general}, we establish some additional notation and some helper lemmas. Using similar notation to \citet{kallus2021causal}, for any $t \in [H]$ and $\phi \in L_{2,\ps{t}}(R_t,D_{t+1:H})$ we define the sets
\begin{align*}
    \mathbb Q^{(t)} &= \{q \in L_{2,\ps{t}}(Z_t,A_t) : \es{t}[q(Z_t,A_t) -  \pd{t}(A_t \mid U_t)^{-1} \mid U_t, A_t=a] = 0 \\
    &\qquad \text{a.s.} \quad \forall a \in \aset \} \\
    \mathbb H^{(t,\phi)} &= \{h \in L_{2,\ps{t}}(W_t,A_t) : \es{t}[h(W_t,A_t) - \indicator{A_t=E_t} Y_t \mid U_t, a_t=a] = 0 \\
    &\qquad \text{a.s.} \quad \forall a \in \aset \} \\
    \mathbb Q^{(t)}_{\text{obs}} &= \{q \in L_{2,\ps{t}}(Z_t,A_t) : \es{t}[q(Z_t,A_t) - \pd{t}(A_t \mid W_t)^{-1} \mid W_t, A_t=a] = 0 \\
    &\qquad \text{a.s.} \quad \forall a \in \aset \} \\
    \mathbb H^{(t,\phi)}_{\text{obs}} &= \{h \in L_{2,\ps{t}}(W_t,A_t) : \es{t}[h(W_t,A_t) - \indicator{A_t=E_t}Y_t \mid W_t, A_t=a] = 0 \\
    &\qquad \text{a.s.} \quad \forall a \in \aset \} \,,
\end{align*}
where $Y_t = \phi(R_t,E_{t+1:H})$.

First, we will prove an important claim from \cref{sec:ident-pci}, which is that \cref{asm:bridge} implies that \cref{eq:q,eq:h} both have solutions. This claim is formalized by the following lemma.

\begin{lemma}
\label{lem:bridge-subset}
    Under \cref{asm:pci-reduction} and for each $t \in [H]$ and $\phi \in L_{2,\ps{t}}(R_t,D_{t+1:H})$ we have $\mathbb Q^{(t)} \subseteq \mathbb Q^{(t)}_{\text{obs}}$ and $\mathbb H^{(t,\phi)} \subseteq \mathbb H^{(t,\phi)}_{\text{obs}}$.
\end{lemma}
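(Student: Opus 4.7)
The plan is to prove each inclusion by the same two-step tower-of-expectations argument: first condition down to $S_t$ to invoke the ``true'' bridge equation from the definition of $\mathbb Q^{(t)}$ or $\mathbb H^{(t,\phi)}$, then average back up to $W_t$ or $Z_t$ using the negative-control independence of \cref{asm:pci-reduction}. The key observation I would make up front is that the joint independence $Z_t,A_t \indep_{\ps{t}} W_t,E_t,Y_t(a) \mid S_t$ implies all of the useful conditional independencies needed, in particular $Z_t \indep W_t \mid S_t,A_t$, $A_t \indep W_t \mid S_t$, and $Z_t \indep E_t, Y_t(a) \mid S_t, A_t$.

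For the inclusion $\mathbb Q^{(t)} \subseteq \mathbb Q^{(t)}_{\text{obs}}$, I would fix $q \in \mathbb Q^{(t)}$ and $a \in \aset$ and write
\begin{equation*}
\es{t}[q(Z_t,A_t) \mid W_t, A_t = a] = \es{t}\bigl[\es{t}[q(Z_t,A_t) \mid S_t, W_t, A_t = a] \bigmid W_t, A_t = a\bigr].
\end{equation*}
The inner conditional expectation collapses to $\es{t}[q(Z_t,A_t) \mid S_t, A_t = a] = \pd{t}(A_t=a \mid S_t)^{-1}$ by $Z_t \indep W_t \mid S_t,A_t$ and the definition of $\mathbb Q^{(t)}$. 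I then apply Bayes' rule inside the outer expectation and use $A_t \indep W_t \mid S_t$, which cancels the factors $\pd{t}(A_t=a \mid S_t)$ against $\pd{t}(A_t=a \mid S_t, W_t)$ and leaves $\pd{t}(A_t=a \mid W_t)^{-1} \int \pd{t}(S_t=s \mid W_t)\,ds = \pd{t}(A_t=a \mid W_t)^{-1}$, as required.

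For the inclusion $\mathbb H^{(t,\phi)} \subseteq \mathbb H^{(t,\phi)}_{\text{obs}}$, I would fix $h \in \mathbb H^{(t,\phi)}$ and condition both sides on $(Z_t, A_t = a)$. On the $h$ side, $\es{t}[h(W_t,A_t) \mid Z_t, A_t = a] = \es{t}[\es{t}[h(W_t,A_t) \mid S_t, A_t = a] \mid Z_t, A_t = a]$ by $W_t \indep Z_t \mid S_t, A_t$. On the outcome side, since $Y_t = Y_t(A_t)$ equals $Y_t(a)$ on the event $A_t = a$, the random variable $\indicator{A_t = E_t} Y_t$ restricted to $A_t = a$ is a function of $(E_t, Y_t(a))$, and by $(E_t, Y_t(a)) \indep Z_t \mid S_t, A_t$ I similarly get $\es{t}[\indicator{A_t=E_t} Y_t \mid Z_t, A_t = a] = \es{t}[\es{t}[\indicator{A_t=E_t} Y_t \mid S_t, A_t = a] \mid Z_t, A_t = a]$. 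The inner expectations match by the definition of $\mathbb H^{(t,\phi)}$, so the outer expectations do too.

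Neither step should pose a serious difficulty; the only thing to be careful about is extracting the correct marginal conditional independencies from the joint statement in \cref{asm:pci-reduction} and correctly handling the potential-outcome convention $Y_t(A_t) = Y_t$, so that the equality $\indicator{A_t = E_t} Y_t = \indicator{E_t = a} Y_t(a)$ on $\{A_t = a\}$ licenses the application of the $(E_t, Y_t(a)) \indep Z_t \mid S_t, A_t$ independence. Everything else is routine manipulation of conditional expectations.
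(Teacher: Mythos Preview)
Your proposal is correct and follows essentially the same approach as the paper's proof: both arguments use the tower property to insert $S_t$, invoke the negative-control independence from \cref{asm:pci-reduction} to drop the redundant conditioning variable, and then apply the defining bridge equation. If anything, your treatment of the $\mathbb H^{(t,\phi)}$ case is slightly more careful than the paper's, since you explicitly justify via $Y_t = Y_t(A_t)$ why $\indicator{A_t=E_t}Y_t$ restricted to $\{A_t=a\}$ is a function of $(E_t,Y_t(a))$ and hence covered by the assumed independence, whereas the paper glosses over this point.
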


\begin{proof}[Proof of \cref{lem:bridge-subset}]
    First, suppose that $q^{(t)} \in \mathbb Q^{(t)}$. Then we have
    \begin{align*}
        \es{t}[q^{(t)}(Z_t,A_t) \mid W_t, A_t=a] &= \es{t}[ \es{t}[q^{(t)}(Z_t,A_t) \mid U_t, W_t, A_t=a] \mid W_t, A_t=a] \\
        &= \es{t}[ \es{t}[q^{(t)}(Z_t,A_t) \mid U_t, A_t=a] \mid W_t, A_t=a] \\
        &= \es{t}[ \pd{t}(A_t=a \mid U_t)^{-1} \mid W_t, A_t=a] \\
        &= \int \frac{\pd{t}(U_t=u \mid W_t, A_t=a)}{\pd{t}(A_t=a \mid U_t=u)} d\mu(u) \\
        &= \int \frac{\pd{t}(A_t=a \mid W_t, U_t=u) \pd{t}(U_t=u \mid W_t)}{\pd{t}(A_t=a \mid U_t=u)\pd{t}(A_t=a \mid W_t)} d\mu(u) \\
        &= \pd{t}(A_t=a \mid W_t)^{-1} \int \pd{t}(U_t=u \mid W_t) d\mu(u) \\
        &= \pd{t}(A_t=a \mid W_t)^{-1} \,,
    \end{align*}
    where in the second and sixth equalities we apply the independence assumptions from \cref{asm:pci-reduction}, in the third equality we apply the fact that $q^{(t)} \in \mathbb Q^{(t)}$, and the fifth equality follows from Bayes' rule. Therefore, $q^{(t)} \in \mathbb Q^{(t)}_{\text{obs}}$.
    
    Second, suppose that $h^{(t)} \in \mathbb H^{(t,\phi)}$. Then we have 
    \begin{align*}
        \es{t}[h^{(t)}(W_t,A_t) \mid Z_t, A_t=a] &= \es{t}[ \es{t}[h^{(t)}(W_t,A_t) \mid U_t, Z_t, A_t=a] \mid Z_t, A_t=a] \\
        &= \es{t}[ \es{t}[h^{(t)}(W_t,A_t) \mid U_t, A_t=a] \mid Z_t, A_t=a] \\
        &= \es{t}[ \es{t}[\phi(R_t,D_{t+1:H}) \mid U_t, A_t=a] \mid Z_t, A_t=a] \\
        &= \es{t}[ \es{t}[\phi(R_t,D_{t+1:H}) \mid U_t, Z_t, A_t=a] \mid Z_t, A_t=a] \\
        &= \es{t}[\phi(R_t,D_{t+1:H}) \mid Z_t, A_t=a] \,,
    \end{align*}
    where in the second and fourth equalities we apply the independence assumptions from \cref{asm:pci-reduction}, and in the third equality we apply the fact that $h^{(t)} \in \mathbb H^{(t)}$. Therefore, $h^{(t)} \in \mathbb H^{(t,\phi)}_{\text{obs}}$.
    
\end{proof}

Next, we establish the following pair of lemmas, which allow us to establish that $\phi^{(t,s)}_{\text{IS}}$ and $\phi^{(t,s)}_{\text{Reg}}$ satisfy an important recursive property in the case that $q^{(t)} \in \mathbb Q^{(t)}$ or $h^{(t)} \in \mathbb H^{(H)}$ respectively.

\begin{lemma}
\label{lem:phi-is}
Suppose that $q^{(t)} \in \mathbb Q^{(t)}$, let $Y_t = \phi(R_t,D_{t+1:H})$, and let \cref{asm:pci-reduction} be given. Then, we have
\begin{equation*}
    \es{t}[q^{(t)}(Z_t,A_t) \indicator{A_t=E_t} Y_t] = \es{t+1}[Y_t] \,.
\end{equation*}
\end{lemma}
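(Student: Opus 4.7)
}

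The plan is to view this as a change-of-measure identity: the weight $q^{(t)}(Z_t,A_t)\indicator{A_t=E_t}$ should convert integration under $\ps{t}$ (which uses $\pi_b$ at time $t$) into integration under $\ps{t+1}$ (which uses $\pi_e$ at time $t$). The only fact I can use to manipulate $q^{(t)}$ is that $q^{(t)}\in\mathbb Q^{(t)}$, i.e.\ that it satisfies the \emph{hidden-state} bridge equation $\es{t}[q^{(t)}(Z_t,A_t)\mid S_t,A_t{=}a]=\pd{t}(A_t{=}a\mid S_t)^{-1}$; on top of that I have only the negative-control independence $Z_t,A_t\indep_{\ps{t}} W_t,E_t,Y_t(a)\mid S_t$ from \Cref{asm:pci-reduction}. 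The strategy is therefore to condition on $S_t$, factorize via the conditional independence after expanding $Y_t=Y_t(A_t)$ into potential outcomes, and read off the propensity cancellation.

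Concretely, I would first write
\[
\indicator{A_t=E_t}Y_t \;=\; \sum_{a\in\aset}\indicator{A_t=a}\indicator{a=E_t}Y_t(a),
\]
which is legitimate because $Y_t(A_t)=Y_t$ by the potential outcomes convention. Next, condition on $S_t$ and apply the PCI independence of \Cref{asm:pci-reduction}, which gives that $(Z_t,A_t)$ and $(E_t,Y_t(a))$ are independent given $S_t$. Thus for each $a$,
\[
\es{t}\!\left[q^{(t)}(Z_t,a)\indicator{A_t=a}\indicator{a=E_t}Y_t(a)\,\big|\,S_t\right]
=\es{t}\!\left[q^{(t)}(Z_t,a)\indicator{A_t=a}\,\big|\,S_t\right]\,\es{t}\!\left[\indicator{a=E_t}Y_t(a)\,\big|\,S_t\right].
\]
The first factor equals $\pd{t}(A_t{=}a\mid S_t)\cdot\es{t}[q^{(t)}(Z_t,a)\mid S_t,A_t{=}a]$, which by $q^{(t)}\in\mathbb Q^{(t)}$ is identically $1$. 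Summing over $a$ and taking an outer expectation then yields
\[
\es{t}\!\left[q^{(t)}(Z_t,A_t)\indicator{A_t=E_t}Y_t\right]=\es{t}\!\left[\sum_{a\in\aset}\indicator{a=E_t}Y_t(a)\right]=\es{t}[Y_t(E_t)].
\]

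The final step is to identify $\es{t}[Y_t(E_t)]$ with $\es{t+1}[Y_t]$. By construction $\ps{t}$ and $\ps{t+1}$ share identical laws for the pre-$t$ trajectory and for $(S_t,Z_t,W_t,E_t)$, and they differ only in the conditional distribution of $A_t$ given $S_t$ (with all downstream variables generated by the same POMDP kernels from $A_t$ onward). Replacing the factual action $A_t$ with $E_t$ in the potential-outcome $Y_t(E_t)$ under $\ps{t}$ is by definition the same as sampling $A_t=E_t$ deterministically and then continuing under $\pi_b$, which is exactly the $\ps{t+1}$ law of $Y_t$.

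I do not foresee a serious obstacle: the only technical subtlety is making sure the potential-outcome manipulation is coherent, in particular that the independence in \Cref{asm:pci-reduction} is invoked at the correct $a$ for each term of the sum (as stated, it gives the independence for each fixed $a$, which is precisely what the pointwise decomposition needs). Everything else is bookkeeping on the two measures $\ps{t}$ and $\ps{t+1}$.
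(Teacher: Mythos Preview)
Your proposal is correct and follows essentially the same route as the paper's proof: both arguments condition on $S_t$, use \Cref{asm:pci-reduction} to separate $(Z_t,A_t)$ from $(E_t,Y_t(a))$, apply the hidden-state bridge equation to collapse the propensity factor, and then identify $\es{t}[Y_t(E_t)]$ with $\es{t+1}[Y_t]$. The only cosmetic difference is the order of operations---you expand $\indicator{A_t=E_t}Y_t$ into the potential-outcome sum before conditioning, whereas the paper first takes the inner conditional expectation $\es{t}[q^{(t)}(Z_t,A_t)\mid S_t,A_t,E_t,Y_t(1),\ldots,Y_t(m)]$ and then expands; the substance is identical.
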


\begin{lemma}
\label{lem:phi-reg}
Suppose that $h^{(t)} \in \mathbb H^{(t,\phi)}$, let $Y_t = \phi(R_t,D_{t+1:H})$, and let \cref{asm:pci-reduction} be given. Then, we have
\begin{equation*}
    \es{t} \left[ \sum_{a \in \aset} h^{(t)}(W_t,a) \right] = \es{t+1}[Y_t] \,.
\end{equation*}
\end{lemma}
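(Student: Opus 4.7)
The plan is a tower-expectation argument that conditions on the hidden state $S_t$, applies the defining moment equation of $h^{(t)} \in \mathbb H^{(t,\phi)}$, and then uses the conditional independences in \cref{asm:pci-reduction} to rewrite the result as an expectation under $\ps{t+1}$. First I would write
\begin{equation*}
\es{t}\!\left[\sum_{a \in \aset} h^{(t)}(W_t,a)\right]
= \sum_{a \in \aset}\es{t}\!\left[\,\es{t}[h^{(t)}(W_t,a) \mid S_t]\,\right].
\end{equation*}
Because $A_t \indep W_t \mid S_t$ under $\ps{t}$ (an immediate consequence of \cref{asm:pci-reduction}), the inner expectation equals $\es{t}[h^{(t)}(W_t,A_t) \mid S_t, A_t=a]$; the defining property of $h^{(t)} \in \mathbb H^{(t,\phi)}$ then replaces this with $\es{t}[\indicator{A_t=E_t} Y_t \mid S_t, A_t=a]$. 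Substituting $A_t=a$ into the indicator and using the consistency identity $Y_t = Y_t(A_t)$ produces $\es{t}[\indicator{a=E_t} Y_t(a) \mid S_t, A_t=a]$, and then the independence $A_t \indep E_t, Y_t(a) \mid S_t$ from \cref{asm:pci-reduction} allows the conditioning on $A_t=a$ to be dropped.

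Summing over $a \in \aset$ collapses the indicator via $\sum_{a} \indicator{a=E_t} Y_t(a) = Y_t(E_t)$, so the left-hand side reduces to $\es{t}[Y_t(E_t)]$. The remaining step is to identify $\es{t}[Y_t(E_t)]$ with $\es{t+1}[Y_t]$. The measures $\ps{t}$ and $\ps{t+1}$ agree on all variables generated before time $t$ (actions follow $\pi_e$ in both cases) and on the POMDP kernels from time $t+1$ onwards (subsequent actions follow $\pi_b$ in both cases); the only difference is that at time $t$ the action is drawn from $\pi_b(\cdot \mid S_t)$ under $\ps{t}$ but is set deterministically to $E_t$ under $\ps{t+1}$. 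Since $E_t$ is measurable \wrt observables whose distribution agrees under both measures, and $Y_t(E_t)$ is by definition the value of $Y_t$ under the intervention $A_t \leftarrow E_t$ with downstream behaviour unchanged, the joint law of $Y_t(E_t)$ under $\ps{t}$ coincides with that of $Y_t$ under $\ps{t+1}$, yielding the desired identity.

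The routine content is the tower manipulation and the direct substitution via the bridge-function moment equation; these are essentially the same moves as in the proof of \cref{lem:phi-is}, but applied on the outcome side rather than the action side. I expect the subtlest point to be the final change-of-measure step: making precise that the potential outcome $Y_t(E_t)$ under $\ps{t}$ corresponds to the factual $Y_t$ under $\ps{t+1}$ requires carefully unpacking the definitions of the two measures and confirming that the only discrepancy lies in the substitution at step $t$, which is exactly what the potential-outcome notation encodes. Integrability of all terms follows from $h^{(t)} \in L_{2,\ps{t}}(W_t,A_t)$ together with boundedness of the rewards that propagates through $Y_t$.
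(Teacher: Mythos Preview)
Your proposal is correct and follows essentially the same approach as the paper: condition on $S_t$, use $A_t \indep W_t \mid S_t$ to rewrite $h^{(t)}(W_t,a)$ as a conditional expectation given $A_t=a$, invoke the defining moment equation of $\mathbb H^{(t,\phi)}$, use consistency and the independence $A_t \indep E_t, Y_t(a) \mid S_t$ to drop the action conditioning, collapse the sum via the indicator, and finish with the identification $\es{t}[Y_t(E_t)] = \es{t+1}[Y_t]$. The only cosmetic difference is that the paper writes the intermediate quantity as $\indicator{E_t=a}\,Y_t(E_t)$ rather than your $\indicator{a=E_t}\,Y_t(a)$, but these coincide on the indicator's support.
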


\begin{proof}[Proof of \cref{lem:phi-is}]
Given  that $q^{(t)} \in \mathbb Q^{(t)}$, we have
\begin{align*}
    &\es{t}[q^{(t)}(Z_t,A_t) \indicator{A_t=E_t} Y_t]  \\
    &= \es{t}[ \es{t}[q^{(t)}(Z_t,A_t) \mid U_t, A_t, E_t, Y_t(1),\ldots,Y_t(m)] \indicator{A_t=E_t} Y_t] \\
    &= \es{t}[ \es{t}[q^{(t)}(Z_t,A_t) \mid U_t, A_t] \indicator{A_t=E_t} Y_t] \\
    &= \es{t}[ \pd{t}(A_t \mid U_t)^{-1} \indicator{A_t=E_t} Y_t] \\
    &= \es{t}[ \pd{t}(A_t \mid U_t)^{-1} \indicator{A_t=E_t} Y_t(E_t)] \\
    &= \es{t} \left[ \sum_{a \in \aset} \frac{\pd{t}(A_t=a \mid U_t, E_t, Y_t(1),\ldots,Y_t(m))}{\pd{t}(A_t=a \mid U_t)} \indicator{E_t=a} Y_t(E_t) \right] \\
    &= \es{t} \left[ \sum_{a \in \aset} \indicator{E_t=a} Y_t(E_t) \right] \\
    &= \es{t}[ Y_t(E_t) ] \\
    &= \es{t+1}[ Y_t ] \,,
\end{align*}
where in the second and sixth equalities we apply the independence assumptions from \cref{asm:pci-reduction}, in the third equality we apply the fact that $q^{(t)} \in \mathbb Q^{(t)}$, in the fourth equality we apply the fact that $Y_t = Y_t(A_t)$, and in the final equality we apply the fact that by definition intervening on the $t$'th action with $E_t$ under $\ps{t}$ is by definition equivalent to $\ps{t+1}$.
\end{proof}

\begin{proof}[Proof of \cref{lem:phi-reg}]
Given  that $h^{(t,\phi)} \in \mathbb Q^{(t)}$, we have
\begin{align*}
    \es{t}\left[ \sum_{a \in \aset} h^{(t)}(W_t,a) \right] &= \es{t}\left[ \es{t}\left[ \sum_{a \in \aset} h^{(t)}(W_t,a) \ \middle| \ U_t \right] \right] \\
    &= \es{t}\left[ \es{t}\left[ \sum_{a \in \aset} h^{(t)}(W_t,A_t) \ \middle| \ U_t, A_t=a \right] \right] \\
    &= \es{t}\left[ \es{t}\left[ \sum_{a \in \aset} \indicator{E_t=A_t} Y_t \ \middle| \ U_t, A_t=a \right] \right] \\
    &= \es{t}\left[ \es{t}\left[ \sum_{a \in \aset} \indicator{E_t=a} Y_t(E_t) \ \middle| \ U_t, A_t=a \right] \right] \\
    &= \es{t}\left[ \es{t}\left[ \sum_{a \in \aset} \indicator{E_t=a} Y_t(E_t) \ \middle| \ U_t \right] \right] \\
    &= \es{t}\left[ \sum_{a \in \aset} \indicator{E_t=a} Y_t(E_t) \right] \\
    &= \es{t}[ Y_t(E_t) ] \\
    &= \es{t+1}[ Y_t ] \\
\end{align*}
where in the second and sixth equalities we apply the independence assumptions from \cref{asm:pci-reduction}, in the third equality we apply the fact that $h^{(t)} \in \mathbb H^{(t,\phi)}$, in the fourth equality we apply the fact that $Y_t = Y_t(A_t)$, and in the final equality we apply the fact that by definition intervening on the $t$'th action with $E_t$ under $\ps{t}$ is by definition equivalent to $\ps{t+1}$.

\end{proof}

Now, by the previous two lemmas, we would be able to establish identification via backward induction, if it were the case that the functions $q^{(t)}$ and $h^{(t,s)}$ used for identification were actually members of $\mathbb Q^{(t)}$ and $\mathbb H^{(t,\phi)}$ (for $\phi$ such that $\phi(R_t,D_{t+1:H}) = Y_t^{(s)}$). However, instead we assumed that $q^{(t)} \in \mathbb Q^{(t)}_{\text{obs}}$ and $h^{(t)} \in \mathbb H^{(t,\phi)}_{\text{obs}}$, so some additional care must be taken. The next lemma and its corollaries allow us to remedy this issue.

\begin{lemma}
\label{lem:bridge-observed-cross}
Let $q^{(t)} \in \mathbb Q^{(t)}_{\text{obs}}$ and $h^{(t)} \in \mathbb H^{(t,\phi)}_{\text{obs}}$ be chosen arbitrarily, for some given $Y_t = \phi(R_t,D_{t+1:H})$. Then we have
\begin{equation*}
    \es{t}[q^{(t)}(Z_t,A_t) \indicator{A_t=E_t} Y_t] = \es{t}\left[ \sum_{a \in \aset} h^{(t)}(W_t,a) \right] \,.
\end{equation*}
\end{lemma}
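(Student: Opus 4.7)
The plan is to show both sides equal the common cross-term $\es{t}[q^{(t)}(Z_t,A_t)\,h^{(t)}(W_t,A_t)]$. This is the standard ``cross-application'' trick seen in doubly-robust arguments, where each observed moment equation is used to rewrite one side of the identity.

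First I would reduce the left-hand side by conditioning on $(Z_t,A_t)$ and pulling out the $(Z_t,A_t)$-measurable factor:
\begin{align*}
\es{t}[q^{(t)}(Z_t,A_t)\indicator{A_t=E_t} Y_t]
&= \es{t}\bigl[q^{(t)}(Z_t,A_t)\,\es{t}[\indicator{A_t=E_t} Y_t \mid Z_t, A_t]\bigr] \\
&= \es{t}\bigl[q^{(t)}(Z_t,A_t)\,\es{t}[h^{(t)}(W_t,A_t) \mid Z_t, A_t]\bigr] \\
&= \es{t}[q^{(t)}(Z_t,A_t)\,h^{(t)}(W_t,A_t)],
\end{align*}
where the second equality uses the defining moment of $h^{(t)} \in \mathbb H^{(t,\phi)}_{\text{obs}}$ (applied almost surely for each $a\in\aset$, hence also with $a$ replaced by $A_t$), and the third is the tower property.

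Next I would rewrite the right-hand side as a single expectation by inverse-probability weighting the sum over $a$ using $\pd{t}(A_t = a \mid W_t)$:
\begin{align*}
\es{t}\!\left[\sum_{a\in\aset} h^{(t)}(W_t,a)\right]
&= \sum_{a\in\aset}\es{t}\bigl[\pd{t}(A_t=a\mid W_t)^{-1}\indicator{A_t=a} h^{(t)}(W_t,a)\bigr] \\
&= \es{t}\bigl[\pd{t}(A_t\mid W_t)^{-1}\,h^{(t)}(W_t,A_t)\bigr] \\
&= \es{t}\bigl[\es{t}[q^{(t)}(Z_t,A_t)\mid W_t, A_t]\,h^{(t)}(W_t,A_t)\bigr] \\
&= \es{t}[q^{(t)}(Z_t,A_t)\,h^{(t)}(W_t,A_t)],
\end{align*}
using the defining moment of $q^{(t)} \in \mathbb Q^{(t)}_{\text{obs}}$ in the third step and the tower property in the last. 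Combining the two displays yields the claimed equality.

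There is no substantive obstacle here: the only care needed is that the two $L^2$ moment equations are applied almost surely for each fixed $a$, which justifies substituting $A_t$ inside the outer expectations, and that integrability is maintained (which follows since $q^{(t)}\in L_{2,\ps{t}}(Z_t,A_t)$, $h^{(t)}\in L_{2,\ps{t}}(W_t,A_t)$, and Cauchy--Schwarz controls the product). Note in particular that the argument does not require $q^{(t)}\in\mathbb Q^{(t)}$ or $h^{(t)}\in\mathbb H^{(t,\phi)}$, only their observed analogues, which is exactly the gap that Lemmas~\ref{lem:phi-is} and \ref{lem:phi-reg} alone could not bridge and for which this lemma is invoked.
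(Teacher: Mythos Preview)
Your proof is correct and follows essentially the same approach as the paper: both reduce each side to the common cross-term $\es{t}[q^{(t)}(Z_t,A_t)\,h^{(t)}(W_t,A_t)]$ via the tower property and the defining observed moment equations, with the only cosmetic difference being that the paper writes the argument as a single chain of equalities from left to right rather than meeting in the middle.
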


\begin{proof}[Proof of \cref{lem:bridge-observed-cross}]
We have
\begin{align*}
    \es{t}[q^{(t)}(Z_t,A_t) \indicator{A_t=E_t} Y_t] &= \es{t}[ q^{(t)}(Z_t,A_t) \es{t}[ \indicator{A_t=E_t} Y_t \mid Z_t, A_t] ] \\
    &= \es{t}[ q^{(t)}(Z_t,A_t) \es{t}[ h^{(t)}(W_t,A_t) \mid Z_t, A_t] ] \\
    &= \es{t}[ q^{(t)}(Z_t,A_t) h^{(t)}(W_t,A_t) ] \\
    &= \es{t}[ \es{t}[q^{(t)}(Z_t,A_t) \mid W_t,A_t] h^{(t)}(W_t,A_t) ] \\
    &= \es{t}[ \pd{t}(A_t \mid W_t)^{-1} h^{(t)}(W_t,A_t) ] \\
    &= \es{t} \left[ \sum_{a \in \aset} \frac{\pd{t}(A_t=a \mid W_t)}{\pd{t}(A_t=a \mid W_t)} h^{(t)}(W_t,a) \right] \\
    &= \es{t} \left[ \sum_{a \in \aset} h^{(t)}(W_t,a) \right] \,.
\end{align*}

\end{proof}

\begin{corollary}
\label{cor:phi-is}
Suppose that $q^{(t)} \in \mathbb Q^{(t)}_{\text{obs}}$, let $Y_t = \phi(R_t,D_{t+1:H})$, and let \cref{asm:pci-reduction,asm:bridge} be given. Then, we have
\begin{equation*}
    \es{t}[q^{(t)}(Z_t,A_t) \indicator{A_t=E_t} Y_t] = \es{t+1}[Y_t] \,.
\end{equation*}
\end{corollary}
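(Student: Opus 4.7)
The plan is to bridge \cref{lem:bridge-observed-cross} and \cref{lem:phi-reg} by inserting an outcome bridge function whose existence is guaranteed by \cref{asm:bridge}. The key observation is that, although the claim only mentions $q^{(t)} \in \mathbb{Q}^{(t)}_{\text{obs}}$, we are free to introduce an auxiliary $h$-function as a stepping stone, since the left-hand side of the claimed identity does not depend on any $h$.

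First I would invoke \cref{asm:bridge}, applied to the given $\phi$, to produce some $h^{(t,\phi)} \in \mathbb{H}^{(t,\phi)}$, i.e., a function satisfying the conditional moment restriction on $S_t$. Second, \cref{lem:bridge-subset} immediately yields $h^{(t,\phi)} \in \mathbb{H}^{(t,\phi)}_{\text{obs}}$ as well. Third, applying \cref{lem:bridge-observed-cross} to the pair $(q^{(t)}, h^{(t,\phi)}) \in \mathbb{Q}^{(t)}_{\text{obs}} \times \mathbb{H}^{(t,\phi)}_{\text{obs}}$ gives
\begin{equation*}
\es{t}[q^{(t)}(Z_t, A_t) \indicator{A_t = E_t} Y_t] = \es{t}\left[\sum_{a \in \aset} h^{(t,\phi)}(W_t, a)\right].
\end{equation*}
Fourth, since $h^{(t,\phi)}$ in fact lies in the stronger class $\mathbb{H}^{(t,\phi)}$, \cref{lem:phi-reg} applies and yields
\begin{equation*}
\es{t}\left[\sum_{a \in \aset} h^{(t,\phi)}(W_t, a)\right] = \es{t+1}[Y_t].
\end{equation*}
Chaining these two equalities delivers the claim.

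There is essentially no technical obstacle: the corollary is a bookkeeping combination of the preceding three lemmas. The only subtlety worth flagging is the essential role played by \cref{asm:bridge}—even though our hypothesis only asks $q^{(t)}$ to satisfy the weaker observed-variable moment condition, the proof still requires that an outcome bridge function satisfying the stronger hidden-state condition exists, because \cref{lem:phi-reg} is invoked at it. This mirrors the remark following \cref{thm:identification-pci} that existence of bridge functions in the strong sense of \cref{asm:bridge} is needed even when identification itself is stated using only functions satisfying the weaker observable conditional moments.
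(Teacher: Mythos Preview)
Your proposal is correct and follows exactly the same approach as the paper: invoke \cref{asm:bridge} to obtain some $h^{(t,\phi)} \in \mathbb H^{(t,\phi)}$, use \cref{lem:bridge-subset} to place it in $\mathbb H^{(t,\phi)}_{\text{obs}}$, then chain \cref{lem:bridge-observed-cross} with \cref{lem:phi-reg}. Your closing remark about the essential role of \cref{asm:bridge} is also on point and matches the paper's own discussion.
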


\begin{corollary}
\label{cor:phi-reg}
Suppose that $h^{(t)} \in \mathbb H^{(t,\phi)}_{\text{obs}}$, let $Y_t = \phi(R_t,D_{t+1:H})$, and let \cref{asm:pci-reduction,asm:bridge} be given. Then, we have
\begin{equation*}
    \es{t}\left[ \sum_{a \in \aset} h^{(t)}(W_t,a) \right] = \es{t+1}[Y_t] \,.
\end{equation*}
\end{corollary}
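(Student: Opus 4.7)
The plan is to deduce Corollary \ref{cor:phi-reg} by combining the already-proved Lemma \ref{lem:phi-is}, Lemma \ref{lem:bridge-subset}, and Lemma \ref{lem:bridge-observed-cross}. The key observation is that the corollary weakens the hypothesis from $h^{(t)} \in \mathbb{H}^{(t,\phi)}$ (used in Lemma \ref{lem:phi-reg}, which involves conditioning on the latent state $S_t$) to $h^{(t)} \in \mathbb{H}^{(t,\phi)}_{\text{obs}}$ (conditioning only on $Z_t, A_t$). We cannot reuse the argument of Lemma \ref{lem:phi-reg} directly because it crucially invokes the independence structure granted by conditioning on $S_t$, which is not available for an arbitrary $h^{(t)} \in \mathbb{H}^{(t,\phi)}_{\text{obs}}$. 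Instead, we route through an auxiliary action bridge function.

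First, I would invoke \cref{asm:bridge} to obtain a function $q^* \in \mathbb{Q}^{(t)}$, i.e., a function satisfying $\es{t}[q^*(Z_t,A_t) \mid S_t, A_t=a] = \pd{t}(A_t=a \mid S_t)^{-1}$ almost surely for every $a \in \aset$. Applying Lemma \ref{lem:bridge-subset} to $q^*$ immediately yields $q^* \in \mathbb{Q}^{(t)}_{\text{obs}}$, so the pair $(q^*, h^{(t)})$ fits the hypotheses of Lemma \ref{lem:bridge-observed-cross}. That lemma then gives
\begin{equation*}
\es{t}\!\left[ \sum_{a \in \aset} h^{(t)}(W_t,a) \right] = \es{t}\!\left[ q^*(Z_t,A_t)\,\indicator{A_t=E_t}\, Y_t \right].
\end{equation*}

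Next, since $q^* \in \mathbb{Q}^{(t)}$, the hypotheses of Lemma \ref{lem:phi-is} hold, which gives
\begin{equation*}
\es{t}\!\left[ q^*(Z_t,A_t)\,\indicator{A_t=E_t}\, Y_t \right] = \es{t+1}[Y_t].
\end{equation*}
Chaining the two displays above produces the claimed identity $\es{t}[\sum_{a \in \aset} h^{(t)}(W_t,a)] = \es{t+1}[Y_t]$.

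There is no real obstacle in this proof; the only subtlety worth flagging is conceptual rather than technical. Even though the conclusion is stated purely in terms of $h^{(t)}$, the argument uses the existence half of \cref{asm:bridge} for $q$ in an essential way: without a stronger action bridge $q^*$ satisfying the $S_t$-conditional moment, we have no way to trade the $\es{t+1}$ expectation against the observed-moment object $\sum_a h^{(t)}(W_t,a)$. This mirrors the structure of the symmetric Corollary \ref{cor:phi-is}, where the roles of $q$ and $h$ are swapped.
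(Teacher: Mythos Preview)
Your proof is correct and follows exactly the paper's own argument: invoke \cref{asm:bridge} to obtain $q^* \in \mathbb{Q}^{(t)}$, use \cref{lem:bridge-subset} to conclude $q^* \in \mathbb{Q}^{(t)}_{\text{obs}}$, then apply \cref{lem:bridge-observed-cross} followed by \cref{lem:phi-is}. Your closing remark about the essential reliance on the existence of an action bridge $q^*$ is also precisely the point the paper is making in this corollary.
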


\Cref{cor:phi-is} follows because from \cref{asm:bridge} there must exist some $h^{(t)} \in \mathbb H^{(t,\phi)}$, and by \cref{lem:bridge-subset} we know that $h^{(t)} \in \mathbb H^{(t,\phi)}_{\text{obs}}$, so therefore applying \cref{lem:bridge-observed-cross} and then \cref{lem:phi-reg} we have
\begin{equation*}
    \es{t}[q^{(t)}(Z_t,A_t) \indicator{A_t=E_t} Y_t] = \es{t}\left[ \sum_{a \in \aset} h^{(t)}(W_t,a) \right] = \es{t+1}[Y_t]\,.
\end{equation*}
\Cref{cor:phi-reg} follows by an almost identical logic, since by \cref{asm:bridge} there must exist some $q^{(t)} \in \mathbb Q^{(t)}$, and by \cref{lem:bridge-subset} we also know that $q^{(t)} \in \mathbb Q^{(t)}_{\text{obs}}$. Therefore, applying \cref{lem:bridge-observed-cross} and then \cref{lem:phi-is} we have
\begin{equation*}
    \es{t}\left[ \sum_{a \in \aset} h^{(t)}(W_t,a) \right] = \es{t}[q^{(t)}(Z_t,A_t) \indicator{A_t=E_t} Y_t]  = \es{t+1}[Y_t]\,.
\end{equation*}

These corollaries are sufficient to construct our inductive proof for our main identification result, in the case of $\phi^{(t,s)}_{\text{IS}}$ and $\phi^{(t,s)}_{\text{Reg}}$. However, for the case of $\phi^{(t,s)}_{\text{DR}}$ we need to establish one final lemma before presenting our main proof.

\begin{lemma}
\label{lem:phi-dr}
Suppose that either $q^{(t)} \in \mathbb Q^{(t)}_{\text{obs}}$ and $h^{(t)} \in L_{2,\ps{t}}(W_t,A_t)$ \emph{or} $h^{(t)} \in \mathbb H^{(t,\phi)}_{\text{obs}}$ and $q^{(t)} \in L_{2,\ps{t}}(Z_t,A_t)$. In addition, let $Y_t = \phi(R_t,D_{t+1:H})$, and let \cref{asm:pci-reduction,asm:bridge} be given. Then, we have
\begin{equation*}
    \es{t}\left[ q^{(t)}(Z_t,A_t) (\indicator{A_t=E_t} Y_t - h^{(t)}(W_t,A_t)) + \sum_{a \in \aset} h^{(t)}(W_t,a) \right] = \es{t+1}[Y_t] \,.
\end{equation*}
\end{lemma}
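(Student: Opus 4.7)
The plan is to exploit the double robustness structure by splitting the DR expression into an IS-style piece and a regression-style piece, and then showing that in each of the two hypothesized cases one piece gives $\es{t+1}[Y_t]$ directly (via the already established corollaries) while the other two pieces cancel in expectation. Concretely, I would rewrite the integrand of the DR expression as
\begin{equation*}
    q^{(t)}(Z_t,A_t)\indicator{A_t=E_t}Y_t \;+\; \Bigl(\sum_{a \in \aset} h^{(t)}(W_t,a) \;-\; q^{(t)}(Z_t,A_t) h^{(t)}(W_t,A_t)\Bigr).
\end{equation*}

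Case 1: suppose $q^{(t)} \in \mathbb Q^{(t)}_{\text{obs}}$ and $h^{(t)} \in L_{2,\ps{t}}(W_t,A_t)$ is arbitrary. By \Cref{cor:phi-is}, $\es{t}[q^{(t)}(Z_t,A_t)\indicator{A_t=E_t} Y_t] = \es{t+1}[Y_t]$. For the remaining cross term, I would tower-condition on $(W_t,A_t)$ and use the defining moment of $\mathbb Q^{(t)}_{\text{obs}}$:
\begin{equation*}
    \es{t}[q^{(t)}(Z_t,A_t) h^{(t)}(W_t,A_t)] = \es{t}\bigl[\es{t}[q^{(t)}(Z_t,A_t)\mid W_t,A_t]\, h^{(t)}(W_t,A_t)\bigr] = \es{t}\bigl[\pd{t}(A_t\mid W_t)^{-1} h^{(t)}(W_t,A_t)\bigr],
\end{equation*}
and then expanding the inverse propensity weighted sum over $a\in\aset$ collapses this to $\es{t}[\sum_a h^{(t)}(W_t,a)]$, exactly cancelling the regression term.

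Case 2: suppose instead $h^{(t)} \in \mathbb H^{(t,\phi)}_{\text{obs}}$ and $q^{(t)} \in L_{2,\ps{t}}(Z_t,A_t)$ is arbitrary. By \Cref{cor:phi-reg}, $\es{t}[\sum_a h^{(t)}(W_t,a)] = \es{t+1}[Y_t]$. For the remaining terms I would tower-condition on $(Z_t,A_t)$ and use the defining moment of $\mathbb H^{(t,\phi)}_{\text{obs}}$:
\begin{equation*}
    \es{t}[q^{(t)}(Z_t,A_t)\indicator{A_t=E_t} Y_t] = \es{t}\bigl[q^{(t)}(Z_t,A_t)\,\es{t}[\indicator{A_t=E_t} Y_t \mid Z_t, A_t]\bigr] = \es{t}\bigl[q^{(t)}(Z_t,A_t)\,\es{t}[h^{(t)}(W_t,A_t)\mid Z_t,A_t]\bigr],
\end{equation*}
which by another tower step equals $\es{t}[q^{(t)}(Z_t,A_t)h^{(t)}(W_t,A_t)]$, again cancelling the cross term. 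Combining the two cases gives the claim.

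There is no real obstacle here beyond bookkeeping; the only subtlety is checking that the integrability / existence of the involved conditional expectations is not an issue, which follows from $q^{(t)}\in L_{2,\ps{t}}(Z_t,A_t)$, $h^{(t)}\in L_{2,\ps{t}}(W_t,A_t)$, the boundedness of rewards that propagates to $Y_t$, and the implicit positivity of $\pd{t}(A_t\mid W_t)$ built into $\mathbb Q^{(t)}_{\text{obs}}$. The structure is simply the familiar double robustness identity: the DR functional is unbiased whenever at least one of the two conditional moment equations is satisfied, with the cancellation being driven in each case by the moment equation that does hold.
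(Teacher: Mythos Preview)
Your proposal is correct and follows essentially the same approach as the paper's proof: in Case~1 you invoke \Cref{cor:phi-is} and then cancel the cross term by conditioning on $(W_t,A_t)$ and using the defining moment of $\mathbb Q^{(t)}_{\text{obs}}$; in Case~2 you invoke \Cref{cor:phi-reg} and cancel by conditioning on $(Z_t,A_t)$ and using the defining moment of $\mathbb H^{(t,\phi)}_{\text{obs}}$. The paper groups the Case~2 computation slightly differently (it directly shows $\es{t}[q^{(t)}(Z_t,A_t)(\indicator{A_t=E_t}Y_t - h^{(t)}(W_t,A_t))\mid Z_t,A_t]=0$ rather than splitting and recombining), but this is a cosmetic difference.
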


\begin{proof}[Proof of \cref{lem:phi-dr}]

First consider the case where $q^{(t)} \in \mathbb Q^{(t)}_{\text{obs}}$ and $h^{(t)} \in L_{2,\ps{t}}(W_t,A_t)$. In this case, we have
\begin{align*}
    &\es{t}\left[ q^{(t)}(Z_t,A_t) (\indicator{A_t=E_t} Y_t - h^{(t)}(W_t,A_t)) + \sum_{a \in \aset} h^{(t)}(W_t,a) \right] \\
    &= \es{t}t[ q^{(t)}(Z_t,A_t) \indicator{A_t=E_t} Y_t] + \es{t}\left[ \sum_{a \in \aset} h^{(t)}(W_t,a) \right] \\
    &\qquad - \es{t}[ q^{(t)}(Z_t,A_t) h^{(t)}(W_t,A_t) ] \\
    &= \es{t+1}[Y_t] + \es{t}\left[ \sum_{a \in \aset} h^{(t)}(W_t,a) \right] - \es{t}[ q^{(t)}(Z_t,A_t) h^{(t)}(W_t,A_t) ] \,,
\end{align*}
where in the second equality we apply \cref{cor:phi-is}. Now, given $q^{(t)} \in \mathbb Q^{(t)}_{\text{obs}}$ we can further establish
\begin{align*}
    \es{t}[ q^{(t)}(Z_t,A_t) h^{(t)}(W_t,A_t) ] &= \es{t}\left[ \es{t}[q^{(t)}(Z_t,A_t) \mid W_t,A_t] h^{(t)}(W_t,A_t) \right] \\
    &= \es{t}\left[ \pd{t}(A_t \mid W_t)^{-1} h^{(t)}(W_t,A_t) \right] \\
    &= \es{t}\left[ \sum_{a \in \aset} h^{(t)}(W_t,A_t) \right]\,.
\end{align*}
Thus, plugging this into the previous equation we have
\begin{equation*}
    \es{t}\left[ q^{(t)}(Z_t,A_t) (\indicator{A_t=E_t} Y_t - h^{(t)}(W_t,A_t)) + \sum_{a \in \aset} h^{(t)}(W_t,a) \right] = \es{t+1}[Y_t] \,.
\end{equation*}

Next, instead consider the case where $h^{(t)} \in \mathbb H^{(t,\phi)}_{\text{obs}}$ and $q^{(t)} \in L_{2,\ps{t}}(Z_t,A_t)$. In this case, we have
\begin{align*}
    &\es{t}\left[ q^{(t)}(Z_t,A_t) (\indicator{A_t=E_t} Y_t - h^{(t)}(W_t,A_t)) + \sum_{a \in \aset} h^{(t)}(W_t,a) \right] \\
    &= \es{t} \left[ q^{(t)}(Z_t,A_t) \es{t}[ \indicator{A_t=E_t} Y_t - h^{(t)}(W_t,A_t) \mid Z_t, A_t] \right] + \es{t}\left[ \sum_{a \in \aset} h^{(t)}(W_t,a) \right] \\
    &= 0 + \es{t+1}[Y_t] \,,
\end{align*}
where the second equality follows from \cref{cor:phi-reg} and the fact that $h^{(t)} \in \mathbb H^{(t,\phi)}_{\text{obs}}$.  Therefore, under either conditions we have our desired result.

\end{proof}

Now that we have established these preliminary lemmas, we are ready to present the main proof.

\begin{proof}[Proof of \cref{thm:identification-pci-general}]

First, we have assumed \cref{asm:pci-reduction,asm:bridge}, as well as the fact that $q^{(t)} \in \mathbb Q^{(t)}_{\text{obs}}$ and $h^{(t)} \in \mathbb H^{(t,\phi)}_{\text{obs}}$, so it follows from \cref{cor:phi-is,cor:phi-reg,lem:phi-dr} that for any of the choices of $\phi^{(t+1,s)}_{\text{IS}}$, $\phi^{(t+1,s)}_{\text{Reg}}$, or $\phi^{(t+1,s)}_{\text{DR}}$ for defining each $Y_t^{(s)}$ term (for $t<s$) we have
\begin{equation*}
    \es{t}[\phi^{(t,s)}(Z_t,W_t,A_t,E_t,Y_t^{(s)})] = \es{t+1}[Y_t^{(s)}]\,,
\end{equation*}
which holds for every $t < s$. Furthermore, plugging in the recursive definition of $Y_{t-1}^{(s)}$, the previous equation is equivalent to
\begin{equation*}
    \es{t}[Y_{t-1}^{(s)}] = \es{t+1}[Y_t^{(s)}]\,,
\end{equation*}
which again holds for every $t < s$. Therefore, by backward induction we have
\begin{equation*}
    \es{1}[Y_0^{(s)}] = \es{s+1}[R_s]\,.
\end{equation*}
However, by construction $\ps{1} = \pb$, and the distribution of $R_s$ under $\ps{s+1}$ is the same as under $\pe$, so therefore we have $\eb[Y_0^{(s)}] = \epe[R_s]$, as required.

\end{proof}

\section{Proof of Consistency and Asymptotic Normality}
\label{apx:consistency-proof}

We will prove this theorem by appealing to \citet[Theorem 3.1]{chernozhukov2016double}. Therefore, this proof will consist of establishing the conditions of this theorem. We will first present a lemma establishing the Newman orthogonality property of this influence function, which not only is a condition of \citet[Theorem 3.1]{chernozhukov2016double} but an important property in its own right, before presenting the rest of the proof.

In what follows below, for any generic quantity $\Psi$ that depends on our nuisance functions, we will use the notation $\hat\Psi$ to refer to the value of $\Psi$ using the estimated nuisance functions $\hat q^{(t)}$ and $\hat h^{(t)}$ in place of $q^{(t)}$ and $h^{(t)}$ respectively for each $t \in [H]$, and define $\Delta\Psi = \hat\Psi - \Psi$. In addition, for any $r \in [0,1]$ we let $\Psi|_r$ refer to the value of $\Psi$ using the nuisances $q_r{(t)} = q^{(t)} + r\Delta q^{(t)}$ and $h_r^{(t)} = h^{(t)} + r\Delta h^{(t)}$ in place of $q^{(t)}$ and $h^{(t)}$ respectively for each $t \in [H]$, and define $\Delta_r\Psi=\Psi|_r-\Psi$. We note that according to these definitions, $\Psi = \Psi|_0$, $\hat\Psi=\Psi|_1$, and $\Delta\Psi=\Delta_1\Psi$.
In what follows below we will treat $\Delta q^{(t)}$ and $\Delta h^{(t)}$ as non-random square integrable functions with the same signature as $q^{(t)}$ and $h^{(t)}$ respectively for each $t \in [H]$, which may take arbitrary values. This is in contrast to previous sections, where $\hat\Psi$ was treated as a random quantity with respect to the sampling distribution of the $n$ i.i.d. behavior trajectories.
Finally, we note that it is trivial to verify that for any pair of quantities $\Psi$ and $\Psi'$ we have $\Delta_r(\Psi + \Psi') = \Delta_r\Psi + \Delta_r\Psi'$, and $\Delta_r(\Psi\Psi') = (\Delta_r\Psi)\Psi' + \Psi(\Delta_r\Psi') + (\Delta_r\Psi)(\Delta_r\Psi')$, which we will frequently apply in the derivations below without further explanation.

\begin{lemma}
\label{lem:orthogonality}

Under the conditions of \cref{thm:identification-pci}, as well as the additional assumption that $\|q^{(t)}(Z_t,A_t)\| < \infty$ and $\|h^{(t)}(W_t,A_t)\| < \infty$ for each $t \in [H]$,
$\psidr$ satisfies Neyman orthogonality with respect to the nuisances $q^{(t)}$ and $h^{(t)}$ for all $t \in [H]$. More concretely, for any arbitrary functions $\Delta q^{(t)}(Z_t,A_t)$ and $\Delta h^{(t)}(W_t,A_t)$ for each $t \in [H]$ that have finite supremum norm, we have
\begin{equation*}
    \left. \frac{\partial}{\partial r}\right|_{r=0} \eb[\psidr(\tau_H)|_r] = 0\,.
\end{equation*}

\end{lemma}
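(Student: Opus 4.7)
The plan is to expand the first-order perturbation by the product rule, decompose it by which nuisance is being varied, and then kill each piece in expectation using the two conditional moment identities from \cref{lem:nuisance-estimation}. Concretely, I will write the derivative at $r=0$ as $\eb[D] = \sum_{s=1}^H \left(\eb[D^h_s] + \eb[D^q_s]\right)$, where $D^h_s$ collects the first-order perturbation coming from moving only $h^{(s)} \mapsto h^{(s)} + r\Delta h^{(s)}$ and $D^q_s$ the perturbation coming from moving only $q^{(s)} \mapsto q^{(s)} + r\Delta q^{(s)}$. The boundedness parts of the hypothesis ensure that $\eb[\psidr|_r]$ is smooth in $r$ and that differentiation under the expectation is legal, so this decomposition is well defined.

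The $h^{(s)}$ piece is the easy half. In the closed form \eqref{eq:psidr}, the function $h^{(s)}$ appears only in the summand at $t=s$, so
\begin{equation*}
D^h_s = \gamma^{s-1} \eta_s \left( \sum_{a \in \aset} \Delta h^{(s)}(W_s,a) - q^{(s)}(Z_s,A_s)\, \Delta h^{(s)}(W_s,A_s) \right).
\end{equation*}
Taking expectations and invoking the first identity of \cref{lem:nuisance-estimation} with $g = \Delta h^{(s)}$ gives $\eb[D^h_s] = 0$ immediately.

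The $q^{(s)}$ piece is where the real work is. Here $q^{(s)}$ enters $\psidr$ both directly (through the $-\eta_s q^{(s)}(Z_s,A_s) h^{(s)}(W_s,A_s)$ term at $t=s$) and indirectly through every weight $\eta_t$ and $\eta_{t+1}$ with $t \geq s$, via the factorization $\eta_t = \eta_s\, q^{(s)}(Z_s,A_s)\,\indicator{A_s=E_s} \prod_{s'=s+1}^{t-1} q^{(s')}(Z_{s'},A_{s'})\,\indicator{A_{s'}=E_{s'}}$. Differentiating and factoring out the common scalar $\gamma^{s-1}\eta_s\,\Delta q^{(s)}(Z_s,A_s)$, I will collect the remaining bracket and recognize it as precisely the unrolled recursion of $Y_s$ defined in \cref{cor:psidr}, so that the bracket equals
\begin{equation*}
\indicator{A_s = E_s}\, Y_s \;-\; h^{(s)}(W_s, A_s).
\end{equation*}
In other words, $D^q_s = \gamma^{s-1}\eta_s\,\Delta q^{(s)}(Z_s,A_s)\bigl(\indicator{A_s=E_s} Y_s - h^{(s)}(W_s,A_s)\bigr)$. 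Conditioning on $(Z_s,A_s)$ and applying the second identity of \cref{lem:nuisance-estimation} then yields $\eb[D^q_s] = 0$.

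The main obstacle will be the bookkeeping in the $q^{(s)}$ step: every $\eta_t$ and $\eta_{t+1}$ with $t \ge s$ contributes a derivative, and one must match the $R_t$ contributions against the $\sum_a h^{(t)} - q^{(t)} h^{(t)}$ contributions (shifted by $\gamma^{t-s}$) to see that together they rebuild the closed-form expansion of $Y_s$ obtained by iterating $Y_{t-1} = R_{t-1} + \gamma G_t$ from \cref{cor:psidr}. Once that algebraic collapse is identified, the proof is just a clean double application of \cref{lem:nuisance-estimation}; the boundedness assumptions on $q^{(t)}$, $h^{(t)}$, and on the perturbations guarantee all relevant integrability for the conditional expectation manipulations.
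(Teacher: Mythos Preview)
Your proposal is correct and takes a genuinely different route from the paper's proof. The paper works with the \emph{recursive} representation $\psidr(\tau_H)|_r = \sum_a h^{(1)}_r(W_1,a) + q^{(1)}_r(Z_1,A_1)\bigl(\indicator{A_1=E_1}Y_{1,r}-h^{(1)}_r(W_1,A_1)\bigr)$, differentiates once, and then proves by backward induction on $t$ that $\es{t}\bigl[q^{(t)}(Z_t,A_t)\indicator{A_t=E_t}\,\partial_r|_{r=0} Y_{t,r}\bigr]=0$, invoking the $\es{t}$-level identities \cref{eq:tilde-h,eq:tilde-q} together with \cref{lem:phi-is} at each step. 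You instead work with the \emph{closed form} \eqref{eq:psidr}, split the Gateaux derivative as $\sum_s D^h_s + \sum_s D^q_s$ according to which nuisance is perturbed, and then kill each summand using the $\eb$-level identities of \cref{lem:nuisance-estimation} with the $\eta_s$ weights already built in. The key algebraic observation in your argument---that after factoring out $\gamma^{s-1}\eta_s\,\Delta q^{(s)}(Z_s,A_s)$ the residual bracket collapses to $\indicator{A_s=E_s}Y_s - h^{(s)}(W_s,A_s)$---is exactly the unrolling of \eqref{eq:yt}, and is correct. Your route is more self-contained (it never leaves $\pb$ and never invokes \cref{lem:phi-is}), whereas the paper's recursive induction more transparently mirrors the nested DR structure and would generalize more readily if one wanted to prove higher-order orthogonality or handle variants of $\phi^{(t,s)}$. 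One caution: the paper devotes a nontrivial block to justifying the interchange of $\partial_r$ and $\eb$ via dominated convergence using the boundedness hypotheses; your sentence on this is correct in spirit but you should expect to supply those bounds explicitly in a full write-up.
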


\begin{proof}[Proof of \cref{lem:orthogonality}]

First, we note that
\begin{equation*}
    \psidr(\tau_H)|_r = \sum_{a \in \aset} h^{(1)}_r(W_1,a) + q^{(1)}_r(Z_1,A_1) \left(\indicator{A_1=E_1}Y_{1,r} - h^{(1)}_r(W_1,A_1)\right)\,,
\end{equation*}
where
\begin{equation*}
    Y_{t-1,r} = R_{t-1} + \gamma \left( \sum_{a \in \aset} h^{(t)}_r(W_t,a) + q^{(t)}_r(Z_t,A_t) \left( \indicator{A_t=E_t}Y_{t,r} - h^{(t)}_r(W_t,A_t) \right) \right) \\
\end{equation*}
for all $t \in \{2,3,\ldots,H\}$, and
\begin{equation*}
    Y_{H,r} = R_H \,.
\end{equation*}
Therefore, we have
\begin{align*}
    \left. \frac{\partial}{\partial r} \right|_{r=0} \psidr(\tau_H)|_r &= \sum_{a \in \aset} \Delta h^{(1)}(W_1,a) - q^{(1)}(Z_1,A_1) \Delta h^{(1)}(W_1,A_1)  \\
    &\quad + \left( \Delta q^{(1)}(Z_1,A_1) \right) \left( \indicator{A_1=E_1} Y_{1,0} - h^{(1)}(W_1,A_1) \right) \\
    &\quad + q^{(1)}(Z_1,A_1) \indicator{A_1=E_1} \left. \frac{\partial}{\partial r} \right|_{r=0} Y_{1,r} \,.
\end{align*}
Now, following an identical argument as in \cref{lem:bridge-observed-cross}, for any $t \in [H]$ and arbitrary functions $\tilde h$ and $\tilde q$ with he same signature as $h^{(t)}$ and $q^{(t)}$ we have
\begin{align}
    \label{eq:tilde-h}
    \es{t} \left[ \sum_{a \in \aset} \tilde h(W_t,a) - q^{(t)}(Z_t,A_t) \tilde h(W_t,A_t) \right] &= 0 \\
    \label{eq:tilde-q}
    \es{t} \left[ \tilde q(Z_t,A_t)  \left( \indicator{A_t=E_t} Y_{t,0} - h^{(t)}(W_t,A_t) \right) \right] &= 0 \,.
\end{align}
In addition, it is straightforward to argue via backward induction on $t$ that
\begin{equation*}
    \es{t} \left[ q^{(t)}(Z_t,A_t) \indicator{A_t=E_t)} \left. \frac{\partial}{\partial r} \right|_{r=0} Y_{t,r} \right] = 0\,,
\end{equation*}
for every $t \in [H]$. For the base case where $t=H$, this is straightforward since $Y_{H,r} = R_H$, which doesn't depend on $r$. Otherwise, for the inductive case, we first note that following \cref{lem:phi-is}, for any $t\leq H$ we have
\begin{equation*}
    \es{t-1} \left[ q^{(t-1)}(Z_{t-1},A_{t-1}) \indicator{A_{t-1}=E_{t-1})} \left. \frac{\partial}{\partial r} \right|_{r=0} Y_{t-1,r} \right] = \es{t} \left[ \left. \frac{\partial}{\partial r} \right|_{r=0} Y_{t-1,r} \right] \,,
\end{equation*}
and furthermore we have
\begin{align*}
    \left. \frac{\partial}{\partial r} \right|_{r=0} Y_{t-1,r} &= \gamma \sum_{a \in \aset} \Delta h^{(t)}(W_t,a) - \gamma q^{(t)}(Z_t,A_t) \Delta h^{(t)}(W_t,A_t)  \\
    &\quad + \gamma \Delta q^{(t)}(Z_t,A_t)  \left( \indicator{A_t=E_t} Y_{t,0} - h^{(t)}(W_t,A_t) \right) \\
    &\quad + \gamma q^{(t)}(Z_t,A_t) \indicator{A_t=E_t} \left. \frac{\partial }{\partial r} \right|_{r=0} Y_{t,r} \,.
\end{align*}
Now, given \cref{eq:tilde-h,eq:tilde-q} it immediately follows that the first three terms have mean zero under $\es{t}$. Furthermore, the final term also has mean zero by the inductive assumption. This completes the backward induction, and therefore putting the above together we have 
\begin{equation*}
    \eb\left[ \left. \frac{\partial}{\partial r}\right|_{r=0} \psidr(\tau_H)|_r \right] = 0\,.
\end{equation*}

It only remains to argue that we can swap the order of differentiation and integration in this equation. By the mean value theorem, we have
\begin{equation*}
    \frac{\psidr(\tau_H)|_r - \psidr(\tau_H)}{r} = \left. \frac{\partial}{\partial r}\right|_{r=r'} \psidr(\tau_H)|_r \,,
\end{equation*}
for some $r' \in (0, r)$. Therefore, it follows that
\begin{equation*}
    \left. \frac{\partial}{\partial r}\right|_{r=0} \eb\left[  \psidr(\tau_H)|_r \right] = \lim_{r' \to 0} \eb\left[ \left. \frac{\partial}{\partial r} \right|_{r=r'} \psidr(\tau_H)|_r \right]\,.
\end{equation*}
Now, clearly $\left. \frac{\partial}{\partial r} \right|_{r=r'} \psidr(\tau_H)|_r$ converges point-wise to $\left. \frac{\partial}{\partial r} \right|_{r=0} \psidr(\tau_H)|_r$ as $r' \to 0$. Furthermore, it follows trivially from the definitions of $q_r^{(t)}$ and $h_r^{(t)}$ that $\left. \frac{\partial}{\partial r} \right|_{r=r'} q^{(t)}_r = \Delta q^{(t)}$, and $\left. \frac{\partial}{\partial r} \right|_{r=r'} h^{(t)}_r = \Delta h^{(t)}$ for every $t \in [H]$ and $r' \in [0, 1]$. Therefore, for any $r' \in [0,1]$ we have
\begin{align*}
    &\left\| \left. \frac{\partial}{\partial r} \right|_{r=r'} \psidr(\tau_H)|_r \right\|_\infty  \\
    &\leq \sum_{a \in \aset} \| \Delta h^{(1)}(W_1,a) \|_\infty \\
    &\quad + \|\Delta q^{(1)} (Z_1, A_1) \|_\infty \Big( \|Y_{1,r'}\|_\infty + \|h^{(1)}(W_1,A_1)\|_\infty + \|\Delta h^{(1)}(W_1,A_1)\|_\infty \Big) \\
    &\quad + \Big( \|q^{(1)}(Z_1,A_1)\|_\infty + \|\Delta q^{(1)}(Z_1,A_1)\|_\infty \Big) \\
    &\qquad\qquad \cdot \left( \| \Delta h^{(1)}(W_1,A_1) \|_\infty + \left\| \left. \frac{\partial}{\partial r} \right|_{r=r'} Y_{1,r} \right\|_\infty \right) \,.
\end{align*}
Now, by assumption we know that $q^{(t)}$, $h^{(t)}$, $\Delta q^{(t)}$, and $\Delta h^{(t)}$ all have bounded supremum norm, so therefore $\left. \frac{\partial}{\partial r} \right|_{r=r'} (\psidr(\tau_H))_r$ is uniformly bounded in supremum norm over $r' \in [0,1]$, as long as we can uniformly bound $\|Y_{1,r'}\|_\infty$ and $\| \left. \frac{\partial}{\partial r} \right|_{r=r'} Y_{1,r}\|_\infty$. Therefore, our final result would follow by the dominated convergence theorem as long as we can show these two bounds. In order to show this, bote that for any $t < H$ and $r' \in [0,1]$ we can bound
\begin{align*}
    \left\| Y_{t,r}  \right\|_\infty &\leq \| R_t \|_\infty + \gamma \sum_{a \in \aset} \left( \| h^{(t+1)}(W_{t+1},a) \|_\infty + \| \Delta h^{(t+1)}(Z_{t+1},a) \|_\infty \right) \\
    &\quad + \gamma \left( \| q^{(t+1)}(W_{t+1},A_{t+1}) \|_\infty + \| \Delta q^{(t+1)}(Z_{t+1},A_{t+1}) \|_\infty \right) \\
    &\qquad \cdot \left( \| Y_{t+1, r} \|_\infty + \| h^{(t+1)}(W_{t+1},a) \|_\infty + \| \Delta q^{(t+1)}(Z_{t+1},A_{t+1}) \|_\infty \right) \,,
\end{align*}
and
\begin{align*}
    &\left\| \left. \frac{\partial}{\partial r} \right|_{r=r'} Y_{t,r} \right\|_\infty \\
    &\leq \gamma \sum_{a \in \aset} \| \Delta h^{(t+1)}(W_1,a) \|_\infty \\
    &\quad + \gamma \|\Delta q^{(t+1)} (Z_{t+1}, A_{t+1}) \|_\infty \\
    &\qquad \cdot \left( \|Y_{t+1,r'}\|_\infty + \|h^{(t+1)}(W_{t+1},A_{t+1})\|_\infty + \|\Delta h^{(t+1)}(W_{t+1},A_{t+1})\|_\infty \right) \\
    &\quad + \gamma \left( \| q^{(t+1)}(Z_{t+1},A_{t+1}) \|_\infty + \| \Delta q^{(t+1)}(Z_{t+1},A_{t+1}) \|_\infty \right) \\
    &\qquad \cdot \left( \| \Delta h^{(t+1)}(W_{t+1},A_{t+1}) \|_\infty + \left\| \left. \frac{\partial}{\partial r} \right|_{r=r'} Y_{t+1,r} \right\|_\infty \right) \,,
\end{align*}
and for $t=H$ we have $\|Y_{H,r'}\|_\infty = \|R_H\|_\infty$ and $\left\| \left. \frac{\partial}{\partial r} \right|_{r=r'} Y_{H,r} \right\|_\infty = 0$ for all $r'$. Therefore, given this and the fact that all rewards are bounded by assumption, it follows from backward induction that $\|Y_{1,r'}\|_\infty$ and $\| \left. \frac{\partial}{\partial r} \right|_{r=r'} Y_{1,r}\|_\infty$ are uniformly bounded over $r' \in [0,1]$. Thus, the final result follows by the dominated convergence theorem as discussed above.

\end{proof}

\begin{proof}[Proof of \cref{thm:consistency}]

We will establish this proof based on directly applying \citet[Theorem 3.1]{chernozhukov2016double}. We note that this theorem immediately implies \cref{thm:consistency}, as long as we can establish that its conditions hold. We will proceed by establishing these conditions one by one. Specifically, we need to establish their Assumption 3.1 and Assumption 3.2.

We note that \citet[Theorem 3.1]{chernozhukov2016double} considers linear scores, where the parameter of interest $\theta$ is the solution to
\begin{equation}
\label{eq:linear-score}
    \e[\psi(\tau_H;\theta,\xi)] = \e[\psi^a(\tau_H;\xi) \theta + \psi^b(\tau_H;\xi)] = 0\,,
\end{equation}
where $\xi$ represents the nuisance functions, and $\psi(\tau_H;\theta,\xi) = \psi^a(\tau_H;\xi) \theta + \psi^b(\tau_H;\xi)$. In our case, the notation above corresponds to our quantities of interest as follows: $\theta = v_\gamma(\pi_e)$ is the target policy value; $\xi = q^{(1)},\ldots,q^{(H)},h^{(1)},\ldots,h^{(H)}$; $\psi^a(\tau_H;\xi) = -1$; and $\psi^b(\tau_H;\xi)$ is equal to $\psidr(\tau_H)$ with the true nuisances replaced with $\xi$.

Let $\Xi(c,c',c'')$ denote the set of all possible nuisances $\xi$ such that, for all $t \in [H]$,  $a \in \aset$ and $\Psi \in \{q^{(t)}(Z_t,A_t), h^{(t)}(W_t,a)\}$, we have:
\begin{align*}
    \|\Delta \Psi\|_\infty &\leq c \\
    \|\Delta \Psi\|_{2,\pb} &\leq c' \\
    \|\Delta q^{(t)}(Z_t,A_t) \Delta h^{(t)}(W_t,a)\|_{2,\pb} &\leq c'' \\
    \|\Delta q^{(t')}(Z_{t'},A_{t'}) \Delta \Psi\|_{2,\pb} &\leq c'' \quad \forall t' < t \,.
\end{align*}
In addition, let $\xi_0$ denote the value of the true nuisances, and let $\hat\xi_n$ denote the estimated nuisances using $n$ trajectories $\tau_H^{(1)},\ldots,\tau_H^{(n)}$. We note that by \cref{asm:estimation-error} there must exist some constant $c^*$ and sequences $c'_n \in o(1)$ and $c''_n \in o(n^{-1/2})$ such that $P(\hat\xi_n \in \Xi(c^*,c'_n,c''_n)) \to 1$ as $n \to \infty$, and let $\Xi_n = \Xi(c^*,c'_n,c''_n)$. We also note that according to the above definitions $\psidr(\tau_H) = \psi^b(\tau_H;\xi_0)$, and $\xi_0 \in \Xi_n$ for all $n$.

First, let us establish \citet[Assumption 3.1]{chernozhukov2016double}. For (i) and (ii) we note that $v_\gamma(\pi_e)$ satisfies \cref{eq:linear-score} trivially given \cref{thm:identification-pci}, and that this score is linear in $\theta$. For (iii) we note that following an almost identical argument as in the proof of \cref{lem:orthogonality} by applying the dominated convergence theorem, it is trivial that
$\e[\psidr(\tau_H)]$ is twice Gateaux differentiable for any $\xi - \xi_0 \in \Xi_n$, since for all such directions we have $\|\Delta q^{(t)}(Z_t,A_t)\|_\infty < \infty$ and $\|\Delta h^{(t)}(W_t,A_t)\|_\infty < \infty$. Condition (iv) holds trivially with nuisance set $\Xi_n$ for \emph{any} sequence $\delta_n$, since by \cref{lem:orthogonality} we have exact Neyman orthogonality. Finally, (v) holds trivially for \emph{any} $c_0 \in (0,1)$ and $c_1 \in (1,\infty)$, since $\psi^a(\tau_H;\xi)$ is constant.

Next, let us establish \citet[Assumption 3.2]{chernozhukov2016double}. For (i), we note that by \cref{asm:estimation-error} we have that $\hat\xi_n \in \Xi_n$ with probability approaching one. For (ii), we note that given the definition of $\Xi_n$ and the fact that rewards are bounded, it trivially follows that for \emph{any} $q>2$ we have
\begin{equation*}
    \sup_{\xi \in \Xi(c^*)} \| \psi^b(\tau_H;\xi) \|_{q,\pb} \leq \sup_{\xi \in \Xi(c^*)} \| \psi^b(\tau_H;\xi) \|_\infty < \infty\,,
\end{equation*}
from which (ii) follows trivially, since we showed that part (v) of \citet[Assumption 3.2]{chernozhukov2016double} holds for \emph{any} arbitrary $c_1 \in (1,\infty)$.
For (iii), we can first note that clearly
\begin{equation*}
    \sup_{\xi \in \Xi(c^*)} | \eb[\psi^a(\tau_H;\xi)] - \eb[\psi^a(\tau_H;\xi_0)] | = 0\,,
\end{equation*}
since $\psi^a$ is constant. Second, we note that for any $\hat\xi \in \Xi_n$, we have
\begin{align*}
    &\|\psi(\tau_H;v_\gamma(\pi_e),\hat \xi) - \psi(\tau_H;v_\gamma(\pi_e),\xi_0) \|_{2,\pb} \\
    &= \|\Delta \psi^b(\tau_H;\xi) \|_{2,\pb} \\
    &\leq \sum_{a \in \aset} \left\| \Delta h^{(1)}(W_1,a) \right\|_{2,\pb} + \|q^{(1)}(Z_1,A_1)\|_\infty \left( \left\|\Delta h^{(1)}(W_1,A_1) \right\|_{2,\pb} + \left\|\Delta Y_1 \right\|_{2,\pb} \right) \\
    &\qquad + \|\Delta q^{(1)}(Z_1,A_1)\|_{2,\pb} \left( \left\|h^{(1)}(W_1,A_1) \right\|_\infty + \left\|Y_1 \right\|_\infty \right) \\
    &\qquad + \|\Delta q^{(1)}(Z_1,A_1)\|_{2,\pb} \left( \left\|\Delta h^{(1)}(W_1,A_1) \right\|_{2,\pb} + \left\|\Delta Y_1 \right\|_{2,\pb} \right) \\
    &\leq |\aset| c'_n + c^*(c'_n + \left\|\Delta Y_1 \right\|_{2,\pb}) + (c^* + \left\|\Delta Y_1 \right\|_\infty) c'_n + (c'_n + \left\|\Delta Y_1 \right\|_{2,\pb}) c'_n \,,
\end{align*}
and furthermore for each $1 < t \leq H$ we have
\begin{align*}
    &\left\|\Delta Y_{t-1} \right\|_{2,\pb} \\
    &\leq \gamma \sum_{a \in \aset} \left\| \Delta h^{(t)}(W_t,a) \right\|_{2,\pb} + \gamma \|q^{(t)}(Z_t,A_t)\|_\infty \left( \left\|\Delta h^{(t)}(W_t,A_t) \right\|_{2,\pb} + \left\|\Delta Y_t \right\|_{2,\pb} \right) \\
    &\qquad + \gamma \|\Delta q^{(t)}(Z_t,A_t)\|_{2,\pb} \left( \left\|h^{(t)}(W_t,A_t) \right\|_\infty + \left\|Y_t \right\|_\infty \right) \\
    &\qquad + \gamma \|\Delta q^{(t)}(Z_t,A_t)\|_{2,\pb} \left( \left\|\Delta h^{(t)}(W_t,A_t) \right\|_{2,\pb} + \left\|\Delta Y_t \right\|_{2,\pb} \right) \\
    &\leq \gamma |\aset| c'_n + \gamma c^*(c'_n + \left\|\Delta Y_t \right\|_{2,\pb}) + \gamma (c^* + \left\|\Delta Y_t \right\|_\infty) c'_n + \gamma (c'_n + \left\|\Delta Y_t \right\|_{2,\pb}) c'_n \,.
\end{align*}
Now, as argued in the proof of \cref{lem:orthogonality}, it easily follows by backward induction on $t$ that $\|Y_t\|_\infty < \infty$. Therefore, given the above equation it easily follows again by backward induction on $t$ that $\|\Delta Y_t \|_{2,\pb} \leq \delta^{(1)}_n$ and $\|\Delta \psi^b(\tau_H;\xi) \|_{2,\pb} \leq \delta^{(1)}_n$ for all $\hat\xi \in \Xi_n$, where the sequence $\delta^{(1)}_n \in o(1)$ does not depend on $\hat\xi$. Thus, we have
\begin{equation*}
    \sup_{\xi \in \Xi_n} \left\|\psi(\tau_H;v_\gamma(\pi_e),\hat \xi) - \psi(\tau_H;v_\gamma(\pi_e),\xi_0) \right\|_{2,\pb} \leq \delta^{(1)}_n\,.
\end{equation*}
Similarly, recalling that
\begin{equation*}
    \psidr(\tau_H) = \sum_{t=1}^H \gamma^{t-1} \left( \eta_{t+1}R_t + \eta_t \sum_{a \in \aset} h^{(t)}(W_t,a) - \eta_t q^{(t)}(Z_t,A_t) h^{(t)}(W_t,A_t) \right)\,,
\end{equation*}
for any $\hat\xi \in \Xi_n$ we also have
\begin{align*}
    &\frac{\partial^2}{\partial r^2} \psi^b(\tau_H;\xi|_r) \\
    &=  \sum_{t=1}^H \gamma^{t-1} \frac{\partial^2}{\partial r^2} \eta_{t+1,r} \left(R_t + \sum_{a \in \aset} h_r^{(t)}(W_t,a) - q_r^{(t)}(Z_t,A_t) h_r^{(t)}(W_t,A_t) \right)  \\
    &\qquad + \sum_{t=1}^H \sum_{a \in \aset} \gamma^{t-1} \frac{\partial}{\partial r} \eta_{t,r}  \Delta h^{(t)}(W_t,a)  \\
    &\qquad + \sum_{t=1}^H \gamma^{t-1} \frac{\partial}{\partial r} \eta_{t,r} \left( \Delta q^{(t)}(Z_t,A_t) h_r(W_t,A_t) + \Delta h^{(t)}(W_t,A_t) q_r^{(t)}(Z_t,A_t) \right) \,,
\end{align*}
and
\begin{align*}
    \frac{\partial}{\partial r} \eta_{t,r} &= \sum_{s \in [t]} \Delta q^{(s)}(Z_s,A_s) b_r(s) \\
    \frac{\partial^2}{\partial r^2} \eta_{t,r} &= \sum_{s \in [t]} \sum_{s' \in [t] \setminus \{s\}} \Delta q^{(s)}(Z_s,A_s) \Delta q^{(s')}(Z_{s'},A_{s'}) b_r(s,s')
\end{align*}
where
\begin{align*}
    b_r(s) &= \indicator{E_s=A_s} \prod_{h \in [t] \setminus \{s\}} \indicator{E_{h}=A_{h}} q_r^{(h)}(Z_{h},A_{h}) \\
    b_r(s,s') &= \indicator{E_s=A_s} \indicator{E_{s'}=A_{s'}} \prod_{h \in [t] \setminus \{s,s'\}} \indicator{E_{h}=A_{h}} q_r^{(h)}(Z_{h},A_{h})\,.
\end{align*}
Therefore, it easily follows from the above and the definition of $\Xi_n$ that for all $\hat\xi \in \Xi_n$ we have
\begin{align*}
    \left\| \frac{\partial^2}{\partial r^2} \psi^b(\tau_H;\xi|_r) \right\|_\infty \leq k_1 \\
    \left\| \frac{\partial^2}{\partial r^2} \psi^b(\tau_H;\xi|_r) \right\|_{1,\pb} \leq k_2 c''_n \,,
\end{align*}
for some constants $k_1$ and $k_2$ that don't depend on $\hat\xi$ or on $r$. Therefore, we can apply the dominated convergence theorem to obtain
\begin{equation*}
    \sup_{r \in (0,1), \xi \in \Xi_n} \left| \eb{\psi(\tau_H;v_\gamma(\pi_e),r\xi + (1-r)\xi_0)} \right| \leq k_2 c''_n \,.
\end{equation*}
Putting the above together, we have condition (iii) with $\delta_n = \max(\delta^{(1)}_n, k_2 \sqrt{n} c''_n)$. Finally, condition (iv) follows since by assumption the variance of $\psidr(\tau_H)$ is non-zero.

Therefore, we have established the conditions of \citet[Theorem 3.1]{chernozhukov2016double}, so applying this theorem we have
\begin{equation*}
    \sqrt{n}(\hat v^{(n)}_\gamma(\pi_e) - v_\gamma(\pi_e)) = \frac{1}{\sqrt{n}} \sum_{i=1}^n \psidr(\tau_H^{(i)}) + o_p(1)\,,
\end{equation*}
which by the central limit theorem and Slutsky's theorem converges in distribution to $\mathcal N(0,\sigma^2_{\textup{DR}})$.

\end{proof}

\section{Background on Semiparametric Efficiency Theory}
\label{apx:semiparametric}

In this appendix we provide a brief review of semiparametric efficiency theory, as relevant for the theory in this paper. We will consider a random variable $X \in \mathcal X$, a model (set of distributions) $\mathcal M$, where each $P \in \mathcal M$ defines a distribution for $X$, and some scalar parameter $v : \mathcal M \mapsto \mathbb R$. Also let $\mu$ denote some dominating measure such that $P \ll \mu$ for every $P \in \mathcal P$, and denote the corresponding density as $dP/d\mu$.
Given i.i.d. observations $X_1,\ldots,X_n$ sampled from some $P_0 \in \mathcal M$, semiparametric efficiency theory concerns itself with the limits on the estimation of $v(P_0)$, given that the estimator is required to be consistent and ``well behaved'' (defined concretely below) at all $P$ in a neighborhood of $P_0$ in the model $\mathcal M$.

\subsection{Definitions}

\begin{definition}[Influence function of estimators]
An estimator sequence $\hat v_n(X_{1:n})$ is asymptotically linear (AL) with influence function (IF) $\psi_{P_0}(X)$ if
\begin{equation*}
    \sqrt{n}(\hat v_n(X_{1:n}) - v(P_0)) = \frac{1}{\sqrt{n}} \sum_{i=1}^n \psi_{P_0}(X) + o_p(1)
\end{equation*}
where $\e_{P_0}[\psi_{P_0}(X)] = 0$.
\end{definition}

\begin{definition}[One-dimensional submodel and its score function]
A one-dimensional submodel of $\mathcal M$ passing through $P$ is a set of distributions $\{P_\epsilon : \epsilon \in U\} \subseteq \mathcal M$, where:
\begin{enumerate}
    \item $P_0 = P$
    \item The score function $s(X;\epsilon) = (d / d \epsilon) \log( (d P_\epsilon / d \mu)(X))$ exists
    \item There exists $u>0$ \st $\int \sup_{|\epsilon| \leq u} |s(X;\epsilon)| (d P_\epsilon / d \mu)(X) d \mu(X) < \infty$ and $\e[\sup_{|\epsilon| \leq u} s(X;\epsilon)^2] < \infty$\,.
\end{enumerate}
Also, we define $s(X) = s(X;0)$, which we refer to as the score function of the submodel at $P_0$, Note that by property (3) we have $s(X) \in L_{2,P}(X)$. We also note that these conditions on the parametric sub-model are slightly stronger than those in some related work; these are needed to prove our semiparametric efficiency results with full rigor, and our definitions below should be interpreted \wrt such well-behaved submodels.
\end{definition}

\begin{definition}[Tangent space]
The tangent space of $\mathcal M$ at $P_0$ is the linear closure of the score function at $P_0$ of all one-dimensional submodels of $\mathcal M$ passing through $P_0$.
\end{definition}

Note that the tangent space is always a cone, since we can always redefine any one-dimensional parametric submodel replacing $\epsilon$ with any scalar multiple of $\epsilon$.

\begin{definition}[Pathwise differentiability]
A functional $v : \mathcal M \mapsto \mathbb R$ is pathwise differentiable at $P_0$ wrt $\mathcal M$ if there exists a mean-zero function $\psi_{P_0}(X)$, such that any one-dimensional submodel $\{P_\epsilon\}$ of $\mathcal M$ passing through $P_0$ with score function $s(X)$ satisfies
\begin{equation*}
    \left. \frac{d v(P_\epsilon)}{d \epsilon} \right|_{\epsilon=0} = \e[\psi_{P_0}(X) s(X)]\,.
\end{equation*}
\end{definition}

The function $\psi_{P_0}(X)$ is called a gradient of $v(P_0)$ at $P_0$ wrt $\mathcal M$. The efficient IF (EIF, or canonical gradient) of $v(P_0)$ wrt $\mathcal M$ is the unique gradient $\tilde \psi_{P_0}(X)$ of $v(P_0)$ at $P_0$ wrt $\mathcal M$ that belongs to the tangent space at $P_0$ wrt $\mathcal M$.

Finally, we define regular estimators, which are those whose limiting distribution is robust to local changes to the data generating process. This is what we alluded to above by ``well behaved'' estimators. Note that restricting attention to regular estimators excludes pathological behavior such as that of the super-efficient Hodges estimator. 

\begin{definition}[Regular estimators]
An estimator sequence $\hat v_n$ is called regular at $P_0$ for $v(P_0)$ wrt $\mathcal M$ if there exists a limiting probability measure $L$ such that, for any one-dimensional submodel $\{P_\epsilon\}$ of $\mathcal M$ passing through $P_0$, we have
\begin{equation*}
    \sqrt{n}(\hat v_n(X_{1:n}) - v(P_{1/\sqrt{n}})) \to L
\end{equation*}
in distribution as $n \to \infty$, where $X_{1:n}$ are distributed i.i.d. according to $P_{1/\sqrt{n}}$.
\end{definition}

Note that this property holds even if $\{P_\epsilon\}$ is chosen adversarially in response to $\hat v_n$.

\subsection{Characterizations}

The following characterizes some important equivalences based on the above definitions. The following are based on \citet[Theorm 3.1]{van1991differentiable}. 

\begin{theorem}[Influence functions are gradients]
Suppose that $\hat v_n(X_{1:n})$ is an AL estimator of $v(P_0)$ with influence function $\psi_{P_0}(X)$, and that $v(P_0)$ is pathwise differentiable at $P_0$ wrt $\mathcal M$. Then $\hat v_n(X_{1:n})$ is a regular estimator of $v(P_0)$ at $P_0$ wrt $\mathcal M$ if and only if $\psi_{P_0}(X)$ is a gradient of $v(P_0)$ at $P_0$ wrt $\mathcal M$.
\end{theorem}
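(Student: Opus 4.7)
The plan is to prove both directions via Le Cam's third lemma, using the fact that the conditions imposed on one-dimensional submodels are strong enough to yield local asymptotic normality (LAN) of the corresponding product experiments. The key observation is that under $P_0$, the joint limiting distribution of $\sqrt{n}(\hat v_n - v(P_0))$ and the log-likelihood ratio $\log \prod_{i=1}^n (d P_{1/\sqrt n}/dP_0)(X_i)$ along any submodel $\{P_\epsilon\}$ with score $s(X)$ is bivariate normal, from which the distribution under $P_{1/\sqrt n}$ can be read off.

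First I would compute the limiting distribution of $\sqrt n(\hat v_n - v(P_0))$ under $P_0$: by the assumed asymptotic linearity with influence function $\psi_{P_0}$, the central limit theorem yields $\sqrt n(\hat v_n - v(P_0)) \xrightarrow{d} \mathcal N(0,\e[\psi_{P_0}^2])$. Next, using the integrability conditions on the submodel score (property (3) of the submodel definition), I would invoke a standard quadratic-mean-differentiability argument to establish that the log-likelihood ratio has the expansion
\begin{equation*}
  \log \prod_{i=1}^n \frac{dP_{1/\sqrt n}}{dP_0}(X_i) = \frac{1}{\sqrt n}\sum_{i=1}^n s(X_i) - \tfrac{1}{2}\e[s(X)^2] + o_p(1),
\end{equation*}
so under $P_0$ the pair $(\sqrt n(\hat v_n - v(P_0)),\; \log\prod (dP_{1/\sqrt n}/dP_0))$ is asymptotically bivariate normal with covariance $\e[\psi_{P_0}(X)s(X)]$.

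Then I would apply Le Cam's third lemma to transfer: under $P_{1/\sqrt n}$, $\sqrt n(\hat v_n - v(P_0)) \xrightarrow{d} \mathcal N(\e[\psi_{P_0} s],\; \e[\psi_{P_0}^2])$. Pathwise differentiability of $v$ gives $\sqrt n(v(P_{1/\sqrt n}) - v(P_0)) \to \e[\psi_{P_0} s]$, so
\begin{equation*}
  \sqrt n(\hat v_n - v(P_{1/\sqrt n})) \xrightarrow{d} \mathcal N\bigl(\e[\psi_{P_0} s] - \tfrac{d}{d\epsilon}\bigl|_0 v(P_\epsilon),\; \e[\psi_{P_0}^2]\bigr).
\end{equation*}
Regularity requires the limit $L$ on the left to be the same for \emph{every} submodel. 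Taking the submodel $\{P_0\}$ (constant in $\epsilon$, with score $s\equiv 0$) forces $L = \mathcal N(0,\e[\psi_{P_0}^2])$, and then matching means for an arbitrary submodel gives $\e[\psi_{P_0}(X) s(X)] = (d/d\epsilon)|_0 v(P_\epsilon)$ for every score $s$ of a submodel through $P_0$. By linearity and $L_{2,P_0}$-continuity of both sides in $s$, this equality extends to the linear closure, i.e., the whole tangent space, proving that $\psi_{P_0}$ is a gradient. The converse direction is immediate from the same computation: if $\psi_{P_0}$ is a gradient, then $\e[\psi_{P_0} s] = (d/d\epsilon)|_0 v(P_\epsilon)$ for every submodel score, so the limit reduces to $\mathcal N(0,\e[\psi_{P_0}^2])$ uniformly across submodels, and $\hat v_n$ is regular.

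The main obstacle is a rigorous application of Le Cam's third lemma, which requires verifying that the submodel is differentiable in quadratic mean (DQM) rather than merely possessing a pointwise score, and that this DQM property in turn yields LAN for the product experiments. This is where the stronger-than-usual submodel conditions (the uniform integrability and $L_2$ boundedness in property (3)) earn their keep; I would cite standard results such as van der Vaart (\emph{Asymptotic Statistics}, Chapters 7 and 25) to bridge from the pointwise score to DQM and from DQM to LAN. A secondary technical point is the extension from scores of genuine submodels to the full tangent space (its linear closure): this follows from continuity of both the inner product $s \mapsto \e[\psi_{P_0} s]$ and of the derivative map $s \mapsto (d/d\epsilon)|_0 v(P_\epsilon)$ in the $L_{2,P_0}$ norm, the latter being precisely the content of pathwise differentiability.
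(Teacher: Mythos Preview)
The paper does not actually prove this theorem: it is stated in the appendix as a background result from semiparametric theory and attributed to \citet[Theorem~3.1]{van1991differentiable}, with no proof given. Your proposal supplies exactly the classical argument behind that cited result---LAN of the product experiment, joint asymptotic normality of the linearized estimator and the log-likelihood ratio, then Le Cam's third lemma to read off the limit under the local alternatives---so there is nothing in the paper to compare against beyond the citation, and your route is the standard one.

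One small slip: in the sentence ``Pathwise differentiability of $v$ gives $\sqrt n(v(P_{1/\sqrt n}) - v(P_0)) \to \e[\psi_{P_0} s]$'' you have written the wrong limit. Pathwise differentiability only gives $\sqrt n(v(P_{1/\sqrt n}) - v(P_0)) \to (d/d\epsilon)\big|_{\epsilon=0} v(P_\epsilon)$; identifying this derivative with $\e[\psi_{P_0} s]$ is precisely what you are trying to prove in the forward direction, so invoking it there would be circular. Fortunately your displayed formula immediately afterward keeps the two quantities separate and the ensuing argument (match the mean across all submodels, use the trivial submodel to pin down $L$) is correct, so this appears to be a typo rather than a logical gap. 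It would also be worth stating explicitly that differentiability of $\epsilon \mapsto v(P_\epsilon)$ at $0$ (not merely the existence of a gradient representation) is part of the pathwise differentiability hypothesis, since you need the scalar limit $\sqrt n(v(P_{1/\sqrt n}) - v(P_0)) \to (d/d\epsilon)\big|_{\epsilon=0} v(P_\epsilon)$ to hold before you can subtract it.
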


\begin{corollary}[Characterization of the EIF]
The EIF wrt $\mathcal M$ is the projection of any gradient wrt $\mathcal M$ onto the tangent space wrt $\mathcal M$.
\end{corollary}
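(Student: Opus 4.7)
The plan is to present this as a short Hilbert-space geometry argument in $L_{2,P_0}(X)$. By the stated definition, the tangent space $\mathcal T$ is the closed linear span of all submodel scores at $P_0$, and the integrability requirement in the definition of a one-dimensional submodel places every such score in $L_{2,P_0}(X)$; hence $\mathcal T$ is a closed linear subspace of that Hilbert space and the orthogonal projection $\Pi_{\mathcal T}$ onto $\mathcal T$ is well-defined. The corollary then reduces to identifying the EIF with this projection.

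The first step is to show that any two gradients of $v(P_0)$ at $P_0$ \wrt $\mathcal M$ differ by an element of $\mathcal T^\perp$. Let $\psi$ be any gradient and let $\tilde\psi$ denote the EIF. Applying the definition of a gradient to both, for every one-dimensional submodel with score $s$ we have $\e[\psi s] = \e[\tilde\psi s] = (d/d\epsilon) v(P_\epsilon)|_{\epsilon=0}$, and therefore $\e[(\psi-\tilde\psi) s] = 0$. Continuity of the $L_{2,P_0}$ inner product in its second argument extends this orthogonality from the collection of scores to its closed linear span $\mathcal T$, giving $\psi - \tilde\psi \in \mathcal T^\perp$.

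By definition of the EIF we also have $\tilde\psi \in \mathcal T$, so the decomposition $\psi = \tilde\psi + (\psi - \tilde\psi)$ exhibits $\psi$ as a sum of an element of $\mathcal T$ and an element of $\mathcal T^\perp$. Uniqueness of orthogonal decomposition in $L_{2,P_0}(X) = \mathcal T \oplus \mathcal T^\perp$ then yields $\tilde\psi = \Pi_{\mathcal T}(\psi)$, which is the claimed identity. As an aside, the same computation establishes the implicit existence claim: if any gradient $\psi$ exists, then $\Pi_{\mathcal T}(\psi)$ lies in $\mathcal T$ and satisfies $\e[\Pi_{\mathcal T}(\psi) s] = \e[\psi s] - \e[(\psi - \Pi_{\mathcal T}(\psi)) s] = \e[\psi s]$ for every $s \in \mathcal T$, hence is itself a gradient and therefore an EIF; uniqueness of the EIF follows by the same orthogonality argument applied to two candidate EIFs.

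There is no substantive obstacle here beyond bookkeeping. The one step that requires any care is passing from orthogonality against individual scores to orthogonality against all of $\mathcal T$; this relies on Cauchy--Schwarz and the density of the scores in $\mathcal T$, and is the unique point where the word ``closed'' in ``closed linear span'' is actively used. Everything else is the standard content of orthogonal projection in a Hilbert space.
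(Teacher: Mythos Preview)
Your argument is correct and is exactly the standard Hilbert-space proof of this fact. The paper does not actually supply a proof of this corollary; it is stated without proof in the appendix as a standard consequence of the definitions of gradient, tangent space, and EIF, immediately following the theorem that influence functions of regular AL estimators are gradients. Your write-up fills in precisely the details one would expect: orthogonality of the difference of two gradients to every score, extension to the closed linear span by continuity of the inner product, and then the uniqueness of the orthogonal decomposition $L_{2,P_0} = \mathcal T \oplus \mathcal T^\perp$.
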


\subsection{Strategy to calculate the EIF}

Given the above, the following is a natural strategy to calculate the EIF:
\begin{enumerate}
    \item Calculate a gradient $\psi_{P_0}(X)$ of the target parameter $v(P_0)$ wrt $\mathcal M$
    \item Calculate the gradient space wrt $\mathcal M$
    \item Either:
    \begin{enumerate}
        \item Show that $\psi_{P_0}(X)$ already lies in the above tangent space, or
        \item Project $\psi_{P_0}(X)$ onto the tangent space
    \end{enumerate}
\end{enumerate}

The first part of the above can often be done by explicitly computing the derivative of $v(P_\epsilon)$ wrt $\epsilon$, and re-arranging this into the form $\e[\psi_{P_0}(X) s(X)]$ for some function $\psi_{P_0}(X)$.

\subsection{Optimalities}

Finally, we describe the optimal properties of the EIF $\tilde \psi_{P_0}(X)$. We define the \emph{efficiency bound} as the variance of the EIF, $\textup{var}_{P_0}[\tilde \psi_{P_0}(X)]$, which has the following interpretations. First, the efficiency bound gives a lower-bound on the risk of any estimator in a local asymptotic minimax sense \citep[Theorem 25.20]{van2000asymptotic}.

\begin{theorem}[Local Asymptotic Minimax (LAM) theorem]
\label{thm:lam}
Let $v(P_0)$ be pathwise differentiable at $P_0$ wrt $\mathcal M$, with the EIF $\tilde \psi_{P_0}(X)$. Then, for any estimator sequence $\hat v_n(X_{1:n})$, and any symmetric quasi-convex loss function $l: \mathbb R \mapsto [0, \infty)$, we have
\begin{align*}
    &\sup_{m \in \mathbb N, \{P_\epsilon^{(1)}\}, \ldots, \{P_\epsilon^{(m)}\}} \lim_{n \to \infty} \sup_{k \in [m]} \e_{P_{1/\sqrt{n}}^{(k)}} \left[ l\left(\sqrt{n}\left\{\hat v_n(X_{1:n}) - v(P_{1/\sqrt{n}})\right\}\right) \right] \\
    &\geq \int l(u) d\mathcal N(0, \textup{var}_{P_0}[\tilde \psi_{P_0}(X)])\,,
\end{align*}
where $\{P_\epsilon^{(1)}\}, \ldots, \{P_\epsilon^{(m)}\}$ are one-dimensional submodels of $\mathcal M$ passing through $P_0$.
\end{theorem}

In other words, if we allow for adversarial local perturbations to the data generating process that are consistent with $\mathcal M$, then the  worst-case risk of \emph{any} estimator (not necessarily regular) is lower-bounded by that of a regular and asymptotic estimator whose influence function is the EIF. This interpretation follows because, given the above definition of regular estimators and the central limit theorem, the limiting distribution of such a regular and AL estimator is $\mathcal N(0, \textup{var}_{P_0}[\tilde \psi_{P_0}(X)])$ under any such local perturbations. Note that this theorem also implies the following, possibly easier-to-interpret corollary.
\begin{corollary}
Under the same assumptions as \cref{thm:lam}, we have
\begin{align*}
    &\inf_{\delta > 0} \liminf_{n \to \infty} \sup_{Q \in \mathcal M, d_{\textup{TV}}(Q,P_0) \leq \delta} \e_Q \left[ l\left(\sqrt{n}\left\{\hat v_n(X_{1:n}) - v(Q)\right\}\right) \right] \\
    &\geq \int l(u) d\mathcal N(0, \textup{var}_{P_0}[\tilde \psi_{P_0}(X)])\,,
\end{align*}
where $d_{\textup{TV}}(\cdot,\cdot)$ is the total variation distance, and $\mathcal N(\mu, \sigma^2)$ denotes a normal distribution with mean $\mu$ and variance $\sigma^2$.
\end{corollary}

Second, the efficiency bound gives a lower-bound on the risk of any regular estimator, in a strict non-minimax sense \citep[Theorem 25.21]{van2000asymptotic}.

\begin{theorem}[Convolution Theorem]
Let $l: \mathbb R \mapsto [0, \infty)$ be a symmetric quasi-convex loss function. Let $v(P_0)$ be pathwise differentiable at $P_0$ wrt $\mathcal M$ with EIF $\tilde \psi_{P_0}(X)$, and let $\hat v_n(X_{1:n})$ be a regular estimator sequence for $v(P_0)$ at $P_0$ wrt $\mathcal M$, with limiting distribution $L$. Then, we have
\begin{equation*}
    \int l(u) dL(u) \geq \int l(u) d\mathcal N(0, \textup{var}_{P_0}[\tilde \psi_{P_0}(X)]) \,.
\end{equation*}
\end{theorem}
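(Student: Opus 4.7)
The plan is to follow the classical Hájek--Le Cam strategy, which splits the proof into two pieces: (i) derive a convolution representation $L = \mathcal N(0,\mathrm{var}_{P_0}[\tilde\psi_{P_0}]) \ast M$ for the limiting distribution of any regular estimator, and (ii) apply Anderson's lemma to compare expected losses of a Gaussian against its convolution with $M$. The bulk of the work is in step (i); step (ii) is a short application once the representation is in hand.

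For step (i), fix an arbitrary one-dimensional submodel $\{P_\epsilon\}$ through $P_0$ with score $s \in L_{2,P_0}$. First I would verify local asymptotic normality under $P_0^n$,
\[
\log \prod_{i=1}^n \frac{dP_{h/\sqrt{n}}}{dP_0}(X_i) = \frac{h}{\sqrt{n}} \sum_{i=1}^n s(X_i) - \frac{h^2}{2}\,\mathrm{var}_{P_0}[s(X)] + o_p(1),
\]
which is exactly what the integrability conditions built into the submodel definition guarantee (via a differentiability-in-quadratic-mean argument). The joint CLT then yields $\bigl(\sqrt{n}(\hat v_n - v(P_0)),\, n^{-1/2}\sum_i s(X_i)\bigr) \Rightarrow (T_\infty, S)$ under $P_0^n$ with $T_\infty \sim L$ and $S \sim \mathcal N(0,\sigma_s^2)$, where $\sigma_s^2 := \mathrm{var}_{P_0}[s]$. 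By Le Cam's third lemma, the weak limit of $\sqrt{n}(\hat v_n - v(P_0))$ under $P_{h/\sqrt{n}}^n$ has characteristic function $\mathbb E[e^{itT_\infty}\,e^{hS - h^2\sigma_s^2/2}]$. On the other hand, regularity of $\hat v_n$ together with pathwise differentiability forces that limit to equal $L$ translated by $h\,\mathbb E[\tilde\psi_{P_0}s]$, i.e.\ to have characteristic function $e^{ith\,\mathbb E[\tilde\psi_{P_0}s]}\mathbb E[e^{itT_\infty}]$. Equating the two expressions yields the identity
\[
\mathbb E\bigl[e^{itT_\infty + hS}\bigr] = e^{ith\,\mathbb E[\tilde\psi_{P_0}s] + h^2\sigma_s^2/2}\,\mathbb E[e^{itT_\infty}], \qquad \forall\, t,h \in \mathbb R.
\]
To extract the desired representation, I would invoke the fact that $\tilde\psi_{P_0}$ lies in the tangent space: choose a sequence of submodel scores $s_k \to \tilde\psi_{P_0}$ in $L_{2,P_0}$, and pass to the joint limit $(T_\infty, S_k) \Rightarrow (T_\infty, \tilde S)$ with $\tilde S \sim \mathcal N(0, \sigma^2)$, $\sigma^2 := \mathrm{var}_{P_0}[\tilde\psi_{P_0}]$. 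Taking limits in the displayed identity gives $\mathbb E[e^{itT_\infty + h\tilde S}] = e^{ith\sigma^2 + h^2\sigma^2/2}\mathbb E[e^{itT_\infty}]$. Matching the Gaussian factor and dividing shows that the conditional characteristic function of $T_\infty - \tilde S$ given $\tilde S$ does not depend on $\tilde S$, so $T_\infty - \tilde S \perp \tilde S$. Writing $T_\infty = \tilde S + (T_\infty - \tilde S)$ then establishes $L = \mathcal N(0,\sigma^2) \ast M$ with $M = \mathrm{Law}(T_\infty - \tilde S)$.

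For step (ii), Anderson's lemma states that if $Y$ has a symmetric unimodal distribution, $Z$ is independent of $Y$, and $l$ is symmetric and quasi-convex, then $\mathbb E[l(Y+Z)] \ge \mathbb E[l(Y)]$. Applied to $Y \sim \mathcal N(0,\sigma^2)$ and $Z \sim M$, this immediately gives $\int l \, dL \ge \int l \, d\mathcal N(0,\sigma^2)$, as claimed. The main obstacle is step (i), specifically the two limit-exchange arguments: justifying the convergence in Le Cam's third lemma along each submodel (which requires contiguity, automatic from LAN), and justifying the score-approximation passage $s_k \to \tilde\psi_{P_0}$ while preserving joint weak convergence of $(T_\infty, S_k)$---the latter following from the $L_2$-continuity of the linear functional sending a score to its joint limit with $T_\infty$. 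Once the characteristic-function identity for $\tilde S$ is established, the deduction of independence and thus of the convolution representation is routine, and Anderson's lemma closes the argument.
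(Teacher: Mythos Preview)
The paper does not actually prove this theorem; it is stated in the appendix on semiparametric efficiency as a classical result and cited directly to \citet[Theorem~25.21]{van2000asymptotic} without any accompanying argument. So there is no ``paper's proof'' to compare against.

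Your sketch is the standard H\'ajek--Le Cam route and is essentially correct. It matches the development one finds in van der Vaart: establish LAN for each one-dimensional submodel, use Le Cam's third lemma plus regularity to obtain the characteristic-function identity linking the limit law $L$ to the Gaussian score limit, approximate the efficient influence function by submodel scores (since $\tilde\psi_{P_0}$ lies in the closure of the tangent set), deduce the convolution representation $L = \mathcal N(0,\sigma^2)\ast M$, and finish with Anderson's lemma. The one place to be careful is the score-approximation step: you need joint weak convergence of $\bigl(\sqrt{n}(\hat v_n - v(P_0)),\, n^{-1/2}\sum_i s_k(X_i)\bigr)$ simultaneously in $n$ and then a diagonal/tightness argument to pass to $\tilde\psi_{P_0}$, not merely convergence of marginals. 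You flag this, and the usual resolution (Prohorov tightness plus $L_2$-continuity of the covariance functional) is exactly what is needed, so the sketch stands.
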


Equality holds obviously when $L = \mathcal N(0, \textup{var}_{P_0}[\tilde \psi_{P_0}(X)])$, which as discussed above follows when $\hat v_n(X_{1:n})$ is regular and AL with influence function given by the EIF.

We note that in our interpretations of both the LAM and Convolution Theorems, we argued that if an estimator is regular and AL with influence function $\tilde \psi_{P_0}(X)$ then it will achieve the corresponding bound. The following final theorem shows that the latter property alone is both necessary and sufficient \citep[Theorem 25.23]{van2000asymptotic}.

\begin{theorem}
Let $v(P_0)$ be pathwise differentiable at $P_0$ wrt $\mathcal M$, and let $\tilde \psi_{P_0}(X)$ be the EIF. Then an estimator sequence is efficient (regular wrt $\mathcal M$ and with limiting distribution $\mathcal N(0, \textup{var}_{P_0}[\tilde \psi_{P_0}(X)])$) if and only if it is AL with influence function $\tilde \psi_{P_0}(X)$.
\end{theorem}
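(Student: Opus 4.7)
The plan is to prove the equivalence by separately establishing the two implications, reusing earlier results from this appendix---namely, the characterization that influence functions of regular estimators are exactly the gradients, and the convolution theorem.

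The ($\Leftarrow$) direction is almost immediate. If $\hat v_n$ is asymptotically linear with influence function $\tilde \psi_{P_0}$, then the classical CLT applied to the iid sum $n^{-1/2}\sum_{i=1}^n \tilde \psi_{P_0}(X_i)$ yields $\sqrt{n}(\hat v_n(X_{1:n}) - v(P_0)) \to \mathcal N(0, \textup{var}_{P_0}[\tilde \psi_{P_0}(X)])$ in distribution under $P_0$. Since $\tilde \psi_{P_0}$ is the EIF, it lies in the tangent space and is therefore a gradient of $v$ at $P_0$ \wrt $\mathcal M$, so the earlier ``Influence functions are gradients'' theorem yields regularity of $\hat v_n$. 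Together these give efficiency.

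For the ($\Rightarrow$) direction, suppose $\hat v_n$ is regular at $P_0$ with limiting distribution exactly $\mathcal N(0, \sigma^2)$, where $\sigma^2 = \textup{var}_{P_0}[\tilde \psi_{P_0}(X)]$. I would consider the joint behavior under $P_0$ of $T_n := \sqrt{n}(\hat v_n - v(P_0))$ and $S_n := n^{-1/2}\sum_{i=1}^n \tilde \psi_{P_0}(X_i)$. By Le Cam's third lemma, combined with the assumed regularity of $\hat v_n$ and local asymptotic normality along each one-dimensional submodel $\{P_\epsilon\}$ with score $s(X)$, the pair $(T_n, S_n)$ converges jointly to a centered bivariate Gaussian $(T,S)$ whose cross-covariances with every score in the tangent space are pinned down by the pathwise derivative of $v$:
\begin{equation*}
    \textup{Cov}(T, s(X)) \;=\; \textup{Cov}(S, s(X)) \;=\; \left.\frac{d}{d\epsilon} v(P_\epsilon)\right|_{\epsilon=0} \;=\; \e_{P_0}[\tilde \psi_{P_0}(X) s(X)].
\end{equation*}
Taking $s = \tilde \psi_{P_0}$, which is itself in the tangent space, yields $\textup{Cov}(T,S) = \sigma^2 = \textup{Var}(T) = \textup{Var}(S)$, i.e., perfect correlation. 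Consequently $T - S$ is a degenerate Gaussian with zero variance, so $T_n - S_n \to 0$ in probability, which is precisely asymptotic linearity with influence function $\tilde \psi_{P_0}$.

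The main obstacle will be the joint convergence step in the necessity direction. Rigorous application of Le Cam's third lemma requires local asymptotic normality of the submodel likelihood ratios, which in our setting is guaranteed by the integrability conditions built into the definition of a one-dimensional submodel. The subtler point is that the cross-covariance identity above must be valid at $s = \tilde \psi_{P_0}$ itself; because the tangent space is defined as the \emph{closure} of scores of genuine submodels, one must argue that the joint cross-covariance functional extends continuously from scores to the closure before evaluating at the EIF. The sufficiency direction, by contrast, is essentially a one-line consequence of the earlier ``Influence functions are gradients'' result combined with the CLT.
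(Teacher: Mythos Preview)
The paper does not supply its own proof of this statement: it is stated in \cref{apx:semiparametric} purely as background semiparametric theory and is attributed to \citet[Theorem 25.23]{van2000asymptotic}. There is therefore nothing in the paper to compare your attempt against.

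For what it is worth, your sketch tracks the standard textbook argument for this classical fact. The sufficiency direction is exactly as you say: the CLT gives the limiting Gaussian, and since the EIF is by construction a gradient, the earlier ``Influence functions are gradients'' characterization yields regularity. For necessity, the usual route (as in van der Vaart) goes through the full form of the convolution theorem, which gives not just that the limit law factors as $\mathcal N(0,\sigma^2) * M$ but also that along subsequences $(T_n - S_n, S_n)$ converges jointly with the two components asymptotically independent; equating variances then forces $M=\delta_0$, i.e., $T_n - S_n \to 0$ in probability. Your Le Cam's-third-lemma framing is equivalent in spirit but, as you correctly flag, requires you to first establish joint tightness and subsequential joint Gaussianity of $(T_n,S_n)$ before you can read off the cross-covariance; this is the same work the convolution theorem packages for you. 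The closure issue you raise---evaluating the cross-covariance identity at $\tilde\psi_{P_0}$, which may only lie in the closed linear span of genuine scores---is real and is handled in the standard proof by approximating $\tilde\psi_{P_0}$ in $L_2$ by finite linear combinations of scores and passing to the limit using the uniform bound on second moments of scores.
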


\section{Semiparametric Efficiency Theory for Proximal RL Estimator}
\label{apx:semiparametric-model}

Here, we detail the missing theory for our semiparaparametric efficiency theory in \cref{sec:efficiency}. In particular, we provide some additional minor lemmas that are needed to prove \cref{thm:efficiency}, and then end this appendix with the theorem's proof.

First, for any $u>0$ let us define the following, which is a set of random variables indexed by some $\epsilon$ satisfying a particular boundedness condition.
\begin{align}
    \fbounded(u) = \Bigg\{ f_\epsilon(\tau_H) &: \sup_{|\epsilon'| \leq u} \|f_{\epsilon'}(\tau_H)\|_\infty < \infty \nonumber \\
    \label{eq:fbounded}
    &\quad\text{and}\quad \sup_{|\epsilon'| \leq u} \left\| \frac{1}{\epsilon'} \left( f_{\epsilon'}(\tau_H) - f_0(\tau_H) \right) \right\|_{2,\mathcal P_0} < \infty \Bigg\} \,.
\end{align}
In particular, we would like to show that $q_\epsilon^{(t)}$ and $h_\epsilon^{(t)}$ belong to this set for some $u>0$. This is formalized by the following lemma.

\begin{lemma}
\label{lem:convergent-nuisances}
Let \cref{asm:semiparametric-reg} be given. Then there exists $u > 0$ such that $q_\epsilon^{(t)}(Z_t,A_t) \in \fbounded(u)$ and $h_\epsilon^{(t)}(W_t,A_t) \in \fbounded(u)$ for every $t \in [H]$.
\end{lemma}

This condition allows us to apply dominated convergence theorem arguments in computing the path derivative of $V(\mathcal P_\epsilon)$. In particular, we do not assume that $q_\epsilon^{(t)}$ or $h_\epsilon^{(t)}$ are differentiable w.r.t. $\epsilon$, so the second part of the definition of $\fbounded(u)$ lets us deal with finite-difference terms.

Before we prove \cref{lem:convergent-nuisances}, we must first establish some helper lemma. First, the following establishes that the set $\fbounded$ is closed w.r.t. addition and multiplication.

\begin{lemma}
\label{lem:boundedness-algebra}
Let $\fbounded(u)$ be defined as in \cref{eq:fbounded} for each, and suppose that $f_\epsilon(\tau_H) \in \fbounded(u)$ and $g_\epsilon(\tau_H) \in \fbounded(u)$ for some $u>0$. Then $f_\epsilon(\tau_H) + g_\epsilon(\tau_H) \in \fbounded(u)$ and $f_\epsilon(\tau_H) g_\epsilon(\tau_H) \in \fbounded(u)$.
\end{lemma}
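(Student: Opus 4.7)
The plan is to verify each of the two defining conditions of $\fbounded(u)$ in turn for both the sum $f_\epsilon + g_\epsilon$ and the product $f_\epsilon g_\epsilon$, using only the triangle inequality and Hölder's inequality. The sum case is essentially immediate: for every $|\epsilon'| \le u$ one has $\|f_{\epsilon'} + g_{\epsilon'}\|_\infty \le \|f_{\epsilon'}\|_\infty + \|g_{\epsilon'}\|_\infty$, so taking suprema preserves the first bound; and the difference quotient for $f+g$ splits additively, so the triangle inequality in $L_{2,\mathcal{P}_0}$ immediately yields the second bound.

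The product case is the slightly more substantive step. Here I would use the standard telescoping decomposition
\[
f_{\epsilon'} g_{\epsilon'} - f_0 g_0 \;=\; f_{\epsilon'}(g_{\epsilon'} - g_0) \;+\; (f_{\epsilon'} - f_0)\, g_0,
\]
divide by $\epsilon'$, and bound the $L_{2,\mathcal{P}_0}$ norm of each summand by an $L_\infty$-times-$L_{2,\mathcal{P}_0}$ product. The $L_\infty$ factors $\|f_{\epsilon'}\|_\infty$ and $\|g_0\|_\infty$ are controlled uniformly in $|\epsilon'|\le u$ by the first defining condition applied to $f_\epsilon$ and to $g_\epsilon$ at $\epsilon'=0$, respectively, while the two difference-quotient factors $\|\tfrac{1}{\epsilon'}(g_{\epsilon'} - g_0)\|_{2,\mathcal{P}_0}$ and $\|\tfrac{1}{\epsilon'}(f_{\epsilon'} - f_0)\|_{2,\mathcal{P}_0}$ are uniformly bounded by the second defining condition. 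Taking a supremum over $|\epsilon'|\le u$ then gives the second bound for $f_\epsilon g_\epsilon$. The corresponding $L_\infty$ bound on $f_{\epsilon'} g_{\epsilon'}$ itself is the trivial $\|f_{\epsilon'}g_{\epsilon'}\|_\infty \le \|f_{\epsilon'}\|_\infty \|g_{\epsilon'}\|_\infty$.

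I do not expect any genuine obstacle here; the only mild point of care is to use $g_0$ (rather than $g_{\epsilon'}$) in the second summand of the telescoping decomposition so that its $L_\infty$ norm is a single constant independent of $\epsilon'$ — or, equivalently, to invoke the uniform bound $\sup_{|\epsilon'|\le u}\|g_{\epsilon'}\|_\infty < \infty$ in the first summand. Once that bookkeeping is in place, the lemma reduces to a two-line application of triangle plus Hölder.
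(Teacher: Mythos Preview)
Your proposal is correct and follows essentially the same approach as the paper: triangle inequality for the sum, and a telescoping decomposition of $f_{\epsilon'}g_{\epsilon'}-f_0 g_0$ followed by an $L_\infty$-times-$L_{2,\mathcal P_0}$ bound on each piece for the product. The paper's telescoping happens to group the terms as $(f_\epsilon-f_0)g_\epsilon + f_0(g_\epsilon-g_0)$ rather than your $f_{\epsilon'}(g_{\epsilon'}-g_0)+(f_{\epsilon'}-f_0)g_0$, but this is the same argument up to relabeling and yields the same final bound.
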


\begin{proof}[Proof of \cref{lem:boundedness-algebra}]
Let $c_1,c_2,d_1,d_2 < \infty$ be constants such that
\begin{align*}
    \sup_{|\epsilon| \leq u} \|f_\epsilon(\tau_H)\|_\infty &\leq c_1 \qquad\qquad
    \sup_{|\epsilon| \leq u} \left\| \frac{1}{\epsilon} \left( f_\epsilon(\tau_H) - f_0(\tau_H) \right) \right\|_{2,\mathcal P_0} \leq d_1 \\
    \sup_{|\epsilon| \leq u} \|g_\epsilon(\tau_H)\|_\infty &\leq c_2 \qquad\qquad
    \sup_{|\epsilon| \leq u} \left\| \frac{1}{\epsilon} \left( g_\epsilon(\tau_H) - g_0(\tau_H) \right) \right\|_{2,\mathcal P_0} \leq d_2\,.
\end{align*}
First consider the case of $f_\epsilon(\tau_H) + g_\epsilon(\tau_H)$. It easily follows from the triangle inequality that
\begin{align*}
    &\sup_{|\epsilon| \leq u} \|f_\epsilon(\tau_H) + g_\epsilon(\tau_H)\|_\infty \leq c_1 + c_2 \\
    &\sup_{|\epsilon| \leq u} \left\| \frac{1}{\epsilon} \left( (f_\epsilon(\tau_H) + g_\epsilon(\tau_H)) - (f_0(\tau_H) + g_0(\tau_H)) \right) \right\|_{2, \mathcal P_0} \leq d_1 + d_2 \,,
\end{align*}
which clearly establishes that $f_\epsilon(\tau_H) + g_\epsilon(\tau_H) \in \fbounded(u)$.

Now, consider the case of $f_\epsilon(\tau_H) g_\epsilon(\tau_H)$. For the first required bound, we clearly have
\begin{equation*}
    \sup_{|\epsilon| \leq u} \|f_\epsilon(\tau_H) g_\epsilon(\tau_H)\|_\infty  \leq c_1 c_2 \,.
\end{equation*}
The second required bound requires slightly more work. There, we have
\begin{align*}
    &\sup_{|\epsilon| \leq u} \left\| \frac{1}{\epsilon} \left( f_\epsilon(\tau_H)g_\epsilon(\tau_H) - f_0(\tau_H)g_0(\tau_H) \right) \right\|_{2, \mathcal P_0} \\
    &\leq  \sup_{|\epsilon| \leq u} \left\| \frac{1}{\epsilon} \left( f_\epsilon(\tau_H)g_\epsilon(\tau_H) - f_0(\tau_H)g_\epsilon(\tau_H) \right) \right\|_{2, \mathcal P_0} \\
    &\qquad + \sup_{|\epsilon| \leq u} \left\| \frac{1}{\epsilon} \left( f_0(\tau_H)g_\epsilon(\tau_H) - f_0(\tau_H)g_0(\tau_H) \right) \right\|_{2, \mathcal P_0} \\
    &\leq  c_2 \sup_{|\epsilon| \leq u} \left\| \frac{1}{\epsilon} \left( f_\epsilon(\tau_H) - f_0(\tau_H) \right) \right\|_{2, \mathcal P_0} + c_1 \sup_{|\epsilon| \leq u} \left\| \frac{1}{\epsilon} \left( g_\epsilon(\tau_H) - g_0(\tau_H) \right) \right\|_{2, \mathcal P_0} \\
    &\leq  c_2 d_1 + c_1 d_2 \,.
\end{align*}
Therefore, we also have $f_\epsilon(\tau_H) g_\epsilon(\tau_H) \in \fbounded(u)$.
\end{proof}

Next, the following lemmas allow us to bound functions in the range of $(T_{t,\epsilon})^{-1}$ and $(T^*_{t,\epsilon})^{-1}$.

\begin{lemma}
\label{lem:t-inv-bound}
Under the conditions of \cref{thm:efficiency}, there exists some $u>0$ and constants $C_1,C_2<\infty$ such that for every $t \in [H]$, and functions $f_\epsilon^*(Z_t,A_t)$ and $g_\epsilon^*(W_t,A_t)$ indexed by $\epsilon$, we have
\begin{align*}
    \sup_{|\epsilon| \leq u} \|(T_{t,\epsilon})^{-1} g_\epsilon^*(W_t,A_t) \|_\infty &\leq C_1 \sup_{|\epsilon| \leq u} \| g_\epsilon^*(W_t,A_t) \|_\infty \\
    \sup_{|\epsilon| \leq u} \|(T^*_{t,\epsilon})^{-1} f_\epsilon^*(Z_t,A_t) \|_\infty &\leq C_2 \sup_{|\epsilon| \leq u} \| f_\epsilon^*(Z_t,A_t) \|_\infty \,.
\end{align*}
\end{lemma}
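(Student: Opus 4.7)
The plan is to prove the two bounds simultaneously by induction on $t$, using a Banach-space duality argument that converts the $L_1$ lower bounds from \cref{asm:semiparametric-reg}(1,2) into the desired $L_\infty$-to-$L_\infty$ operator bounds on the inverses. The induction will carry the additional hypothesis that $q^{(s)}_\epsilon$ is uniformly bounded in $L_\infty$ for $s < t$ and for $|\epsilon|$ sufficiently small, which is what is needed to control $\eta_{t,\epsilon} = \prod_{s<t} q^{(s)}_\epsilon(Z_s,A_s)\indicator{A_s=E_s}$ and thereby ensure that $T_{t,\epsilon}$ and $T^*_{t,\epsilon}$ act as bounded operators between $L_p$ spaces for every $p \in [1,\infty]$. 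The base case $t=1$ is trivial since $\eta_{1,\epsilon}\equiv 1$ makes $T_{1,\epsilon}$ a plain conditional expectation, which is an $L_p$-contraction for every $p$.

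For the inductive step, the core computation is a duality argument, which I would execute as follows for the bound on $(T_{t,\epsilon})^{-1}$; the bound on $(T^*_{t,\epsilon})^{-1}$ follows by the symmetric argument swapping the roles of conditions 1 and 2 of \cref{asm:semiparametric-reg}. Given $g^*_\epsilon \in L_\infty(W_t,A_t)$, set $q_\epsilon = (T_{t,\epsilon})^{-1} g^*_\epsilon \in L_2$ using the $L_2$-invertibility of $T_{t,\epsilon}$. For any essentially bounded test function $\psi(Z_t,A_t)$, use the $L_2$-invertibility of $T^*_{t,\epsilon}$ to write $\psi = T^*_{t,\epsilon}\phi$ for some $\phi \in L_2 \subset L_1$, and apply condition 2 of \cref{asm:semiparametric-reg} to obtain $\|\phi\|_{1,\mathcal P_\epsilon} \leq c_t^{-1} \|\psi\|_{1,\mathcal P_\epsilon}$ uniformly in small $|\epsilon|$, for some $c_t > 0$. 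The adjoint identity
\[
\e_\epsilon[\psi(Z_t,A_t)\, q_\epsilon(Z_t,A_t)] \;=\; \e_\epsilon[(T^*_{t,\epsilon}\phi)\, q_\epsilon] \;=\; \e_\epsilon[\phi\, T_{t,\epsilon} q_\epsilon] \;=\; \e_\epsilon[\phi\, g^*_\epsilon]
\]
then yields $|\e_\epsilon[\psi\, q_\epsilon]| \leq c_t^{-1} \|\psi\|_{1,\mathcal P_\epsilon} \|g^*_\epsilon\|_\infty$. A standard contradiction argument, using normalized test functions of the form $\psi \propto \mathrm{sgn}(q_\epsilon)\indicator{|q_\epsilon|>M}$ for $M > c_t^{-1}\|g^*_\epsilon\|_\infty$, then forces $\|q_\epsilon\|_\infty \leq c_t^{-1} \|g^*_\epsilon\|_\infty$.

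With the lemma established at step $t$, applying it with $g^*_\epsilon = P^*_{t,\epsilon}(A_t \mid W_t)^{-1}$ and invoking condition 3 of \cref{asm:semiparametric-reg} yields uniform $L_\infty$ control of $q^{(t)}_\epsilon$, advancing the induction. Taking $u = \min_{t \in [H]} u_t$ and $C_1, C_2$ to be the maxima of the per-step constants over the finite set $[H]$ delivers the uniform statement. The main anticipated obstacle is reconciling the $L_1$ lower bounds in \cref{asm:semiparametric-reg}, which are stated as $\liminf$ conditions along sequences $\mathcal P_n \to \pb$, with the uniform-in-$\epsilon$ formulation here; this is handled by observing that any smooth parametric submodel $\{\mathcal P_\epsilon\}$ converges in law to $\pb$ as $\epsilon \to 0$, so a further contradiction argument yields the $L_1$ bounds uniformly for $|\epsilon| \leq u_t$. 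A secondary subtlety is avoiding any appeal to $L_1$-invertibility of $T_{t,\epsilon}$ or $T^*_{t,\epsilon}$ (which is not hypothesized), and this is achieved by routing the entire duality argument through $L_2$-invertibility of the operators and $L_\infty$-$L_1$ Hölder duality only.
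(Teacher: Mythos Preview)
Your core duality computation---passing to the adjoint, applying $L_\infty$--$L_1$ H\"older duality, and invoking the $L_1$ lower bounds of \cref{asm:semiparametric-reg}(1,2)---is exactly the paper's argument, and your handling of the $\liminf$-to-uniform passage via $\mathcal P_\epsilon \to \pb$ is likewise the same. The one substantive difference is that your induction on $t$ is unnecessary: the $L_1$ lower bounds in \cref{asm:semiparametric-reg} are hypothesized directly for every $t \in [H]$, and if you inspect your own core computation at step $t$ you will see it never actually uses the inductive hypothesis that $\eta_{t,\epsilon}$ is bounded. The paper simply proves the bound for each $t$ independently, with no induction. The step you tack on at the end---bounding $q^{(t)}_\epsilon$ via \cref{asm:semiparametric-reg}(3)---is not part of this lemma at all; the paper separates it into \cref{lem:bounded-nuisances}, which \emph{does} proceed by induction. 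As a minor positive, your care in routing the adjoint identity through $L_2$-invertibility and bounded test functions, rather than applying $(T^*_{t,\epsilon})^{-1}$ directly to $L_1$ functions, is a small rigor improvement over the paper's somewhat informal presentation.
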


\begin{proof}[Proof of \cref{lem:t-inv-bound}]
For the first required bound, we have
\begin{align*}
    &\sup_{|\epsilon| \leq u} \|(T_{t,\epsilon})^{-1} g_\epsilon^*(W_t,A_t) \|_\infty \\
    &= \sup_{|\epsilon| \leq u, \|f(Z_t,A_t)\|_{1,\mathcal P_{\epsilon}} \leq 1} \e_\epsilon[f(Z_t,A_t) (T_{t,\epsilon})^{-1} g_\epsilon^*(W_t,A_t) ] \\
    &= \sup_{|\epsilon| \leq u, \|f(Z_t,A_t)\|_{1,\mathcal P_{\epsilon}} \leq 1} \e_\epsilon[g_\epsilon^*(W_t,A_t) (T^*_{t,\epsilon})^{-1}f(Z_t,A_t)] \\
    &\leq \sup_{|\epsilon| \leq u} \|g_\epsilon^*(W_t,A_t)\|_\infty \sup_{\|f(Z_t,A_t)\|_{1,\mathcal P_\epsilon} \leq 1} \|(T^*_{t,\epsilon})^{-1}f(Z_t,A_t)\|_{1,\mathcal P_\epsilon} \,.
\end{align*}
where the first equality follows from the fact that the $1$- and $\infty$-norms are dual, the second equality follows from the fact that the inverse of the adjoint is the adjoint of the inverse, and the inequality follows from Hölders inequality. Next, we can further bound
\begin{align*}
    \sup_{\|f(Z_t,A_t)\|_{1,\mathcal P_\epsilon} \leq 1} \|(T^*_{t,\epsilon})^{-1}f(Z_t,A_t)\|_{1,\mathcal P_\epsilon} &= \sup_{f(Z_t,A_t)} \frac{\|(T^*_{t,\epsilon})^{-1}f(Z_t,A_t)\|_{1,\mathcal P_\epsilon}}{\|f(Z_t,A_t)\|_{1,\mathcal P_\epsilon}} \\
    &= \sup_{g(W_t,A_t)} \frac{\|g(W_t,A_t)\|_{1,\mathcal P_\epsilon}}{\|T^*_{t,\epsilon} g(W_t,A_t)\|_{1,\mathcal P_\epsilon}} \\
    &= \left( \inf_{\|g(W_t,A_t)\|_{1,\mathcal P_\epsilon} \geq 1} \|T^*_{t,\epsilon} g(W_t,A_t)\|_{1,\mathcal P_\epsilon} \right)^{-1} \,.
\end{align*}
Now, we have from \cref{asm:semiparametric-reg} that
\begin{equation*}
    \liminf_{\epsilon \to 0} \inf_{\|g(W_t,A_t)\|_{1,\mathcal P_\epsilon} \geq 1} \|T^*_{t,\epsilon} g(W_t,A_t)\|_{1,\mathcal P_\epsilon} > 0 \,.
\end{equation*}
Therefore, the above must be bounded by some $C_1$, for sufficiently small $\epsilon$, which establishes the first required bound.

The second required bound follows from an almost-identical argument, except that here we instead apply the assumption that
\begin{equation*}
    \liminf_{\epsilon \to 0} \inf_{\|f(Z_t,A_t)\|_{1,\mathcal P_\epsilon} \geq 1} \|T_{t,\epsilon} f(Z_t,A_t)\|_{1,\mathcal P_\epsilon} > 0 \,.
\end{equation*}
\end{proof}

\begin{lemma}
\label{lem:t-inv-bound-2}
Under the conditions of \cref{thm:efficiency}, there exists some $u>0$ and constants $C^{(2)}_1,C^{(2)}_2<\infty$ such that for every $t \in [H]$, and functions $f_\epsilon^*(Z_t,A_t)$ and $g_\epsilon^*(W_t,A_t)$ indexed by $\epsilon$, we have
\begin{align*}
    \sup_{|\epsilon| \leq u} \|(T_{t,0})^{-1} g_\epsilon^*(W_t,A_t) \|_{2,\mathcal P_0} &\leq C^{(2)}_1 \sup_{|\epsilon| \leq u} \| g_\epsilon^*(W_t,A_t) \|_{2,\mathcal P_0} \\
    \sup_{|\epsilon| \leq u} \|(T^*_{t,0})^{-1} f_\epsilon^*(Z_t,A_t) \|_{2,\mathcal P_0} &\leq C^{(2)}_2 \sup_{|\epsilon| \leq u} \| f_\epsilon^*(Z_t,A_t) \|_{2,\mathcal P_0} \,.
\end{align*}
\end{lemma}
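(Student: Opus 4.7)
The plan is to exploit the fact that, in contrast with \cref{lem:t-inv-bound}, the operators $(T_{t,0})^{-1}$ and $(T_{t,0}^*)^{-1}$ being inverted here are \emph{independent of $\epsilon$}. This means that for either of the two inequalities the sup over $|\epsilon|\leq u$ and the operator can be exchanged, so that
\begin{equation*}
\sup_{|\epsilon|\leq u}\|(T_{t,0})^{-1} g_\epsilon^*(W_t,A_t)\|_{2,\mathcal P_0}
\;\leq\;
\|(T_{t,0})^{-1}\|_{\mathrm{op}}\;
\sup_{|\epsilon|\leq u}\|g_\epsilon^*(W_t,A_t)\|_{2,\mathcal P_0},
\end{equation*}
and analogously with $T_{t,0}^*$. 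It therefore suffices to show that $\|(T_{t,0})^{-1}\|_{\mathrm{op}}$ and $\|(T_{t,0}^*)^{-1}\|_{\mathrm{op}}$, both viewed as operators on $L_{2,\mathcal P_0}$, are finite, in which case we may take $C^{(2)}_1$ and $C^{(2)}_2$ to be any upper bounds for these operator norms and pick $u$ small enough that the submodel is defined.

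To bound $\|(T_{t,0})^{-1}\|_{\mathrm{op}}$, I would substitute $f = (T_{t,0})^{-1} g$ (well defined since $T_{t,0}$ is invertible by $\pb\in\mpci$) to rewrite
\begin{equation*}
\|(T_{t,0})^{-1}\|_{\mathrm{op}}
\;=\;
\sup_{g\neq 0}\frac{\|(T_{t,0})^{-1}g\|_{2,\mathcal P_0}}{\|g\|_{2,\mathcal P_0}}
\;=\;
\sup_{f\neq 0}\frac{\|f\|_{2,\mathcal P_0}}{\|T_{t,0}f\|_{2,\mathcal P_0}}
\;=\;
\Bigl(\inf_{\|f\|_{2,\mathcal P_0}\geq 1}\|T_{t,0}f\|_{2,\mathcal P_0}\Bigr)^{-1}.
\end{equation*}
Part 4 of \cref{asm:semiparametric-reg}, instantiated at the constant sequence $\mathcal P_n\equiv \pb=\mathcal P_0$ (which trivially converges in law to $\pb$), tells us this infimum is strictly positive, so $\|(T_{t,0})^{-1}\|_{\mathrm{op}}<\infty$. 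The bound on $\|(T_{t,0}^*)^{-1}\|_{\mathrm{op}}$ follows by the same duality step using part 5 of \cref{asm:semiparametric-reg} in place of part 4.

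The only care required is in the first step, where one must ensure that each $g_\epsilon^*$ actually lies in $\mathrm{Range}(T_{t,0})$ so that $(T_{t,0})^{-1}g_\epsilon^*$ is defined; this is implicit in the statement of the lemma, since the expression $\|(T_{t,0})^{-1}g_\epsilon^*(W_t,A_t)\|_{2,\mathcal P_0}$ only makes sense when the preimage exists. Beyond this minor bookkeeping, the argument is a routine application of the bounded-below/bounded-inverse duality together with parts 4 and 5 of \cref{asm:semiparametric-reg}, and no substantive obstacle arises—the proof here is genuinely much simpler than that of \cref{lem:t-inv-bound} precisely because we are not asked to invert a moving family of operators.
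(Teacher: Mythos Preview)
Your proposal is correct and follows essentially the same approach as the paper: both reduce the claim to the finiteness of the operator norm of $(T_{t,0})^{-1}$ (resp.\ $(T^*_{t,0})^{-1}$) on $L_{2,\mathcal P_0}$, express this as the reciprocal of the forward operator's lower bound, and invoke parts 4--5 of \cref{asm:semiparametric-reg}. The only cosmetic difference is that the paper first passes through the adjoint via the self-duality of the $2$-norm and Cauchy--Schwarz before arriving at the same reciprocal-of-infimum expression, whereas you compute the operator norm directly; your route is marginally cleaner but not substantively different.
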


\begin{proof}[Proof of \cref{lem:t-inv-bound-2}]
The proof of this is very similar to that of \cref{lem:t-inv-bound}, except using $\mathcal P_0$ instead of $\mathcal P_\epsilon$. For the first required bound, we have
\begin{align*}
    &\sup_{|\epsilon| \leq u} \|(T_{t,0})^{-1} g_\epsilon^*(W_t,A_t) \|_{2,\mathcal P_0} \\
    &= \sup_{|\epsilon| \leq u, \|f(Z_t,A_t)\|_{2,\mathcal P_0} \leq 1} \e_0[f(Z_t,A_t) (T_{t,0})^{-1} g_\epsilon^*(W_t,A_t) ] \\
    &= \sup_{|\epsilon| \leq u, \|f(Z_t,A_t)\|_{2,\mathcal P_0} \leq 1} \e_0[g_\epsilon^*(W_t,A_t) (T^*_{t,0})^{-1}f(Z_t,A_t)] \\
    &\leq \sup_{|\epsilon| \leq u} \|g_\epsilon^*(W_t,A_t)\|_{2,\mathcal P_0} \sup_{\|f(Z_t,A_t)\|_{2,\mathcal P_0} \leq 1} \|(T^*_{t,0})^{-1}f(Z_t,A_t)\|_{2,\mathcal P_0} \,.
\end{align*}
where the first equality follows from the fact that the $2$-norm is self-dual, the second equality follows from the fact that the inverse of the adjoint is the adjoint of the inverse, and the inequality follows from Cauchy Schwartz. Next, we can further bound
\begin{align*}
    \sup_{\|f(Z_t,A_t)\|_{2,\mathcal P_0} \leq 1} \|(T^*_{t,0})^{-1}f(Z_t,A_t)\|_{2,\mathcal P_0} &= \sup_{f(Z_t,A_t)} \frac{\|(T^*_{t,0})^{-1}f(Z_t,A_t)\|_{2,\mathcal P_0}}{\|f(Z_t,A_t)\|_{2,\mathcal P_0}} \\
    &= \sup_{g(W_t,A_t)} \frac{\|g(W_t,A_t)\|_{2,\mathcal P_0}}{\|T^*_{t,0} g(W_t,A_t)\|_{2,\mathcal P_0}} \\
    &= \left( \inf_{\|g(W_t,A_t)\|_{2,\mathcal P_0} \geq 1} \|T^*_{t,0} g(W_t,A_t)\|_{2,\mathcal P_0} \right)^{-1} \,.
\end{align*}
But we know from \cref{asm:semiparametric-reg} that this term is finite, which gives us our first required bound.

The second required bound follows from an almost-identical argument, except that here we instead apply the fact that $\inf_{\|f(Z_t,A_t)\|_{2,\mathcal P_0} \geq 1} \|T_{t,0} f(Z_t,A_t)\|_{2,\mathcal P_0} > 0$, which also follows from \cref{asm:semiparametric-reg}.

\end{proof}

Now, given the previous lemma, as well as the definition of $\fbounded(u)$ and \cref{lem:boundedness-algebra}, the following lemmas establish that $q_\epsilon^{(t)}(Z_t,A_t)$ and $h_\epsilon^{(t)}(W_t,A_t)$ lie within this set for some $u>0$. This will along with \cref{lem:functional-deriv} will allow us to compute the derivative of $V(\mathcal P_\epsilon)$ in the main proof of \cref{thm:efficiency}.

\begin{lemma}
\label{lem:bounded-nuisances}
Under the conditions of \cref{thm:efficiency}, there exists some $u>0$ such that for every $t \in [H]$ we have $\sup_{|\epsilon| \leq u} \|q_\epsilon^{(t)}(Z_t,A_t)\|_\infty < \infty$ and $\sup_{|\epsilon| \leq u} \|h_\epsilon^{(t)}(Z_t,A_t)\|_\infty < \infty$.
\end{lemma}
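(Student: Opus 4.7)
The plan is to exploit the two-phase construction of Definition \ref{def:model}: the action bridges $q_\epsilon^{(t)}$ are built in ascending order of $t$ and each depends on earlier $q$'s only through the probability $P^*_{t,\epsilon}(A_t\mid W_t)^{-1}$, whereas the outcome bridges $h_\epsilon^{(t)}$ and the auxiliary outcomes $Y_{t,\epsilon}$ are built together in descending order of $t$ and couple to all previously constructed quantities. I will therefore bound the $q_\epsilon^{(t)}$ first via a direct application of Lemma \ref{lem:t-inv-bound}, and then run a backward induction on $t$ that simultaneously controls $\|h_\epsilon^{(t)}\|_\infty$ and $\|Y_{t,\epsilon}\|_\infty$.

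For the action bridges, since $q_\epsilon^{(t)}(Z_t,A_t)=(T_{t,\epsilon})^{-1}(P^*_{t,\epsilon}(A_t\mid W_t)^{-1})$, part (3) of Assumption \ref{asm:semiparametric-reg} combined with the first bound of Lemma \ref{lem:t-inv-bound} (applied to $g^*_\epsilon=P^*_{t,\epsilon}(A_t\mid W_t)^{-1}$) yields $\sup_{|\epsilon|\leq u_t}\|q_\epsilon^{(t)}\|_\infty<\infty$ for each $t\in[H]$ and some $u_t>0$. Taking $u_q=\min_{t\in[H]} u_t>0$ then handles all action bridges simultaneously.

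For the outcome bridges I run a descending induction on $t$, maintaining the hypothesis that $\sup_{|\epsilon|\leq u}\|Y_{t,\epsilon}\|_\infty<\infty$. The base case $Y_{H,\epsilon}=R_H$ is immediate from the bounded-reward assumption. For the inductive step, Jensen's inequality together with the product bound $\|\eta_{t,\epsilon}\|_\infty\leq\prod_{s<t}\|q_\epsilon^{(s)}\|_\infty$ (which is uniform in $\epsilon$ by the first part) gives a uniform bound on $\|\mu_{t,\epsilon}\|_\infty=\|\es{\epsilon}[\eta_{t,\epsilon}\indicator{A_t=E_t}Y_{t,\epsilon}\mid Z_t,A_t]\|_\infty$; the second bound of Lemma \ref{lem:t-inv-bound} applied to $h_\epsilon^{(t)}=(T^*_{t,\epsilon})^{-1}(\mu_{t,\epsilon})$ then bounds $\|h_\epsilon^{(t)}\|_\infty$; and plugging the bounds on $\|q_\epsilon^{(t)}\|_\infty$, $\|h_\epsilon^{(t)}\|_\infty$, and $\|Y_{t,\epsilon}\|_\infty$ into the recursive definition of $Y_{t-1,\epsilon}$ and invoking the triangle inequality (with $|R_{t-1}|\leq R_{\max}$) propagates the bound to $\|Y_{t-1,\epsilon}\|_\infty$, closing the induction.

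I do not expect a substantive obstacle: all of the genuine analytic content has already been packaged into Lemma \ref{lem:t-inv-bound}, which itself rests on parts (1)--(3) of Assumption \ref{asm:semiparametric-reg}. The only point of care is that each application of Lemma \ref{lem:t-inv-bound} may shrink the neighborhood of $\epsilon=0$ on which the uniform bound is valid; however, only finitely many such applications occur (at most $2H$), so taking the minimum of all resulting radii produces a single positive $u$ on which every bound holds uniformly, as required.
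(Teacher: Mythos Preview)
Your proposal is correct and follows essentially the same approach as the paper: first bound all $q_\epsilon^{(t)}$ directly via Lemma~\ref{lem:t-inv-bound} applied to $P^*_{t,\epsilon}(A_t\mid W_t)^{-1}$ (controlled by part~(3) of Assumption~\ref{asm:semiparametric-reg}), then run a backward induction on $t$ that bounds $\|Y_{t,\epsilon}\|_\infty$, feeds this through the $\eta_{t,\epsilon}$ bound into $\|\mu_{t,\epsilon}\|_\infty$, applies Lemma~\ref{lem:t-inv-bound} again to bound $\|h_\epsilon^{(t)}\|_\infty$, and propagates to $\|Y_{t-1,\epsilon}\|_\infty$ via the recursion. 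Your explicit remark about taking the minimum over the finitely many radii is a nice touch that the paper leaves implicit.
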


\begin{proof}[Proof of \cref{lem:bounded-nuisances}]
First consider the case of $q^{(t)}_\epsilon$. First, by \cref{asm:semiparametric-reg} it must be the case that for some $u>0$ and some constant $C_3 < \infty$, we have
\begin{equation*}
    \sup_{|\epsilon| \leq u} \| P^*_{t,\epsilon}(A_t \mid W_t)^{-1} \|_\infty \leq C_3 \,,
\end{equation*}
simultaneously for every $t \in [H]$. Now, let $u>0$ be such that the above holds, as well as as \cref{lem:t-inv-bound}, and let $C_1$ and $C_2$ be defined as in \cref{lem:t-inv-bound}.
Note that clearly for $|\epsilon| \leq u$ we have $\mathcal P_\epsilon \in \mpci$, so let us restrict our attention to such $\epsilon$. 
Next, note that for any $|\epsilon| \leq u$ we have 
\begin{equation*}
    q^{(t)}_\epsilon(Z_t,A_t) = T^{-1}_{t,\epsilon} P^*_{t,\epsilon}(A_t \mid W_t)^{-1}\,,
\end{equation*}
and so given \cref{lem:t-inv-bound} we have
\begin{equation*}
    \|q^{(t)}_\epsilon(Z_t,A_t)\|_\infty \leq C_2 C_3\,.
\end{equation*}
We note that this bound does not depend on $\epsilon$, so therefore we have our desired result for $q^{(t)}_\epsilon$.

Next, for $h_\epsilon^{(t)}$, we note that
\begin{equation*}
    q^{(t)}_\epsilon(Z_t,A_t) = (T^*_{t,\epsilon})^{-1} \mu_{t,\epsilon}(Z_t,A_t)\,.
\end{equation*}
Therefore, we can proceed with a near-identical argument again using \cref{lem:t-inv-bound}, as long as we can uniformly bound $\| \mu_{t,\epsilon}(Z_t,A_t) \|_\infty$ over $|\epsilon| \leq u$ for each $t$.  
First, note that given the above bound for $q^{(t)}_\epsilon$ and the definition of $\mu_{t,\epsilon}$, for each $t \in [H]$ we have
\begin{equation*}
    \|\mu_{t,\epsilon}(Z_t,A_t)\|_\infty \leq (C_2C_3)^{t-1} \|Y_{t,\epsilon}\|_\infty \,.
\end{equation*}
Therefore it is sufficient to uniformly bound $Y_{t,\epsilon}$ for each $t$. We will do this by backward induction on $t$. For the base case where $t=H$, we trivially have $\|Y_{H,\epsilon}\|_\infty \leq  R_{\textup{max}}$, where $R_{\textup{max}}$ is a bound on the absolute value of all rewards. Now, suppose that $\|Y_{t+1,\epsilon}\|_\infty \leq C_{4,t+1}$ for some $C_{4,t+1}$ that doesn't depend on $\epsilon$. Then we have
\begin{equation*}
    \|Y_{t,\epsilon}\|_\infty \leq R_{\textup{max}} + \gamma (|\aset| + (C_3 C_2)) \|h_\epsilon^{(t+1)}(W_{t+1},A_{t+1})\|_\infty  + \gamma (C_3 C_2) \|Y_{t+1,\epsilon}\|_\infty \\
\end{equation*}
Now, by the inductive hypothesis $Y_{t+1,\epsilon}$ is uniformly bounded and therefore so is $\mu_{t+1,\epsilon}(Z_{t+1},A_{t+1})$, and therefore following an identical argument as above as for bounding $q_\epsilon^{(t)}$ we have that $\|h_\epsilon^{(t+1)}(W_{t+1},A_{t+1})\|_\infty$ is uniformly bounded. Specifically, by \cref{lem:t-inv-bound} we have
\begin{align*}
    \|\mu_{t+1,\epsilon}(Z_{t+1},A_{t+1})\|_\infty &\leq (C_3 C_2)^t C_{4,t+1} \\
    \|h_\epsilon^{(t+1)}(W_{t+1},A_{t+1})\|_\infty &\leq C_1 (C_3 C_2)^t C_{4,t+1} \,,
\end{align*}
and therefore $Y_{t,\epsilon}$ is uniformly bounded. This completes the induction, so therefore we conclude that we have a uniform bound on $\|h_\epsilon^{(t)}(W_t,A_t)\|_\infty$ over $|\epsilon| \leq u$.

\end{proof}

Now, we are finally ready to provide the proof of \cref{lem:convergent-nuisances}. 

\begin{proof}[Proof of \cref{lem:convergent-nuisances}]

Let $u$, $C_1$, $C_2$, and $C_3$ be defined as in the proof of \cref{lem:bounded-nuisances}, and define
\begin{align*}
    \delta^{(q,t)}_\epsilon(Z_t,A_t) &= \frac{1}{\epsilon} \left( q_\epsilon^{(t)}(Z_t,A_t) - q_0^{(t)}(Z_t,A_t) \right) \\
    \delta^{(h,t)}_\epsilon(W_t,A_t) &= \frac{1}{\epsilon} \left( h_\epsilon^{(t)}(W_t,A_t) - h_0^{(t)}(W_t,A_t) \right) \,.
\end{align*}
Now, given \cref{lem:bounded-nuisances} it is sufficient to prove that $\delta^{(q,t)}_\epsilon(Z_t,A_t)$ and $\delta^{(h,t)}_\epsilon(W_t,A_t)$ are uniformly bounded over $|\epsilon| \leq u$, in $2$-norm. We will prove this for each $\delta^{(q,t)}_\epsilon(Z_t,A_t)$ term via forward induction on $t$, and then for each $\delta^{(h,t)}_\epsilon(W_t,A_t)$ term via backward induction on $t$. Furthermore, noticing that $\delta^{(q,t)}_\epsilon(Z_t,A_t) = (T_{t,0})^{-1} T_{t,0} \delta^{(q,t)}_\epsilon(Z_t,A_t)$ and $\delta^{(h,t)}_\epsilon(W_t,A_t) = (T^*_{0,\epsilon})^{-1} T^*_{0,\epsilon} \delta^{(h,t)}_\epsilon(W_t,A_t)$, for each $t \in [H]$, it follows from \cref{lem:t-inv-bound-2} that it is sufficient to show
\begin{equation*}
    \sup_{|\epsilon| \leq u} \left\| T_{t,0} \delta^{(q,t)}_\epsilon(Z_t,A_t) \right\|_{2,\mathcal P_0} < \infty \quad  \text{and} \quad \sup_{|\epsilon| \leq u} \left\| T^*_{t,0} \delta^{(h,t)}_\epsilon(W_t,A_t) \right\|_{2,\mathcal P_0} < \infty \,.
\end{equation*}

First, we can show that show that
\begin{align}
    T_{t,0} \delta^{(q,t)}_\epsilon(Z_t,A_t) &= \frac{1}{\epsilon} \left( T_{t,\epsilon} q^{(t)}_\epsilon(Z_t,A_t) - T_{t,0} q^{(t)}_0(Z_t,A_t) \right) - \frac{1}{\epsilon} (T_{t,\epsilon} - T_{t,0}) q_\epsilon^{(t)}(Z_t,A_t) \nonumber \\
    &= \frac{1}{\epsilon} \left( P^*_{t,\epsilon}(A_t \mid W_t)^{-1} - P^*_{t,0}(A_t \mid W_t)^{-1} \right) \nonumber \\
    &\qquad - \frac{1}{\epsilon} \left( \e_\epsilon[\eta_{t,\epsilon} q_\epsilon^{(t)}(Z_t,A_t) \mid W_t,A_t] - \e_0[\eta_{t,0} q_\epsilon^{(t)}(Z_t,A_t) \mid W_t,A_t] \right) \nonumber \\
    &= \frac{1}{\epsilon} \left( P^*_{t,\epsilon}(A_t \mid W_t)^{-1} - P^*_{t,0}(A_t \mid W_t)^{-1} \right) \nonumber \\
    &\qquad - \frac{1}{\epsilon} \left( \e_\epsilon - \e_0 \right)[\eta_{t,0} q_\epsilon^{(t)}(Z_t,A_t) \mid W_t,A_t] \nonumber \\
    \label{eq:tq-terms}
    &\qquad - \e_\epsilon \left[ \frac{1}{\epsilon} \left( \eta_{t,\epsilon} - \eta_{t,0} \right) q_\epsilon^{(t)}(Z_t,A_t) \mid W_t,A_t \right]
\end{align}

We will proceed by bounding the three terms in \cref{eq:tq-terms} one by one. for the first, we have
\begin{align*}
    &\frac{1}{\epsilon} \left( P^*_{t,\epsilon}(A_t \mid W_t)^{-1} - P^*_{t,0}(A_t \mid W_t)^{-1} \right) \\
    &= \left( \frac{P^*_{t,\epsilon}(A_t \mid W_t)^{-1} - P^*_{t,0}(A_t \mid W_t)^{-1}}{P^*_{t,\epsilon}(A_t \mid W_t) - P^*_{t,0}(A_t \mid W_t)} \right) \left( \frac{P^*_{t,\epsilon}(A_t \mid W_t) - P^*_{t,0}(A_t \mid W_t)}{\epsilon} \right) \\
    &= -(aP^*_{t,\epsilon}(A_t \mid W_t) + (1-a)P^*_{t,0}(A_t \mid W_t) )^{-2} \left( \frac{P^*_{t,\epsilon}(A_t \mid W_t) - P^*_{t,0}(A_t \mid W_t)}{\epsilon} \right) \,,
\end{align*}
where the second equation follows from the mean value theorem, for some $a \in (0,1)$ that may depend on $t$, $A_t$, and $W_t$. Now, clearly $\| (aP^*_{t,\epsilon}(A_t \mid W_t) + (1-a)P^*_{t,0}(A_t \mid W_t) )^{-2}\|_\infty \leq C_3^2$. Furthermore, we have
\begin{equation*}
    P^*_{t,\epsilon}(A_t \mid W_t) = \e_\epsilon[\eta_{t,\epsilon} \mid W_t, A_t] P_\epsilon(A_t \mid W_t) \,.
\end{equation*}
In addition, applying the mean value theorem again we have
\begin{align*}
    \left\| \frac{1}{\epsilon} \left( P_\epsilon(A_t \mid W_t) - P_0(A_t \mid W_t) \right) \right\|_{2,\mathcal P_0} &= \| P_{\epsilon'}(A_t \mid W_t) s_{\epsilon'}(A_t \mid W_t) \|_{2,\mathcal P_0} \\
    &\leq \| s_{\epsilon'}(A_t, W_t) \|_{2,\mathcal P_0} + \| s_{\epsilon'}(A_t) \|_{2,\mathcal P_0} \,,
\end{align*}
for some $\epsilon' \in (0, 1)$. Therefore, since by our definition of score functions in \cref{apx:semiparametric} score functions have uniformly bounded euclidean norm, it follows that $P_\epsilon(A_t \mid W_t) \in \fbounded(u)$.

Now, in the base case where $t=1$ we have $P^*_{t,\epsilon}(A_t \mid W_t) = P_\epsilon(A_t \mid W_t)$, and otherwise in the inductive case where $t>1$, by the inductive assumption and \cref{lem:boundedness-algebra} it easily follows that $\e_\epsilon[\eta_{t,\epsilon} \mid W_t, A_t] \in \fbounded(u)$, and therefore by \cref{lem:boundedness-algebra} so is $P^*_{t,\epsilon}(A_t \mid W_t)$. In either case, this implies the required bound for the first term of \cref{eq:tq-terms}.

For the second term of \cref{eq:tq-terms}, by the mean value theorem we have
\begin{align*}
    &\frac{1}{\epsilon} \left( \e_\epsilon - \e_0 \right)[\eta_{t,0} q_0^{(t)}(Z_t,A_t) \mid W_t,A_t] \\
    &= \e_{\epsilon'}[s_{\epsilon'}(\tau_{t-1}, Z_t \mid W_t,A_t) \eta_{t,0} q_0^{(t)}(Z_t,A_t) \mid W_t,A_t] \\
    &= \e_{\epsilon'}[s_{\epsilon'}(\tau_{t-1},Z_t,W_t,A_t) \eta_{t,0} q_0^{(t)}(Z_t,A_t) \mid W_t,A_t] \\
    &\qquad - \e_{\epsilon'}[s_{\epsilon'}(W_t,A_t) \eta_{t,0} q_0^{(t)}(Z_t,A_t) \mid W_t,A_t]\,,
\end{align*}
where $\epsilon' \in (0,\epsilon)$ and may be measurable \wrt the other random variables inside the expectation. Now, by \cref{lem:bounded-nuisances} we have $\|\eta_{t,0} q_0^{(t)}(Z_t,A_t)\|_\infty < \infty$. Furthermore, by our definition of score functions in parametric submodels in \cref{apx:semiparametric} we have $\sup_{|\epsilon| \leq u} \| s_\epsilon(\tau_{t-1},Z_t,W_t,A_t) \|_{2,\mathcal P_0} < \infty$ and $\sup_{|\epsilon| \leq u} \| s_\epsilon(W_t,A_t) \|_{2,\mathcal P_0} < \infty$.

Finally, for the third term of \cref{eq:tq-terms}, in the base case that $t=1$ we have $\eta_{t,\epsilon} = \eta_{t,0} = 1$, so this term vanishes. Otherwise, in the inductive case where $t>1$, by the inductive assumption and \cref{lem:boundedness-algebra} we know that $\frac{1}{\epsilon}(\eta_{t,\epsilon}-\eta_{0,\epsilon}) \in \fbounded(u)$. Therefore, combined with the fact that $q_0^{(t)}(Z_t,A_t)$ is bounded, it follows that this third term is uniformly bounded over $|\epsilon| \leq u$.

Therefore, we have completed the process of forward induction and proved that $q_\epsilon^{(t)}(Z_t,A_t) \in \fbounded(u)$ for all $t \in [H]$. Now, we proceed to the backward induction to prove that $h_\epsilon^{(t)}(W_t,A_t) \in \fbounded(u)$ for all $t \in [H]$. Proceeding similarly as above, we have
\begin{align*}
    T^*_{t,\epsilon} \delta^{(h,t)}_\epsilon(W_t,A_t) &= \frac{1}{\epsilon} \left( \indicator{E_t=A_t}Y_{t,\epsilon} - \indicator{E_t=A_t}Y_{t,0}  \right) \\
    &\qquad - \frac{1}{\epsilon} \left( \e_\epsilon - \e_0 \right)[\eta_{t,0} h_\epsilon^{(t)}(W_t,A_t) \mid Z_t,A_t] \\
    &\qquad - \e_\epsilon \left[ \frac{1}{\epsilon} \left( \eta_{t,\epsilon} - \eta_{t,0} \right) h_\epsilon^{(t)}(W_t,A_t) \mid Z_t,A_t \right]
\end{align*}
The second two terms can be bounded in infinity norm uniformly over $|\epsilon| \leq u$ following an identical argument as for the second two terms in \cref{eq:tq-terms}, so we only need to bound the first term. In the base case, where $t=H$, we have $Y_{H,\epsilon} = Y_{H,0} = R_H$, so this first term simply disappears. Otherwise, in the inductive case where $t<H$, we note that $Y_{t,\epsilon} = R_t + \gamma \Omega_{t,\epsilon}$, where $\Omega_{t,\epsilon}$ is defined in terms of addition and multiplication of terms of the kind $R_{t'}$, $\indicator{E_{t'}=A_{t'}}$, $q_\epsilon^{(t')}(Z_{t'},A_{t'})$, and $h_\epsilon^{(t')}(W_{t'},A_{t'})$, for $t' > t$. Therefore, noting that rewards by assumption are bounded, it follows from the inductive assumption and \cref{lem:boundedness-algebra} that $Y_{t,\epsilon} \in \fbounded(u)$, and therefore this first term is bounded.

This completes the backward induction, and establishes that $h_\epsilon^{(t)}(W_t,A_t) \in \fbounded(u)$ for all $t \in [H]$, so therefore we can conclude.

\end{proof}

\subsection{Discussion of Issues with Tangent Spaces in Past Work}
\label{apx:tangent-space}

Here we will discuss the problems with tagnent spaces proposed in past work on proximal causal inference. Given that this past work has considered the simpler setting where $H=1$, we will omit all suffixes and prefixes involving $t$ in the discussion here. Let $T : L_2(Z,A) \mapsto L_2(W,A)$ be the conditional operator defined according to
\begin{equation*}
    T f(Z,A) = \e[f(Z,A) \mid W,A] \quad \forall f \,,
\end{equation*}
whose adjoint $T^* : L_2(W,A) \mapsto L_2(Z,A)$ satisfies
\begin{equation*}
    T^* g(W,A) = \e[g(W,A) \mid Z,A] \quad \forall g \,.
\end{equation*}

In \citet{cui2020semiparametric}, the authors propose to use the tangent space, which, in terms of our notation and definitions of $q$ and $h$, is defined by the restrictions
\begin{align*}
    \e[ q(Z,A)) (s(A \mid W) + s(Z \mid W,A)) \mid W,A] &\in \text{Range}(T) \\
    \e[(\indicator{E=A} R - h(W,A)) s(W,R \mid Z,A) \mid Z,A] &\in \text{Range}(T^*) \,.
\end{align*}
However, this choice of tangent space is never fully justified in terms of the model under consideration. In \citet{kallus2021causal}, the authors do justify the necessity of these restrictions by noting that if $q_\epsilon$ and $h_\epsilon$ are differentiable with respect to $\epsilon$ within a given submodel, then we must have
\begin{align*}
    &\e\left[ \left. \frac{\partial}{\partial\epsilon} \right|_{\epsilon=0} q_\epsilon(Z,A) \mid W,A \right] \\
    &= \left. \frac{\partial}{\partial\epsilon} \right|_{\epsilon=0} P_\epsilon(A \mid W)^{-1} - \e[s(Z \mid W,A) q(Z,A) \mid W,A] \\
    &= -P(A \mid W)^{-1} s(A \mid W) - \e[s(Z \mid W,A) q(Z,A) \mid W,A] \\
    &= - \e[(s(A \mid W) + s(Z \mid W,A)) q(Z,A) \mid W,A] \,,
\end{align*}
and
\begin{equation*}
    \e\left[ \left. \frac{\partial}{\partial\epsilon} \right|_{\epsilon=0} h_\epsilon(W,A) \mid Z,A \right] = \e[s(W,R \mid Z,A) (\indicator{E=A}R - h(W,A)) \mid Z,A] \,.
\end{equation*}

Unfortunately, there are still some problems in this choice of tangent space. Firstly, although they are clearly necessary conditions for differentiability of the nuisances, it is not clear that they are \emph{sufficient} conditions; that is, it is not clear that for a given score function satisfying these conditions we can actually construct a parametric submodel for which the nuisances are defined and differentiable. Note that this is contrast to many other areas of work involving semiparametric efficiency theory, where the tangent set restrictions simply correspond to some conditional independence assumptions, in which case it is trivial to see that the tangent set restrictions invoked are both necessary and sufficient, since the partitioning of the score function immediately implies the independence structure of corresponding parametric submodels.

Secondly, it is not clear that diferentiability of the nuisances is even necessary -- indeed we showed how to prove that $\psidr(\tau_H)$ is a gradient of the policy value without ever assuming or requiring that the nuisance functions were differentiable -- nor is it clear what impact if any this requirement of nuisance differentiability would have on the actual model of interest.

Thirdly, \citet{kallus2021causal} consider a more general model in which $h$ and $q$ are not necessarily uniquely determined, in which case the above restrictions would actually have to hold for \emph{all} valid $h$ and $q$ functions, and it is not immediately clear that requiring this restriction for a single chosen $h$ and $q$ is sufficient.

Finally, under a model in which the allowed distributions all actually correspond to observational distributions for latent variable models with hidden confounders satisfying the PCI, which the past work implies are the only kinds of distributions under consideration, there are additional necessary restrictions on the score functions. For example, let $L = (Z,A)$, and $Q = (W,R)$, then from the PCI independence assumptions is clear that the observed distribution must take the form
\begin{equation*}
    P(L,Q) = \int P(S) P(L \mid S) P(Q \mid S) d \mu(S)\,,
\end{equation*}
for some latent variable $S$. It is easy to show that this implies that for any differentiable submodel on the full data $(L,Q,S)$ we have
\begin{align*}
    s(L,Q) &= \frac{\int \partial (s(S) + s(L \mid S) + s(Q \mid S)) P(S) P(L \mid S) P(Q \mid S) d \mu(S)}{\int P(S) P(L \mid S) P(Q \mid S) d \mu(S)} \\
    &= \int \partial (s(S) + s(L \mid S) + s(Q \mid S)) P(S \mid L,Q) d \mu(S) \\
    &= \e[s(S) + s(L \mid S) + s(Q \mid S) \mid L,Q]\,.
\end{align*}
Therefore, there must exist functions $f_1$, $f_2$, and $f_3$ such that
\begin{equation*}
    s(Z,A,W,R) = \e[f_1(S) + f_2(Z,A;S) + f_3(W,R;S) \mid Z,A,W,R]\,,
\end{equation*}
which satisfy
\begin{equation*}
    \e[f_1(S)] = \e[f_2(Z,A;S) \mid S] = \e[f_3(W,R;S) \mid S] = 0\,.
\end{equation*}
It is not clear that the previously proposed tangent spaces ensure this condition, for example.

Given these above issues, we took care to define assumptions to avoid such issues, by ensuring that we consider a model that is locally saturated at $\pb$, which guarantees that the tangent set is all square integrable functions. Achieving this involves ensuring that the nuisances are uniquely determined locally near $\pb$, and defining the parameter of interest is not defined in terms of the actual policy value, and rather in terms of the nuisances and the identification quantity; that is, we ensure that the parameter of interest corresponds to the target policy value for distributions that actually come from an underlying valid PCI model satisfying our assumptions, and otherwise is still an unambiguous and well-defined quantity as long as the nuisances are uniquely defined.

\subsection{Proof of Semiparametric Efficiency Theorem}

Before we present the main proof we present the following lemma based on the dominated convergence theorem, which we will apply heavily.

\begin{lemma}
\label{lem:functional-deriv}
Let $\mathcal P_\epsilon$ be a parametric submodel, and suppose that function $f_\epsilon(\tau_H)$ indexed by $\epsilon$ converges point-wise to $f(\tau_H)$ as $\epsilon \to 0$. Suppose in addition that it is uniformly bounded for small $\epsilon$; that is, $\limsup_{\epsilon \to 0} \|f_\epsilon(\tau_H)\|_\infty < \infty$. Then, we have
\begin{equation*}
    \lim_{\epsilon \to 0} \frac{1}{\epsilon} \left( \e_\epsilon[f_\epsilon(\tau_H)] - \e_0[f_\epsilon(\tau_H)] \right) = \e_0[s(\tau_H) f(\tau_H)]
\end{equation*}
\end{lemma}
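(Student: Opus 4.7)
The plan is to reduce this to an application of dominated convergence. First I would rewrite the expression as a single $\mu$-integral,
$$\frac{1}{\epsilon}\left( \e_\epsilon[f_\epsilon(\tau_H)] - \e_0[f_\epsilon(\tau_H)] \right) = \int f_\epsilon(\tau_H)\,\frac{(dP_\epsilon/d\mu)(\tau_H) - (dP_0/d\mu)(\tau_H)}{\epsilon}\,d\mu(\tau_H),$$
and observe that the integrand converges $\mu$-almost-everywhere to $f(\tau_H)\,(dP_0/d\mu)(\tau_H)\,s(\tau_H)$: the factor $f_\epsilon \to f$ pointwise by hypothesis, while the difference quotient converges to $(dP_0/d\mu)\,s$ by the very definition of the score $s(\tau_H) = \tfrac{d}{d\epsilon}\log(dP_\epsilon/d\mu)(\tau_H)\bigr|_{\epsilon=0}$ combined with the chain rule.

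To pass to the limit under the integral I need a $\mu$-integrable majorant. Using the fundamental theorem of calculus on the map $t \mapsto (dP_t/d\mu)(\tau_H)$ together with the chain-rule identity $\tfrac{d}{dt}(dP_t/d\mu) = s(\tau_H;t)\,(dP_t/d\mu)$, one obtains for all $|\epsilon| \leq u$
$$\left|\frac{(dP_\epsilon/d\mu)(\tau_H) - (dP_0/d\mu)(\tau_H)}{\epsilon}\right| = \left|\frac{1}{\epsilon}\int_0^\epsilon s(\tau_H;t)(dP_t/d\mu)(\tau_H)\,dt\right| \leq \sup_{|t|\leq u}\bigl|s(\tau_H;t)\,(dP_t/d\mu)(\tau_H)\bigr|.$$
Combining with the bound $\|f_\epsilon\|_\infty \leq C$ in the hypothesis (for sufficiently small $\epsilon$) yields the dominating function $C\,\sup_{|t|\leq u}\bigl|s(\cdot;t)\,(dP_t/d\mu)\bigr|$, whose $\mu$-integrability is exactly the regularity condition built into the definition of a valid parametric submodel.

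Having secured both pointwise convergence and a $\mu$-integrable dominator, dominated convergence gives the claim:
$$\lim_{\epsilon\to 0}\int f_\epsilon(\tau_H)\,\frac{(dP_\epsilon/d\mu) - (dP_0/d\mu)}{\epsilon}\,d\mu = \int f(\tau_H)\,s(\tau_H)\,(dP_0/d\mu)(\tau_H)\,d\mu = \e_0[s(\tau_H)\,f(\tau_H)].$$
The main technical obstacle is the bookkeeping needed to verify that the chain-rule identity $\tfrac{d}{dt}(dP_t/d\mu) = s(\cdot;t)\,(dP_t/d\mu)$ holds $\mu$-almost-everywhere along the submodel path and that the resulting majorant lies in $L^1(\mu)$; both follow directly from the differentiability and integrability properties imposed on parametric submodels in the one-dimensional submodel definition. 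Once these regularity checks are in place the argument is essentially mechanical.
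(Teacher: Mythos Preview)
Your proposal is correct and follows essentially the same approach as the paper: rewrite the difference as a single $\mu$-integral, identify the pointwise limit using the definition of the score, and invoke dominated convergence with the majorant $C\sup_{|t|\leq u}|s(\cdot;t)(dP_t/d\mu)|$ supplied by the submodel regularity condition. The only cosmetic difference is that the paper uses the mean value theorem pointwise (writing $\tfrac{p_\epsilon-p_0}{\epsilon}=p_{\epsilon'}s_{\epsilon'}$ for some $\epsilon'(\tau_H)\in(0,\epsilon)$) rather than the fundamental theorem of calculus to produce the same dominating function.
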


\begin{proof}[Proof of \cref{lem:functional-deriv}]
Let $\mu(\tau_H)$ be a dominating measure for all measures in the parametric submodel, and let $p_\epsilon(\tau_H) = (d \mathcal P_\epsilon / d \mu)(\tau_H)$. Then, we have
\begin{align*}
    \frac{1}{\epsilon} \left( \e_\epsilon[f_\epsilon(\tau_H)] - \e_0[f_\epsilon(\tau_H)] \right) &= \int \frac{1}{\epsilon} \left( p_\epsilon(\tau_H) - p_0(\tau_H) \right) f_\epsilon(\tau_H) d \mu(\tau_H) \\
    &= \int p_{\epsilon'}(\tau_H) s_{\epsilon'}(\tau_H) f_\epsilon(\tau_H) d \mu(\tau_H) \,,
\end{align*}
where in the second equality we apply the mean value theorem, and $\epsilon' \in (0, \epsilon)$. Then, given the boundedness condition on $p_{\epsilon'}(\tau_H) s_{\epsilon'}(\tau_H)$ for parametric submodels assumed in \cref{apx:semiparametric}, as well as the uniform boundedness and point-wise convergence assumed on $f_\epsilon$, applying the dominated convergence theorem gives us
\begin{align*}
    \lim_{\epsilon \to 0} \frac{1}{\epsilon} \left( \e_\epsilon[f_\epsilon(\tau_H)] - \e_0[f_\epsilon(\tau_H)] \right) &= \int p_0(\tau_H) s_0(\tau_H) f_0(\tau_H) d \mu(\tau_H) \\
    &= \e_0[s(\tau_H) f(\tau_H)]\,,
\end{align*}
as required.
\end{proof}

\begin{proof}[Proof of \cref{thm:efficiency}]

Following \cref{apx:semiparametric}, in order to prove this theorem we need to: (1) derive the tangent space of $\mpci$ at $\pb$; (2) justify that $\psidr$ lies within this tangent space; and (3) show that $\psidr$ is a gradient of $v_\gamma(\pi_e)$. For the first two parts, we note that by the conditions of \cref{thm:efficiency} it easily follows that any parametric submodel $\mathcal P_\epsilon$ passing through $\pb$ at $\epsilon=0$ must lie within $\mpci$ for sufficiently small $\epsilon$, so therefore the tangent space is simply the set of all square integrable functions. Given this, $\psidr$ clearly lies within this tangent space. Therefore, we only need to justify that it is a gradient. That is, we need to show that for every parametric submodel
\begin{equation*}
    \left. \frac{\partial V(\mathcal P_\epsilon)}{\partial \epsilon} \right|_{\epsilon=0} = \eb[s(\tau_H) (\psidr(\tau_H) - v_\gamma(\pi_e))]\,,
\end{equation*}
where $s(\tau_H)$ is the score function of the parametric submodel. Note that we have \emph{not} assumed that the nuiances $q^{(t)}_\epsilon$ and $h^{(t)}_\epsilon$ are differentiable, so we must proceed with caution.

First, note that
\begin{align*}
    &\frac{1}{\epsilon} (V(\mathcal P_\epsilon) - V(\pb)) \\
    &= \frac{1}{\epsilon} \e_\epsilon \left[ \sum_{a \in \aset} h^{(1)}_\epsilon(W_1,a) \right] - \frac{1}{\epsilon} \e_0 \left[ \sum_{a \in \aset} h^{(1)}_0(W_1,a) \right] \\
    &= \frac{1}{\epsilon} (\e_\epsilon - \e_0) \left[ \sum_{a \in \aset} h^{(1)}_\epsilon(W_1,a) \right] + \e_0 \left[ \sum_{a \in \aset} \frac{1}{\epsilon} (h^{(1)}_\epsilon(W_1,a) - h^{(1)}_0(W_1,a)) \right] \,.
\end{align*}
Now, since $h_\epsilon^{(1)}(W_1,A_1) \in \fbounded(u)$ for sufficiently small $u$, it trivially follows that $h_\epsilon^{(1)}(w,a)$ converges to $h_0^{(1)}(w,a)$ for every $w,a$. Combining this fact along with \cref{lem:bounded-nuisances} and \cref{lem:functional-deriv}, this implies that
the first term above converges to $\e_0[s(\tau_H) \sum_{a \in \aset} h^{(1)}_\epsilon(W_1, a)]$. For the second term, we have
\begin{align*}
    &\e_0 \left[ \sum_{a \in \aset} \frac{1}{\epsilon} (h^{(1)}_\epsilon(W_1,a) - h^{(1)}_0(W_1,a)) \right] \\
    &= \e_0 \left[ P_0(A_1 \mid W_1)^{-1} \frac{1}{\epsilon} (h^{(1)}_\epsilon(W_1,A_1) - h^{(1)}_0(W_1,A_1)) \right] \\
    &= \e_0 \left[ q_0^{(1)}(Z_1,A_1) \frac{1}{\epsilon} (h^{(1)}_\epsilon(W_1,A_1) - h^{(1)}_0(W_1,A_1)) \right] \\
    &= \frac{1}{\epsilon} (\e_0 - \e_\epsilon) \left[ q_0^{(1)}(Z_1,A_1) h^{(1)}_\epsilon(W_1,A_1) \right] \\
    &\qquad + \frac{1}{\epsilon} \left( \e_\epsilon \left[ q^{(1)}_0(Z_1,A_1) h^{(1)}_\epsilon(W_1,A_1) \right] - \e_0 \left[ q^{(1)}_0(Z_1,A_1) h^{(1)}_0(W_1,A_1) \right] \right) \\
    &= \frac{1}{\epsilon} (\e_0 - \e_\epsilon) \left[ q_0^{(1)}(Z_1,A_1) h^{(1)}_\epsilon(W_1,A_1) \right] \\
    &\qquad + \frac{1}{\epsilon} \left( \e_\epsilon \left[ q^{(1)}_0(Z_1,A_1) \indicator{E_1=A_1} Y_{1,\epsilon} \right] - \e_0 \left[ q^{(1)}_0(Z_1,A_1) \indicator{E_1=A_1} Y_{1,0} \right] \right) \\
    &= \frac{1}{\epsilon} (\e_\epsilon - \e_0) \left[ q_0^{(1)}(Z_1,A_1) \left( \indicator{E_1=A_1} Y_{1,\epsilon} - h^{(1)}_\epsilon(W_1,A_1) \right) \right] \\
    &\qquad + \e_0 \left[ q^{(1)}_0(Z_1,A_1) \indicator{E_1=A_1} \frac{1}{\epsilon} \left( Y_{1,\epsilon} - Y_{1,0} \right) \right] \,.
\end{align*}
Now, again applying \cref{lem:functional-deriv,lem:convergent-nuisances} the first term above converges to
\begin{equation*}
    \e_0[s(\tau_H) q_0^{(1)}(Z_1,A_1) ( \indicator{E_1=A_1} Y_{1,\epsilon} - h^{(1)}_\epsilon(W_1,A_1))]\,.
\end{equation*} 
Combining this with the previous result and the fact that $s(\tau_H)$ has mean zero, as well as \cref{lem:phi-is}, we get
\begin{equation*}
    \left. \frac{\partial V(\mathcal P_\epsilon)}{\partial \epsilon} \right|_{\epsilon=0} = \e_0[s(\tau_H)(\psidr(\tau_H) - v_\gamma(\tau_e))] + \lim_{\epsilon \to 0} \es{2} \left[ \frac{1}{\epsilon} \left( Y_{1,\epsilon} - Y_{1,0} \right) \right].
\end{equation*}
Therefore, all that remains to show is that the second term above vanishes. We will argue this by backward induction, by showing that for all $t \leq H$ we have
\begin{equation}
\label{eq:gradient-induction}
    \lim_{\epsilon \to 0} \es{t+1} \left[ \frac{1}{\epsilon} \left( Y_{t,\epsilon} - Y_{t,0} \right) \right] = 0\,.
\end{equation}

First, for the base case $t=H$, this is trivial, since $Y_{H,\epsilon} = R_H$ for all $\epsilon$. Next, suppose that \cref{eq:gradient-induction} holds for all $t \geq s$, for some $s \leq H$. We will argue that it also holds for $t=s-1$. Specifically, plugging in the definitions of $Y_{s,\epsilon}$ and $Y_{s-1,0}$, we have
\begin{align*}
    \es{s} \left[ \frac{1}{\epsilon} \left( Y_{s-1,\epsilon} - Y_{s-1,0} \right) \right] &=  \gamma \es{s} \left[ \frac{1}{\epsilon} \sum_{a \in \aset} \left( h^{(s)}_\epsilon(W_s, a) - h^{(s)}_0(W_{s}, a) \right) \right] \\
    &\qquad + \gamma \es{s} \left[ \frac{1}{\epsilon} \left( q^{(s)}_\epsilon(Z_s,A_s) \Delta_{s,\epsilon} - q^{(s)}_0(Z_s,A_s) \Delta_{s,0} \right) \right] \\
    &=  \gamma \es{s} \left[ q_0^{(s)}(Z_s,A_s) \frac{1}{\epsilon} \left( h^{(s)}_\epsilon(W_s, A_s) - h^{(s)}_0(W_{s}, A_s) \right) \right] \\
    &\qquad + \gamma \es{s} \left[ \frac{1}{\epsilon} \left( q^{(s)}_\epsilon(Z_s,A_s) \Delta_{s,\epsilon} - q^{(s)}_0(Z_s,A_s) \Delta_{s,0} \right) \right] \\
    &= \gamma \es{s} \left[ \frac{1}{\epsilon} \left( q^{(s)}_\epsilon(Z_s,A_s) - q^{(s)}_0(Z_s,A_s) \right) \Delta_{s,\epsilon} \right] \\
    &\qquad + \gamma \es{s} \left[ q^{(s)}_0(Z_s,A_s) \indicator{E_s=A_s} \frac{1}{\epsilon} (Y_{s,\epsilon} - Y_{s,0}) \right]\,,
\end{align*}
where $\Delta_{s,\epsilon} = \indicator{E_s=A_s} Y_{s,\epsilon} - h^{(s)}(W_s,A_s)$.
Now, by \cref{lem:phi-is} the second term is equal to $\es{s+1}[ \frac{1}{\epsilon} (Y_{s,\epsilon} - Y_{s,0}) ]$, which converges to zero by the inductive hypothesis. Furthermore, since $\es{s}[\Delta_{s,0} \mid Z_s,A_s] = 0$, we can derive
\begin{align*}
    &\es{s} \left[ \frac{1}{\epsilon} \left( q^{(s)}_\epsilon(Z_s,A_s) - q^{(s)}_0(Z_s,A_s) \right) \Delta_{s,\epsilon} \right] \\
    &= \es{s} \left[ \frac{1}{\epsilon} \left( q^{(s)}_\epsilon(Z_s,A_s) - q^{(s)}_0(Z_s,A_s) \right) (\Delta_{s,\epsilon} - \Delta_{s,0}) \right] \\
    &= \e_0 \left[ \eta_{t,0} \frac{1}{\epsilon} \left( q^{(s)}_\epsilon(Z_s,A_s) - q^{(s)}_0(Z_s,A_s) \right) (\Delta_{s,\epsilon} - \Delta_{s,0}) \right] \\
    &\leq \epsilon \|\eta_{t,0}\|_\infty \left\| \frac{1}{\epsilon} \left( q^{(s)}_\epsilon(Z_s,A_s) - q^{(s)}_0(Z_s,A_s) \right) \right\|_{2,\mathcal P_0} \left\| \frac{1}{\epsilon} \left( \Delta_{s,\epsilon} - \Delta_{s,0} \right) \right\|_{2,\mathcal P_0}\,,
\end{align*}
where the inequality follows from Cauchy Schwartz.
Also, by \cref{lem:convergent-nuisances} we have $q_\epsilon^{(s)}(Z_t,A_t) \in \fbounded(u)$ for some $u>0$, and further by \cref{lem:boundedness-algebra} we have $\Delta_{s,\epsilon} \in \fbounded(u)$. In addition, by \cref{lem:bounded-nuisances} we have $\|\eta_{t,0}\|_\infty < \infty$. 
Therefore,  the first term also converges to zero. Thus, putting the above together, we have $\lim_{\epsilon \to 0} \es{s} \left[ \frac{1}{\epsilon} \left( Y_{s-1,\epsilon} - Y_{s-1,0} \right) \right] = 0$, which proves the inductive case, and therefore our proof is complete.

\end{proof}

\section{Nuisance Estimation}
\label{apx:nuisance-estimation}

First in this appendix we provide a proof of \cref{lem:nuisance-estimation}. Then, we provide a meta-algorithm approach for actually estimating nuisances following this lemma, which we implement in our experiments.

\subsection{Proof of Nuisance Estimation Lemma}

The proof of this lemma follows in a very straightforward way by successively applying \cref{lem:phi-is}. The details are as follows.

\begin{proof}[Proof of \cref{lem:nuisance-estimation}]
We will first deal with the case of $q^{(t)}$. We first note that \cref{eq:q} is equivalent to
\begin{equation*}
    \es{t} \left[ g(W_t,A_t) \left( q^{(t)}(Z_t,A_t) - P^*_t(A_t \mid W_t)^{-1} \right) \right] = 0\,,
\end{equation*}
for all measurable $g$. Next, following the same argument as in \cref{lem:bridge-observed-cross}, the above is equivalent to
\begin{equation*}
    \es{t} \left[ g(W_t,A_t) q^{(t)}(Z_t,A_t) - \sum_{a \in \aset} g(W_t,a) \right] = 0 \qquad \forall g \,.
\end{equation*}
Next, we can argue by backward induction that for all $s \in [t]$ we have
\begin{align*}
    &\es{s} \left[ \prod_{s'=s}^{t-1} q^{(s')}(Z_{s'},A_{s'}) \indicator{E_{s'}=A_{s'}} \left( g(W_t,A_t) q^{(t)}(Z_t,A_t) - \sum_{a \in \aset} g(W_t,a) \right) \right] \\
    &= \es{t} \left[ g(W_t,A_t) q^{(t)}(Z_t,A_t) - \sum_{a \in \aset} g(W_t,a) \right] \,.
\end{align*}
The base case is trivially true, and the inductive case follows by applying \cref{lem:phi-is}. Therefore, noting that
\begin{equation*}
    \eta_t = \prod_{s'=0}^{t-1} q^{(s')}(Z_{s'},A_{s'}) \indicator{E_{s'}=A_{s'}} \,,
\end{equation*}
we can put the above together to conclude that \cref{eq:q} is equivalent to
\begin{equation*}
    \eb \left[ \eta_t \left( g(W_t,A_t) q^{(t)}(Z_t,A_t) - \sum_{a \in \aset} g(W_t,a) \right) \right] = 0 \qquad \forall g \,.
\end{equation*}

Finally, we can deal with the case of $h^{(t)}$ almost identically. We can first note that \cref{eq:h} is equivalent to
\begin{equation*}
    \es{t} \left[ f(Z_t,A_t) \left(  h^{(t)}(W_t,A_t) - \indicator{E_t=A_t}Y_t \right) \right] = 0 \qquad \forall f \,,
\end{equation*}
and applying an almost identical backwards induction argument this is equivalent to
\begin{equation*}
    \eb \left[ \eta_t f(Z_t,A_t) \left(  h^{(t)}(W_t,A_t) - \indicator{E_t=A_t}Y_t \right) \right] = 0 \qquad \forall f \,,
\end{equation*}
which in turn is equivalent to
\begin{equation*}
    \eb \left[ \eta_t \left(  h^{(t)}(W_t,A_t) - \indicator{E_t=A_t}Y_t \right) \bigmid Z_t, A_t \right] = 0 \,.
\end{equation*}

\end{proof}

\subsection{Meta-Algorithm for Nuisance Estimation with Kernel Critics}

We now consider a very general meta-algorithm for implementing the estimators in \cref{prop:nuisance-estimators} for kernel critics. In particular, let us assume that the kernel classes $\Gcal^{(t)}$ and $\Fcal^{(t)}$ are RKHSs, and the critic regularizers are $\mathcal R(\cdot) = (\alpha / 4) \|\cdot\|_{K}^2$, where $\|\cdot\|_{K}$ is the corresponding RKHS norm for that critic. Then, this gives the kernel VMM, for which \citet{bennett2023variational} established good theoretical properties in terms of consistency, asymptotic normality, and efficiency. This also admits a simple closed-form for the inner sup in the min-max problems above.

The resulting sequential estimation procedure is summarized in \cref{alg:sequential-vmm}. 
The algorithm estimates the functions $q^{(t)}$ in ascending order and then estimates $h^{(t)}$ in descending order, in each case using plug-in estimates of the previously estimated nuisances.
We let $K^{(q,t)}$ and $K^{(h,t)}$ denote the kernel functions for the critic function classes $\mathcal G^{(t)}$ and $\mathcal F^{(t)}$, respectively, where the former is defined on pairs of $(W_t,A_t)$ tuples, and the latter on pairs of $(Z_t,A_t)$, and let $\alpha^{(q,t)}$ and $\alpha^{(h,t)}$ denote the corresponding hyperparameters for the critic regularizers. We note that any or all of the above inputs may be data-driven, and we emphasize again that the prior estimates $\tilde q^{(t)}$ and $\tilde h^{(t)}$ may come from any methodology and need not necessarily be consistent. In particular, we can start by inputting the zero functions for these (or any other fixed functions), and then run the procedure again using the previous output as input for the prior estimates.

\begin{algorithm}
\setstretch{1.5}
\renewcommand{\algorithmicrequire}{\textbf{Input:}}
\renewcommand{\algorithmicensure}{\textbf{Output:}}
\caption{Sequential VMM for PCI-POMDP Nuisance Estimation}
\label{alg:sequential-vmm}
\begin{algorithmic}[1]
    \Require{Data $\mathcal D = (\tau_H^{(1)},\ldots,\tau_H^{(n)})$, nuisance function classes $\mathcal Q^{(t)}$ and $\mathcal H^{(t)}$, kernel functions $K^{(q,t)}$ and $K^{(h,t)}$, hyperparameters $\alpha^{(q,t)}$ and $\alpha^{(h,t)}$, prior estimates $\tilde q^{(t)}$ and $\tilde h^{(t)}$, and optional regularization functions $\mathcal R^{(q,t)}$ and $\mathcal R^{(h,t)}$, for all $t \in [H]$}
    \Ensure{Nuisance estimates $\hat q^{(t)}$ and $\hat h^{(t)}$ for all $t \in [H]$}
    \State $\eta^{(i)}_t \gets 1 \quad \forall i$
    \For {$t \in \{1, \ldots, H\}$}
        \State $V^{(q,t)} \gets \textsc{Vector}(\{(W^{(i)}_t,a) \mid i \in [n], a \in \aset\})$ 
        \State $\hat q^{(t)} \gets \textsc{ComputeQ}(W_t, Z_t, A_t, V^{(q,t)}, \mathcal Q^{(t)}, \mathcal R^{(q,t)}, \alpha^{(q,t)}, \tilde q^{(t)}, K^{(q,t)}, \eta_t)$
        \State $\eta^{(i)}_{t+1} \gets \eta^{(i)}_t \indicator{A_{t}^{(i)}=E_{t}^{(i)}} \hat q^{(t)}(Z_{t}^{(i)},A_{t}^{(i)}) \quad \forall i$
    \EndFor
    \State $\omega^{(i)}_H \gets 0 \quad \forall i$
    \For{$t \in \{H,H-1,\ldots,1\}$}
        \State $V^{(h,t)} \gets \textsc{Vector}(\{(Z^{(t)}_i,A^{(t)}_i) \mid i \in [n]\})$
        \State $\mu_t^{(i)} \gets \indicator{A_{t}^{(i)}=E_{t}^{(i)}} ( R_{t}^{(i)} + \gamma \omega_t^{(i)} ) \quad \forall i$
        \State $\hat h^{(t)} \gets \textsc{ComputeH}(W_t, Z_t, A_t, V^{(h,t)}, \mathcal H^{(t)}, \mathcal R^{(h,t)}, \alpha^{(h,t)}, \tilde h^{(t)}, K^{(h,t)}, \eta_t, \mu_t)$
        \State $\omega_{t-1}^{(i)} \gets \sum_{a \in \aset} \hat h^{(t)}(W^{(i)}_{t},a) + \hat q^{(t)}(Z^{(i)}_{t},A^{(i)}_{t}) \left( \mu_t^{(i)} - \hat h^{(t)}(W^{(i)}_{t},A^{(i)}_{t}) \right) \quad \forall i$
    \EndFor
    \State \Return $\hat q^{(1)}, \ldots, \hat q^{(H)}$, $\hat h^{(1)}, \ldots, \hat h^{(H)}$
    \Procedure{ComputeQ}{$W$, $Z$, $A$, $V$, $\mathcal Q$, $\mathcal R$, $\alpha$, $\tilde q$, $K$, $\eta$}
        \State $L_{i,j}(q) \gets \eta^{(i)}  \left( q(Z_i, A_i) K(((W_i,A_i), V_j) - \sum_{a \in \aset} K((W_i,a), V_j\right)) \quad \forall i,j,q$
        \State $\Omega_{i,j} \gets \frac{1}{n} \sum_{k=1}^n L_{k,i}(\tilde q) L_{k,j}(\tilde q) + \alpha K(V_i, V_j) \quad \forall i,j$
        \State $\rho_i(q) \gets \frac{1}{n}  \sum_{k=1}^n L_{k,i}(q) \quad \forall i,q$
        \State \Return $\argmin_{q \in \mathcal Q} \rho(q)^T \Omega^{-1} \rho(q) + \mathcal R(q)$
    \EndProcedure
    \Procedure{ComputeH}{$W$, $Z$, $A$, $V$, $\mathcal H$, $\mathcal R$, $\alpha$, $\tilde h$, $K$, $\eta$, $\mu$}
        \State $L_{i,j}(h) \gets \eta_i  K((Z_i,A_i), V_j) \left( h(W_i, A_i)  - \mu_i \right) \quad \forall i,j,h$
        \State $\Omega_{i,j} \gets \frac{1}{n} \sum_{k=1}^n L_{k,i}(\tilde h) L_{k,j}(\tilde h) + \alpha K(V_i, V_j) \quad \forall i,j$
        \State $\rho_i(h) \gets \frac{1}{n}  \sum_{k=1}^n L_{k,i}(h) \quad \forall i,h$
        \State \Return $\argmin_{h \in \mathcal H} \rho(h)^T \Omega^{-1} \rho(h) + \mathcal R(h)$
    \EndProcedure
\end{algorithmic}
\end{algorithm}

We provide a derivation of this algorithm below. We note that it is a meta-algorithm, since it requires some additional procedures to solve the minimization problems over $q \in \mathcal Q^{(t)}$ and $h \in \mathcal \mathcal H^{(t)}$ at the end of \textsc{ComputeQ} and \textsc{ComputeH} respectively. However, solving such problems is very standard and well studied, so we do not consider it explicitly here. 
In this algorithm we let $\textsc{Vector}$ denote a function which converts a set into a vector with the elements ordered arbitrarily.
Finally, in the case that the data is discrete, this algorithm is very efficient in terms of how it scales with $n$; the overall computational cost will be linear in $n$, since the maximum possible lengths of $V^{(q,t)}$ and $V^{(h,t)}$ are bounded for each $t \in [H]$.

\subsection{Derivation of Meta-Algorithm}

Following \citet{bennett2023variational}, the kernel VMM estimators work by solving
\begin{equation*}
    \hat q^{(t)} = \argmin_{q \in \mathcal Q^{(t)}} J_n(q;\alpha^{(q,t)}) + \mathcal R^{(q,t)}(q) \,,
\end{equation*}
where
\begin{align*}
    J_n(q;\alpha^{(q,t)}) &= \sup_g \e_n\left[ \eta_t \left( g(W_t,A_t) q(Z_t,A_t) - \sum_{a \in \aset}g(W_t,a) \right) \right] \\
    &\qquad - \frac{1}{4} \e_n\left[ \eta_t^2 \left( g(W_t,A_t) \tilde q^{(t)}(Z_t,A_t) - \sum_{a \in \aset}g(W_t,a) \right)^2 \right] \\
    &\qquad + \alpha^{(q,t)} \|g\|_{K^{(q,t)}}\,,
\end{align*}
where $\e_n$ is the empirical expectation using the $n$ observed trajectories, and $\|\cdot\|_{K^{(q,t)}}$ is the RKHS norm with kernel $K^{(q,t)}$. First, by Representer Theorem, we only need to optimize over $g$ of the form
\begin{equation*}
    g = \sum_{j=1}^{N(W_t;\aset)} \beta_j K^{(q,t)}(\mathbb S(W_t;\aset)_j, \cdot)\,.
\end{equation*}
Now, plugging this into the above equation for $J_n(q;\alpha_t)$, we obtain
\begin{equation*}
    J_n(q;\alpha_t) = \sup_{\beta} \beta^T \rho(q) - \frac{1}{4} \beta^T Q \beta\,,
\end{equation*}
where $Q$ and $\rho(q)$ are defined as in function \textsc{ComputeQ} in \cref{alg:sequential-vmm}. Given $Q$ is invertible, it is straightforward to verify that the above is maximized by $\beta = 2 Q^{-1} \rho(q)$, giving $J_n(q;\alpha_t) = \rho(q)^T Q^{-1} \rho(q)$. Therefore, the kernel VMM solution is given by
\begin{equation*}
    \hat q^{(t)} = \argmin_{q \in \mathcal Q^{(t)}} \rho(q)^T Q^{-1} \rho(q) + \mathcal R^{(q,t)}(q) \,,
\end{equation*}
which verifies the correctness of \textsc{ComputeQ}.

Next, for \textsc{ComputeH}, following an almost identical argument as above the kernel VMM estimator is given by
\begin{equation*}
    \hat h^{(t)} = \argmin_{h \in \mathcal H^{(t)}} J_n(h;\alpha^{(h,t)}) + \mathcal R^{(h,t)}(h) \,,
\end{equation*}
where
\begin{align*}
    J_n(h;\alpha^{(h,t)}) &= \sup_f \e_n \left[ \eta_t f(Z_t,A_t) \left( h(W_t,A_t) - \mu_t \right)\right] \\
    &\qquad - \frac{1}{4} \e_n \left[ \eta_t^2 f(Z_t,A_t)^2 \left( h(W_t,A_t) - \mu_t \right)^2 \right] + \alpha^{(h,t)} \|f\|_{K^{(h,t)}} \,,
\end{align*}
where $\mu_t = \indicator{E_t=A_t} Y_t$. Again, by the Representer Theorem, we only need to optimize over $f$ of the form
\begin{equation*}
    f = \sum_{j=1}^{N(Z_t,A_t)} \beta_j K^{(h,t)}(\mathbb S(Z_t,A_t)_j, \cdot)\,,
\end{equation*}
and plugging this in to the above gives us
\begin{equation*}
    J_n(h;\alpha_t) = \sup_{\beta} \beta^T \rho(h) - \frac{1}{4} \beta^T Q \beta\,,
\end{equation*}
where $\rho(h)$ and $Q$ are defined as in \textsc{ComputeH}. This is clearly minimized by $\beta = 2 Q^{-1} \rho(h)$, and plugging this into the original objective gives us
\begin{equation*}
    \hat h^{(t)} = \argmin_{h \in \mathcal H^{(t)}} \rho(h)^T Q^{-1} \rho(h) + \mathcal R^{(h,t)}(h) \,,
\end{equation*}
which verifies the correctness of \textsc{ComputeH}.

Finally, for the main part of the algorithm, we observe that it works by sequentially estimating each $q^{(t)}$ for increasing $t$ according to \textsc{ComputeQ}, using an estimate of $\eta_t$ given by plugging in our estimates of $q^{(t')}$ for $t' < t$, and then sequentially estimating each $h^{(t)}$ for decreasing $t$ according to \textsc{ComputeH}, using an estimate of $\eta_t$ given by plugging in our estimates of $q^{(t)}$ for $t' < t$ and an estimate of $\mu_t$ by plugging in our estimates of $q^{(t')}$ and $h^{(t')}$ for $t'>t$. We note that in the computation of the estimate of $\mu_t$, we use the fact that
\begin{equation*}
    \mu_t = \indicator{E_t=A_t} \left( R_t + \gamma \omega_t \right)\,,
\end{equation*}
where $\omega_H=0$, and for $t<H$ we have
\begin{equation*}
    \omega_t = \sum_{a \in \aset} h^{(t+1)}(W_{t+1},a) + q^{(t+1)}(Z_{t+1},A_{t+1}) \left( \mu_{t+1} - h^{(t+1)}(W_{t+1},A_{t+1}) \right)\,.
\end{equation*}

\section{Additional Details for Experiment 1}
\label{apx:experiment-details}

\subsection{Environment Details}

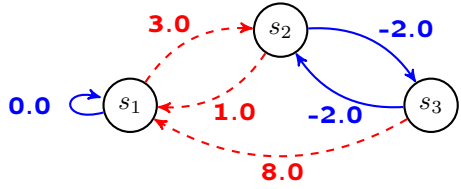
\begin{figure}
\centering
\begin{tikzpicture}[->,>=stealth',auto,node distance=3cm,
  thick,mainnode/.style={circle,draw},dottednode/.style={circle,draw,dotted},noborder/.style={{circle,inner sep=0,outer sep=0}}]
  \node[mainnode] (s1) at (-2,0) {$s_1$};
  \node[mainnode] (s2) at (0,1) {$s_2$};
  \node[mainnode] (s3) at (2,0) {$s_3$};
  \path[every node/.style={font=\sffamily\tiny}]
    (s1) edge[red, dashed, bend left] node [right, scale=2.0, xshift=-12, yshift=3] {3.0} (s2)
    (s2) edge[red, dashed, bend left] node [right, scale=2.0, xshift=-5, yshift=-3] {1.0} (s1)
    (s3) edge[red, dashed, bend left] node [right, scale=2.0, yshift=-3, xshift=-7.5] {8.0} (s1)
    (s1) edge[blue, loop left] node [scale=2.0] {0.0} ()
    (s2) edge[blue, bend left] node [right, scale=2.0, xshift=-2, yshift=2] {-2.0} (s3)
    (s3) edge[blue, bend left] node [right, scale=2.0, xshift=-10, yshift=-3.5] {-2.0} (s2);
\end{tikzpicture}
\caption{Graphical representation of the \noisyobs POMDP scenario used in our Proximal RL experiments. Red dashed edges / blue solid edges represent the transitions under actions $a_1$ / $a_2$ respectively, and the numeric label for each edge indicates the corresponding reward. Note that all transitions and rewards in \noisyobs are deterministic, and do not depend on the time index. In each state $s_i$ we receive observation $o_i$ with probability $1 - \epsnoise$, or observatoin $o_j$ with probability $\epsnoise / 2$, for each $j \neq i$.}
\label{fig:noisyobs}
\end{figure}

We describe here the details of the \noisyobs environment. First, in \cref{fig:noisyobs} we describe the state transition and reward structure of this POMDP. Second, the initial state is drawn according to the following process: for each logged trajectory we first sample a prior state $S_0$ equal to $s_1$, $s_2$, or $s_3$ with probabilities $0.5$, $0.3$, and $0.2$ respectively, a prior observation $O_0 \sim P_O(\cdot \mid S_0)$, and a prior action $A_0 \sim \pinoisyobs(\cdot \mid S_0)$. Then, the initial state $S_1$ is given by transitioning from $S_0$ with $A_0$.

\subsection{Policy Details}

\begin{table}
\begin{center}
    \begin{tabular}{c|cc}
         & $a_1$ & $a_2$  \\
        \hline
        $s_1$ & 0.8 & 0.2 \\
        $s_2$ & 0.8 & 0.2 \\
        $s_3$ & 0.2 & 0.8 \\
    \end{tabular}
    \hspace{1cm}
    \begin{tabular}{c|cc}
         & $a_1$ & $a_2$  \\
        \hline
        $o_1$ & 1 & 0 \\
        $o_2$ & 1 & 0 \\
        $o_3$ & 0 & 1 \\
    \end{tabular}
    \hspace{1cm}
    \begin{tabular}{c|cc}
         & $a_1$ & $a_2$  \\
        \hline
        $o_1$ & 0 & 1 \\
        $o_2$ & 0 & 1 \\
        $o_3$ & 1 & 0 \\
    \end{tabular}
    \hspace{1cm}
    \begin{tabular}{c|cc}
         & $a_1$ & $a_2$  \\
        \hline
        $o_1$ & 1 & 0 \\
        $o_2$ & 0 & 1 \\
        $o_3$ & 1 & 0 \\
    \end{tabular}
\end{center}
\caption{Details of the policies under consideration for our Proximal RL experiments in the \noisyobs POMDP scenario.
The first table summarizes the probability distribution of the logging policy $\pinoisyobs$, where each row gives the probability distribution over actions for the corresponding state. The next three tables similarly summarize the evaluation policies $\pieasy$, $\pihard$, and $\pioptim$ respectively, which are all deterministic policies that depend on the current observation only. Note that none of these policies depend on the time index.}
\label{tab:experiment-policies}
\end{table}

In \cref{tab:experiment-policies} we fully describe both the behavior policy $\pinoisyobs$ as well as the evaluation policies $\pieasy$, $\pihard$, and $\pioptim$ used in our experiments.

\subsection{Method Details}

Here we provide more detail about each of the methods used in our experiments.

\subsubsection{Ours}

Our method is an implementation of the estimator described in \cref{sec:estimation}, using 5-fold cross-fitting, and with nuisance estimation following \cref{alg:sequential-vmm}. As described in \cref{sec:experiments}, we use the PCI reduction given by setting $Z_t = O_{t-1}$ and $W_t = O_t$, and we did not include an explicit $X_t$.

For every $t \in [H]$ we set the inputs to the algorithm as follows: $\mathcal H^{(t)}$ and $\mathcal Q^{(t)}$ were the set of all tabular functions; all regularization functions were set as $\mathcal R(f) = \lambda \|f\|_{2,n}$, for some fixed hyperparameter $\lambda$; all values of $\alpha^{(q,t)}$ and $\alpha^{(h,t)}$ were set a to a common hyperparameter $\alpha$; and the kernels $K^{(q,t)}$ and $K^{(h,t)}$ were set as in \citet{bennett2023variational}, using the same process of combining three Gaussian kernels with automatically calibrated bandwidths based on the variance of the data. Furthermore, the inputs to the kernel functions were given by concatenating one-hot embeddings of $Z_t$ and $A_t$ or $W_t$ and $A_t$.

\begin{table}[t]
\centering
\begin{tabular}{c|ccc}
    & $\pieasy$ & $\pihard$ & $\pioptim$ \\
    \hline
    $\epsnoise=0$ & $(10^{-4}, 10^{-4})$ & $(10^{-2}, 10^{-2})$ & $(10^{-4}, 10^{-2})$ \\
    $\epsnoise=0.2$ & $(10^{-4}, 10^{-4})$ & $(10^{-2}, 10^{-4})$ & $(10^{-4}, 10^{-4})$ \\
\end{tabular}
\caption{Summary of hand-chosen hyperparameter combinations for each setting. Each tuple gives the chosen value for $\alpha$ and $\lambda$ respectively.}
\label{tab:hyperparameters}
\end{table}

For each setting (given by combination of $\pi_e$ and $\epsnoise$), we experimented with hyperparameter values given by $\alpha \in \{10^{-2}, 10^{-4}, 10^{-6}, 10^{-8}\}$, and $\lambda \in \{1, 10^{-2}, 10^{-4}, 10^{-6}\}$. In all cases, we set all values of $\alpha^{(q,t)}$ and $\alpha^{(h,t)}$ to the same $\alpha$, and similarly we set all values of $\lambda^{(q,t)}$ and $\lambda^{(h,t)}$ to the same $\lambda$. In each setting, we performed grid search by experimenting with all of the above combinations of $\alpha$ and $\lambda$, and hand-selected and presented results for a combination of $(\alpha,\lambda)$ for which the algorithm performed well. Our hand-selected $(\alpha,\lambda)$ for each setting is summarized in \cref{tab:hyperparameters}.

We emphasize that this process is \emph{not} meant to simulate what would be done in real applications, where the true target policy value is unknown. Rather, our intention is demonstrate the proof of concept of our theory, and show that an algorithm such as \cref{alg:sequential-vmm} can produce accurate policy value estimates when the hyperparameters are well-calibrated. We leave the problem of automatically selecting such hyperparameters in a data-driven way to future work.

\subsubsection{MeanR}

This baseline is extremely simple, given by $\frac{1}{n} \sum_{i=1}^n \sum_{t=1}^H \gamma^t R_t^{(i)}$.

\subsubsection{MDP}

For this baseline we first fit an MDP model to the observed observation, action, next observation counts in $\mathcal D$, treating observations as states. Then, we compute the value of $\pi_e$ on this count-based tabular MDP model, using dynamic programming.

\subsubsection{TIS}

This is given by estimating the time-independent sampling identification quantity defined by \cref{thm:identification-ind,lem:ind-tabular}, by estimating the required probability matrices directly from the observed counts, and replacing the expectation over $\pind$ with its empirical analogue, based on summing over all $n^H$ combinations of separately sampling an observed trajectory at each time step, and then normalizing by multiplying by $n^{-H}$.

\subsection{Results for the MDP Setting}

\Cref{fig:mdp-results} presents the results of our experiment in an MDP setting without any confounding. Namely, where we set $\epsnoise=0$. We can observe that this makes \textsc{MDP} baseline consistent and highly accurate. Our estimate remains consistent, has lower variance than in the POMDP setting shown in \cref{fig:pomdp-results} but still more than the \textsc{MDP} baseline. The \textsc{MeanR} and \textsc{TIS} baselines still perform badly even in the MDP setting, as expected.

\begin{figure}
    \centering
    \begin{tabular}{cc}
         \includegraphics[width=0.47\textwidth]{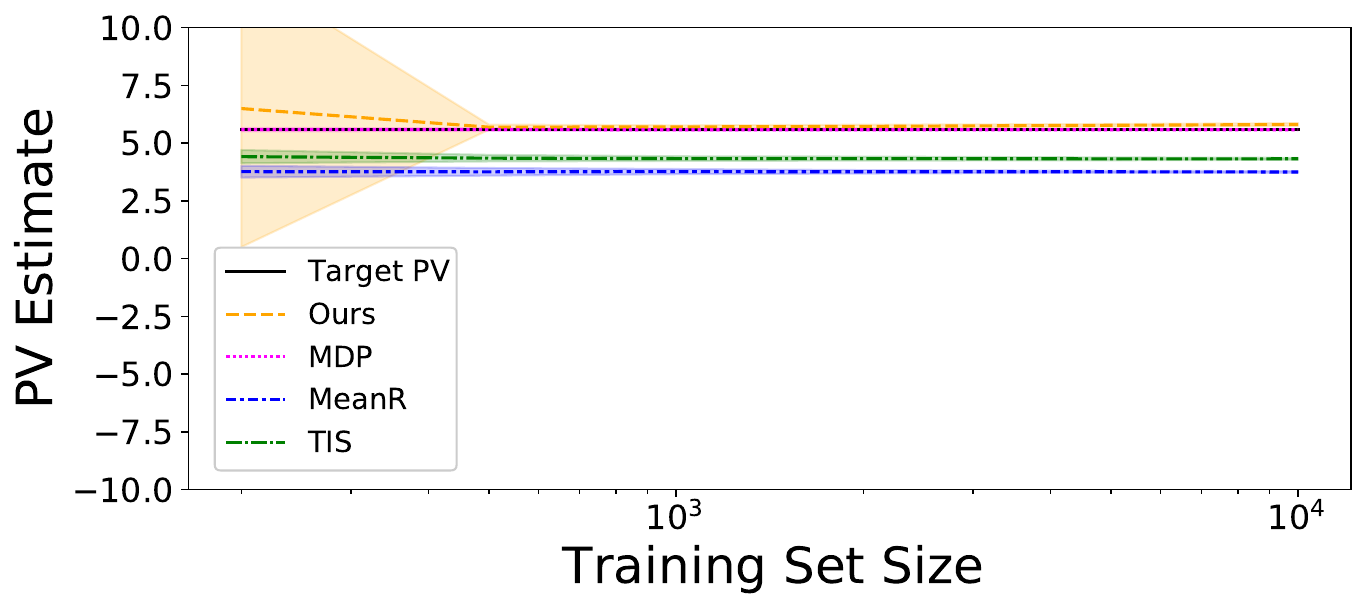} & 
         \includegraphics[width=0.47\textwidth]{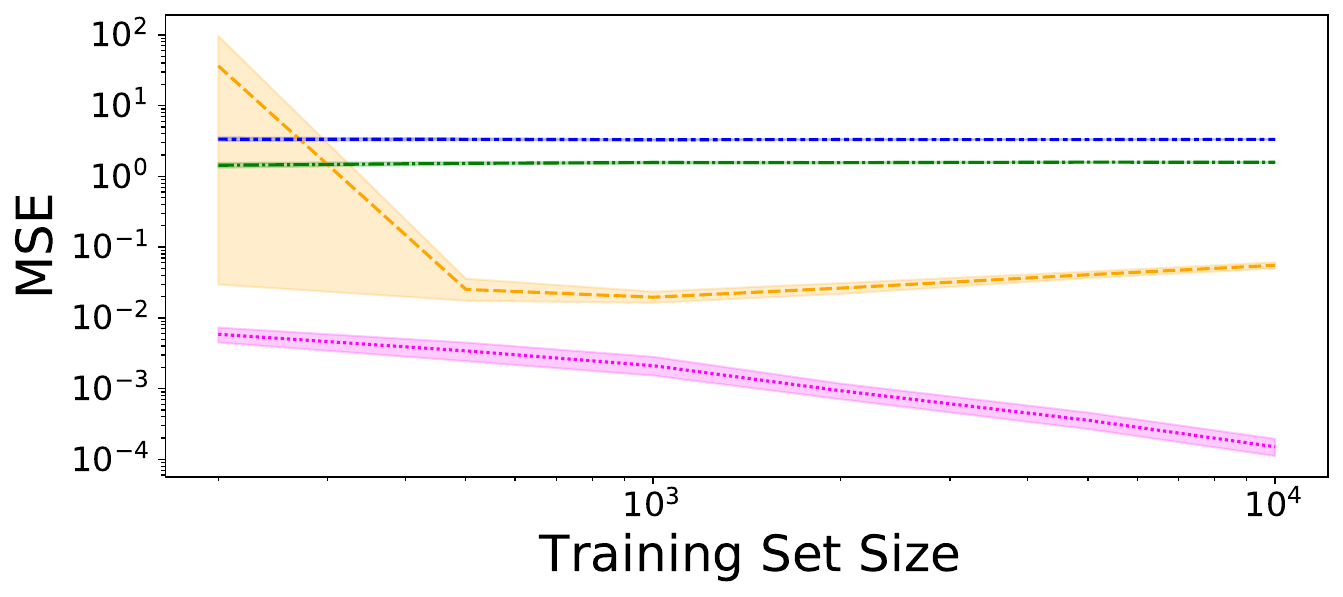} \\
         \includegraphics[width=0.47\textwidth]{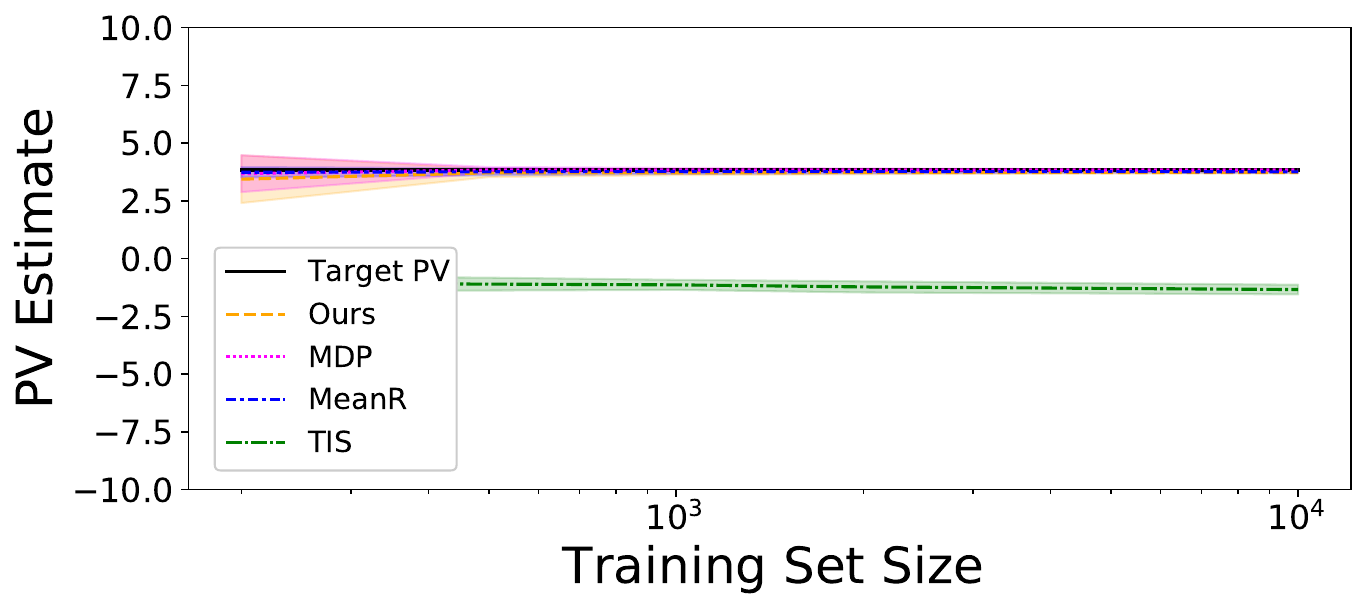} & 
         \includegraphics[width=0.47\textwidth]{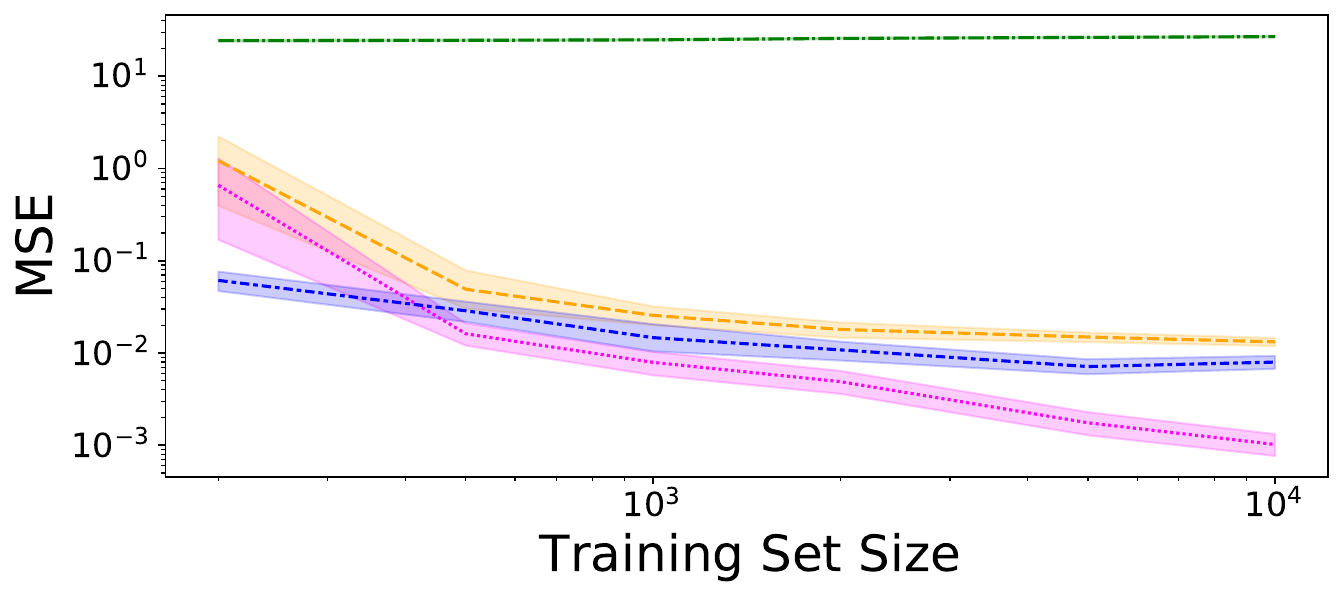} \\
         \includegraphics[width=0.47\textwidth]{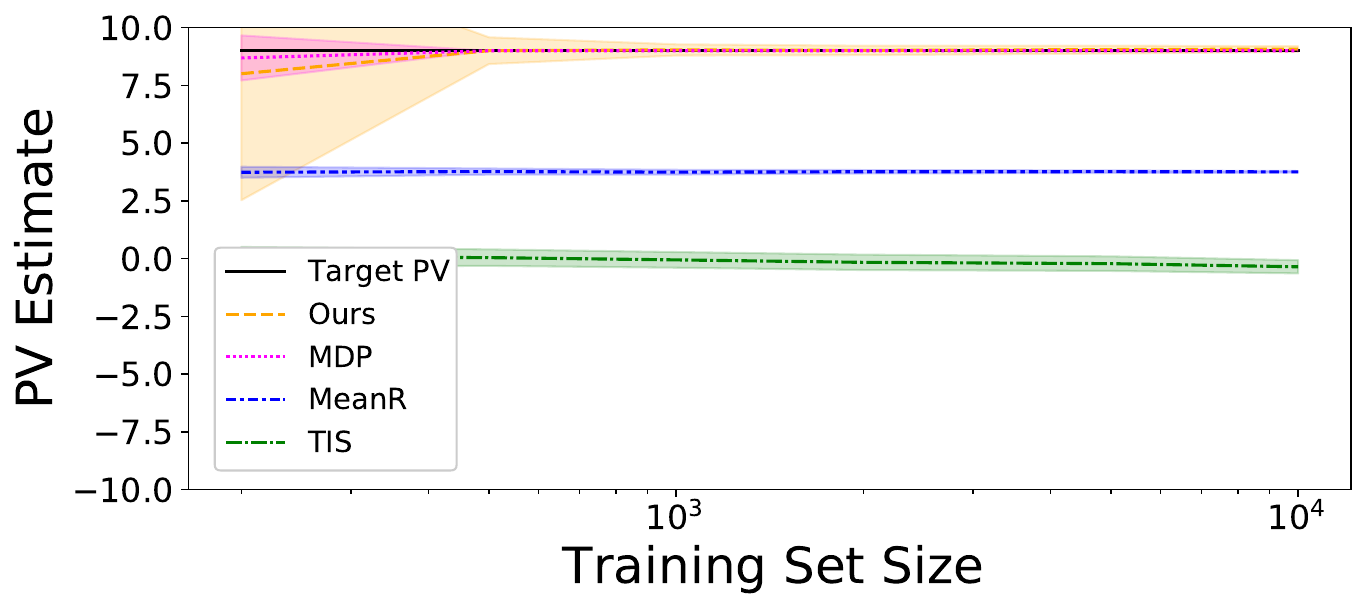}  &
         \includegraphics[width=0.47\textwidth]{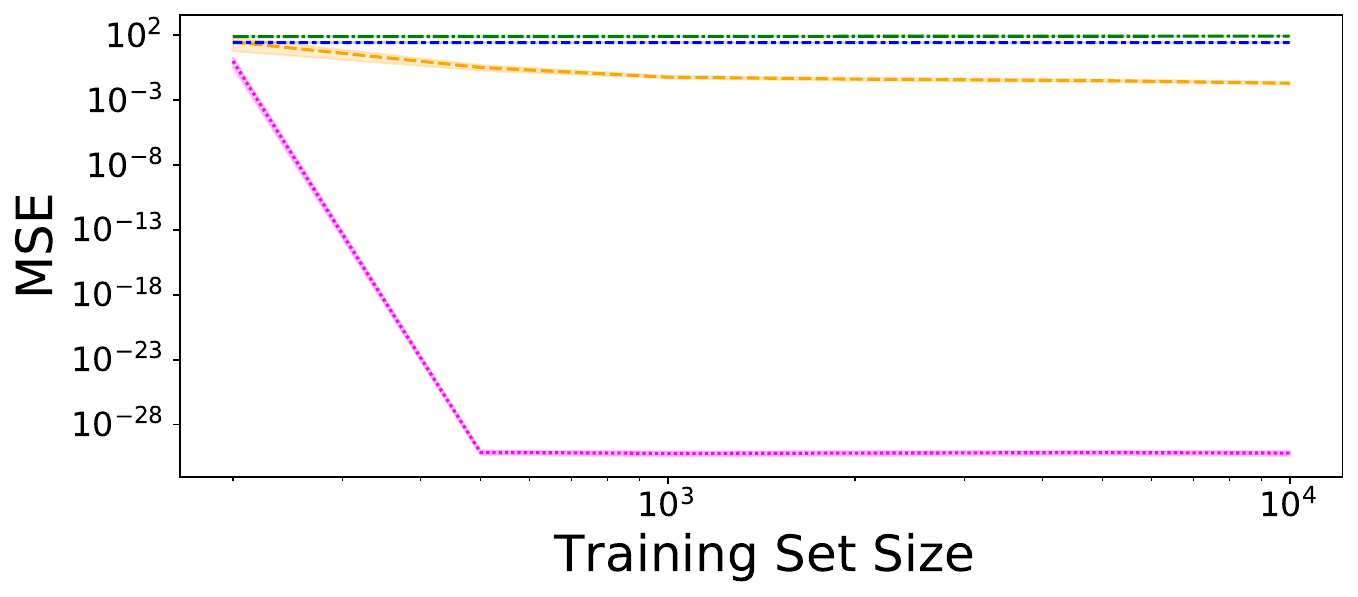}  \\
    \end{tabular}
    \caption{Results of our Proximal RL experiments on the \noisyobs environment with $\epsnoise=0$. Plots are displayed in the same order as in \cref{fig:pomdp-results}.}
    \label{fig:mdp-results}
\end{figure}

\section{Additional Details for Experiment 2}

\subsection{Environment and Policy Details}

We used the exact sepsis simulator environment as in \citet{oberst2019counterfactual}, except with the following modifications:
\begin{enumerate}
    \item We replaced their initial state distribution with a uniform distribution over all non-terminal states. 
    \item As described in the main text, each patient initially (at $t=1$) has a 20\% chance of having their diabetes status censored, in which case the observed value of diabetes is always False for that patient (regardless of whether they actually have diabetes or not)
    \item We used a slightly different reward function, in order to make rewards less sparse. As in the original simulator, this is based on a count of the number of core indicators (heart rate, blood pressure, oxygen concentration, and glucose level) taking values within safe bounds. Specifically, rewards are defined as follows: (1) if all four indicators are safe and the patient is not on any of the three treatments, they receive a reward of $1$; (2) if all four indicators are safe but they are on some treatment they receive a reward of $0$; (3) if exactly one indicator is unsafe they receive a reward of $-1$; (4) if exactly two indicators are unsafe they receive a reward of $-2$; and (5) if three or more indicators are unsafe they receive a reward of $-10$.
    In addition, as in the original simulator, in the case of (1) or (5) they enter a cured or dead terminal (absorbing) state respectively. However, in our version of the simulator, in these cases they continue to receive the reward of $1$ or $-10$ respectively for all future time steps. 
\end{enumerate}

For full exact details of the environment, as well as the exact details of how we generated $\pi_b$ and $\pi_e$ (which was sketched out in the main text), we refer readers to our code release.

\subsection{Method Details}

\subsubsection{Benchmarks}
The \textsc{MeanR} and \textsc{MDP} benchmarks were implemented identically as in the prior experiment. See the prior appendix section for details.

\subsubsection{Our Method}

Our method is again an implementation of the estimator described in \cref{sec:estimation}, this time using 2-fold cross-fitting, and again with nuisance estimation following \cref{alg:sequential-vmm}. As described in the main text in \cref{sec:experiments-sepsis}, if we partition $O_t = (G_t,X_t)$, where $G_t$ is the (possibly censored) indicator of whether the patient is diabetic, and $X_t$ is all other aspects of $O_t$, then we use the PCI reduction given by setting $Z_t=(G_{t-1},X_t)$, and $W_t=(G_t,X_t)$. Note that since the (censored) diabetes observation doesn't change over time, this means that $G_{t-1}=G_t$ so both proxies are identical, but this is consistent with our assumptions since $G_t$ is deterministically determined by the full state $S_t$, so $G_t$ and $G_{t-1}$ are conditionally independent given $S_t$.

For every $t \in [H]$ we set the inputs to the algorithm as follows: $\mathcal H^{(t)}$ and $\mathcal Q^{(t)}$ were some particular neural net classes described below; the regularization functions were set as $\Rcal^{(q,t)}(q) = \lambda_q \|q\|_{2,n}$ and $\Rcal^{(h,t)}(h) = \lambda_h \|h\|_{2,n}$ for all $t$, for some fixed hyperparameters $\lambda_q$ and $\lambda_h$ that don't depend on $t$; the values of $\alpha^{(q,t)}$ and $\alpha^{(h,t)}$ were set to common hyperparameters $\alpha_q$ and $\alpha_h$ respectively that don't depend on $t$; and the kernels $K^{(q,t)}$ and $K^{(h,t)}$ were set as in \citet{bennett2023variational}, using the same process of combining three Gaussian kernels with automatically calibrated bandwidths based on the variance of the data. Furthermore, the inputs to the kernel functions were given by concatenating embeddings of $(G_t,X_t,A_t)$ based on the following featurizations that were concatenated together:
\begin{enumerate}
    \item The $G_t$ was encoded as a single $0$ or $1$ valued variable
    \item We extracted 13 features from $X_t$ as follows:
    \begin{enumerate}
        \item $0$ or $1$-valued variables indicating whether heart rate was abnormally high, heart rate was abnormally low, blood pressure was abormally high, blood pressure was abormally low, oxygen level was abornally low, and glucose level was abnormal (6 features)
        \item $0$ or $1$-valued variables indicating whether there were exactly one, or exactly two, main indicators with unsafe values (2 features)
        \item The glucose level, normalized in range of $0$ to $1$ (1 feature)
        \item $0$ or $1$-valued binary variables indicating whether the patient is currently on each of the three treatments (3 features)
        \item $0$ or $1$-valued variable indicating whether the patient is in an absorbing state (1 feature)
    \end{enumerate}
    \item $A_t$ was encoded as a vector of three $0$ or $1$-valued binary variables indicating whether each treatment type was used or not
\end{enumerate}

We used the following neural net class for each of the $\Hcal^{(t)}$ and $\Qcal^{(t)}$ classes: (1) we first separately passed the $A_t$ and $(X_t,G_t)$ embedings described above through linear layers with outputs of sizes 10 and 50 respectively; (2) we then passed these two vectors through a bi-linear layer with 50 outputs; (3) we passed the output of the bi-linear layer through a linear layer with 50 outputs again; and (4) we passed the previous outputs through a final linear layer with 1 output. In addition, we used GeLU \citep{hendrycks2016gaussian} nonlinearities in between all linear and bi-linear layers. 

For more thorough details of our method implementation, we refer readers to our code release.

\subsection{Hyperparameter Details}

We experimented with separately setting the hyperparameters $\lambda_h$, $\lambda_q$, $\alpha_h$, and $\alpha_q$ for our method, each taking values in the set $\{10^{-2}, 10^{-3}, 10^{-4}\}$. This gave a total of 81 different total hyperparameter configurations that we experimented with. Out of those different configurations, the single best configuration (in terms of overall MSE), which we presented results for, was given by $\lambda_h=10^{-4}$, $\lambda_q=10^{-2}$, $\alpha_h=10^{-2}$, and $\alpha_q=10^{-4}$. On the other hand, our automatic hyperparameter heuristic separately selected from these 81 different configurations, by taking a median of the predictions after discarding out-of-bound values, on each of the 50 replications.

\end{document}